\documentclass[letterpaper,onecolumn]{IEEEtran}

\usepackage{tikz}
\usetikzlibrary{positioning}
\usetikzlibrary{arrows.meta}
\usepackage{amsmath,amsfonts}
\usepackage{array}
\usepackage[caption=false,labelfont=sf,textfont=sf]{subfig}
\usepackage{textcomp}
\usepackage{stfloats}
\usepackage{url}
\usepackage{verbatim}
\usepackage{graphicx}
\usepackage{cite}

\hyphenation{op-tical net-works semi-conduc-tor IEEE-Xplore}

\usepackage[linesnumbered,ruled,vlined]{algorithm2e}

\usepackage[nolist,nohyperlinks]{acronym}
  \begin{acronym}
     \acro{SR}{Sharpe ratio}
     \acro{TS}{Thompson sampling}
     \acro{MAB}{Multi-armed bandit}
     \acro{UCB}{upper confidence bound}
     \acro{LCB}{lower confidence bound}
     \acro{KL}{Kullback-Leibler}
     \acro{MV}{mean-variance}
\end{acronym}

\usepackage{xcolor}
\usepackage{float}
\usepackage{svg}
\usepackage{pdfpages}
\usepackage{comment}
\usepackage{graphicx,wrapfig,lipsum}
\usepackage{rotating}
\usepackage{amsmath, amssymb, amsthm}
\usepackage{stmaryrd}
\usepackage{tikz}
\usepackage{verbatim}
\usepackage{url}
\usepackage[utf8]{inputenc}
\usepackage[english]{babel}

\newtheorem{theorem}{Theorem}[]

\newtheorem{remark}{Remark}[]

\newtheorem{lemma}[]{Lemma}

\newtheorem{assumption}{Assumption}[]
\usepackage{multicol}
\usepackage{scalerel}
\usepackage{multicol}
\usepackage{setspace}
\newtheorem{definition}{Definition}
\usepackage{booktabs}
\usepackage{bbm}
\usepackage[normalem]{ulem}
\usepackage{color}
\usepackage{dsfont}
\usepackage{bm}
\usepackage{setspace}
\usepackage{mathtools}
\usepackage{csquotes}
\usepackage{dirtytalk}
\allowdisplaybreaks

\captionsetup{font=small}
\usepackage[margin=0.85cm]{caption}
\setlength{\belowcaptionskip}{-10pt}
\captionsetup[figure]{name={Fig.},labelsep=period}

\makeatletter
\newcommand{\svast}{\bBigg@{3}}
\newcommand{\vast}{\bBigg@{4}}
\newcommand{\Vast}{\bBigg@{5}}
\makeatother

\usepackage{accents}
\newcommand\tbar[1]{\accentset{\rule{.4em}{.5pt}}{#1}}

\usepackage{pifont}

\DeclareMathOperator*{\argmax}{argmax}

\DeclarePairedDelimiterX{\infdivx}[2]{(}{)}{%
  #1\;\delimsize\|\;#2%
}

\title{Order Optimal Regret Bounds for Sharpe Ratio Optimization under Thompson Sampling}

\author{
\IEEEauthorblockN{Mohammad~Taha~Shah}, {\it Graduate Student Member, IEEE},
\IEEEauthorblockN{Sabrina~Khurshid}, {\it Graduate Student Member, IEEE},
\IEEEauthorblockN{Gourab~Ghatak}, {\it Member, IEEE},
\thanks{M.T. Shah is with the CEMSE Division, King Abdullah University of Science and Technology, Thuwal, Saudi Arabia, 23955, (e-mail: mohammadtaha.shah@kaust.edu.sa); Sabrina Khurshid and G. Ghatak are with the Department of Electrical Engineering, IIT Delhi, New Delhi, India, 110016, (e-mail: \{eez218683; gghatak\}@ee.iitd.ac.in)}
}

\begin{document}
\maketitle

\begin{abstract}
In this paper, we study sequential decision-making for maximizing the Sharpe ratio (SR) in a stochastic multi-armed bandit (MAB) setting. Unlike standard bandit formulations that maximize cumulative reward, SR optimization requires balancing expected return and reward variability. As a result, the learning objective depends jointly on the mean and variance of the reward distribution and takes a fractional form. To address this problem, we propose the Sharpe Ratio Thompson Sampling \texttt{SRTS}, a Bayesian algorithm for risk-adjusted exploration. For Gaussian reward models, the algorithm employs a Normal-Gamma conjugate posterior to capture uncertainty in both the mean and the precision of each arm. In contrast to additive mean-variance (MV) formulations, which often require different algorithms across risk regimes, the fractional SR objective yields a single sampling rule that applies uniformly across risk tolerances. On the theoretical side, we develop a regret decomposition tailored to the SR objective and introduce a decoupling approach that separates the contributions of mean and variance uncertainty. This framework allows us to control the interaction between the Gaussian mean samples and the Gamma precision samples arising in the posterior. Using these results, we establish a finite-time distribution-dependent $\mathcal{O}(\log n)$ upper bound on the expected regret. We further derive a matching information-theoretic lower bound using a change-of-measure argument, showing that the proposed algorithm is order-optimal. Finally, experiments on synthetic bandit environments illustrate the performance of \texttt{SRTS} and demonstrate improvements over existing risk-aware bandit algorithms across a range of risk-return settings.
\end{abstract}

\begin{IEEEkeywords}
Multi-armed bandit, Thompson sampling, Sharpe ratio, Risk-averse bandits, Concentration inequalities, Online learning.
\end{IEEEkeywords}

\section{Introduction}
The \ac{MAB} problem~\cite{slivkins2019introduction} provides a fundamental framework for sequential decision-making under uncertainty. In the classical stochastic \ac{MAB} formulation, an agent repeatedly selects from a set of $K$ arms, each associated with an unknown reward distribution, and observes a stochastic reward. The objective is to maximize the cumulative reward by balancing exploration and exploitation. A large body of work has characterized this problem, establishing logarithmic upper and lower bounds on regret, asymptotically efficient allocation rules, and finite-time algorithms such as \ac{UCB} and \ac{TS} that achieve near-optimal logarithmic regret for expected reward maximization~\cite{auer2002finite, lai1985asymptotically, thompson1933likelihood}.

Most existing formulations assume risk neutrality, in which arms are evaluated solely by their expected rewards. However, in many practical applications, including quantitative finance, autonomous robotics, and clinical trials, decision makers must also account for the variability of outcomes~\cite{thompson1933likelihood, schwartz2017customer}. For example, in portfolio allocation or algorithmic trading, strategies are evaluated not only by their expected return but also by the associated risk. Early work on risk-aware bandits incorporated risk through additive \ac{MV} objectives, typically written as $\rho \mu - \sigma^2$, where $\rho$ is a risk-tolerance parameter~\cite{sani2012risk, vakili2015risk, zhu2020thompson}. Although such objectives are analytically convenient, they exhibit structural limitations in extreme risk regimes. In particular, as $\rho \to \infty$ or $\rho \to 0$, one component of the objective dominates the other, effectively eliminating the contribution of either the mean or the variance. To address this issue, several works propose switching between separate algorithms, for instance, alternating between mean-based and variance-based \ac{TS} depending on the value of $\rho$~\cite{zhu2020thompson}.

An alternative approach is to use the \ac{SR}, defined as the ratio between expected return and a measure of variability in the reward distribution~\cite{varlashova2024optimal}. Because the \ac{SR} takes a fractional form, it provides a scale-invariant measure of risk-adjusted performance that simultaneously accounts for reward and variability. This formulation naturally couples the mean and variance, avoiding the need for algorithmic switching across risk regimes. Recent work has therefore considered the \ac{SR} and related risk-adjusted metrics directly as bandit objectives~\cite{cassel2018general, khurshid2025optimizing, 11408087}, revealing new statistical and algorithmic challenges compared to standard reward-maximization problems. Despite its conceptual appeal, optimizing the \ac{SR} introduces significant technical difficulties. From a statistical perspective, the algorithm must estimate both the mean and the variance of each arm. From a mathematical standpoint, the presence of the variance in the denominator produces a nonlinear objective that is not sub-Gaussian. Since empirical variance estimates follow Gamma or chi-squared distributions and exhibit heavier tails than Gaussian statistics, classical concentration arguments used in standard bandit analysis are no longer directly applicable. As a result, most existing theoretical results for \ac{SR} optimization rely on frequentist \ac{UCB}-based methods~\cite{cassel2018general, khurshid2025optimizing, 11408087}, which typically employ conservative union bounds that increase the exploration cost.

In this work, we study \ac{SR} maximization from a Bayesian perspective and propose a \ac{TS} algorithm tailored to this objective, called Sharpe Ratio Thompson Sampling -- \texttt{SRTS}. The algorithm maintains posterior distributions over the mean and precision of each arm and selects actions based on sampled \acp{SR}. Using this framework, we develop a regret analysis for the fractional \ac{SR} objective. In particular, we introduce a regret decomposition that relates cumulative \ac{SR} regret to problem-dependent quantities such as the sub-optimal gap and the divergence between reward distributions. For Gaussian rewards, this analysis yields explicit distribution-dependent regret bounds that characterize how both mean and variance affect learning efficiency. We complement the theoretical analysis with experiments on synthetic bandit environments. The results illustrate the behavior of \texttt{SRTS} across different risk regimes and show that it performs favorably compared with existing risk-aware bandit algorithms.

\subsection{Related Literature}
The \ac{MAB} framework has been widely studied for sequential decision-making under uncertainty, with classical formulations focusing on maximizing expected cumulative rewards~\cite{auer2002finite,bubeck2012regret}. A substantial body of work has analyzed exploration strategies using information-theoretic tools such as mutual information, entropy, and \ac{KL} divergence to obtain distribution-dependent regret bounds~\cite{russo2016information,lattimore2020bandit}. However, these formulations assume risk neutrality and evaluate arms solely by their expected rewards, ignoring higher-order distributional characteristics such as reward variability. In contrast, optimizing the \ac{SR} requires estimating both the mean and the variance of the reward distribution, which introduces additional statistical and algorithmic challenges.

Although the \ac{SR} originates from financial portfolio theory, it has also been applied in other machine learning settings. For example, in fairness-aware learning, the Sharpe predictor proposed in~\cite{liu2021sharpe} evaluates classifiers by balancing predictive accuracy and fairness. Similarly, in reinforcement learning from human feedback (RLHF), a Sharpe-inspired criterion has been used to guide prompt selection by balancing expected model improvement and uncertainty~\cite{belakaria2025sharpe}. These examples illustrate how the \ac{SR} can serve more generally as a metric for balancing performance and variability.

\subsubsection{Risk-aware bandits}
Several works have studied risk-sensitive bandit objectives based on \ac{MV} trade-offs~\cite{sani2012risk,vakili2015risk}. In \cite{sani2012risk}, Sani \textit{et al.} proposed the \texttt{MV-UCB} algorithm, which uses variance as the measure of risk and achieves regret of order $\mathcal{O}(\sqrt{n})$. This result was later improved by \cite{vakili2015risk}, which established logarithmic regret bounds. The \ac{MV} criterion has also been examined in the full-information setting by \cite{even2006risk}, where it was shown that achieving sublinear regret under this risk measure is impossible. Alternative risk formulations have also been considered. For instance, \cite{maillard2013robust} introduced the \texttt{RA-UCB} algorithm based on the logarithm of the moment-generating function of the reward distribution and derived high-probability regret guarantees.

In the pure exploration setting, percentile-based risk measures such as Value-at-Risk (VaR) and Conditional Value-at-Risk (CVaR) have also been studied. VaR and CVaR were originally introduced in \cite{artzner1999coherent} and \cite{rockafellar2000optimization}, respectively. In the bandit context, \cite{vakili2015mean} investigated risk-averse \ac{MAB} problems using the VaR of the cumulative reward as the risk criterion and proposed algorithms with poly-logarithmic regret bounds. The work of \cite{kolla2019concentration} established strong concentration guarantees for VaR without requiring assumptions on the tail behavior of the reward distribution. CVaR-based bandits have also been studied under bounded reward assumptions \cite{galichet2013exploration,tamkin2019distributionally}. In \cite{galichet2013exploration}, the guarantees are derived within a risk-neutral bandit framework under the assumption that the arm with the highest mean reward is also optimal under the CVaR objective. In a related direction, \cite{david2016pure} studied the problem of identifying the arm with the largest $\alpha$-quantile of the reward distribution. \ac{TS} approaches for CVaR bandits were later developed in \cite{baudry2021optimal,chang2020risk}. In particular, \cite{baudry2021optimal} established asymptotic optimality for CVaR bandits with bounded rewards, while \cite{chang2020risk} compared CVaR-based \ac{TS} with \ac{LCB}/\ac{UCB} methods and reported improved empirical performance. A comprehensive overview of risk-aware bandit formulations is provided in \cite{tan2022survey}.

\subsubsection{Thompson sampling and Sharpe ratio}
Among Bayesian approaches, \ac{TS} \cite{thompson1933likelihood} has received considerable attention due to its strong empirical performance and theoretical guarantees. Regret analyses for \ac{TS} are often expressed in terms of information gain about the reward distributions \cite{kaufmann2012bayesian,honda2014optimality}. The algorithm has been extensively studied in both Bayesian and frequentist settings \cite{korda2013thompson,agrawal2012analysis,agrawal2017near,russo2014learning,dong2018information}. In the context of risk-aware bandits, \cite{liu2020risk} proposed a \ac{TS} algorithm for \ac{MV} objectives. In contrast, theoretical results for \ac{SR} optimization in bandits remain limited. Existing approaches such as \texttt{UCB-RSSR} \cite{khurshid2025optimizing} and \texttt{UCB-SR} \cite{11408087} provide logarithmic regret upper bounds using \ac{UCB}-based methods. Some recent works have applied \ac{TS} heuristically to \ac{SR}-based portfolio strategies \cite{zhu2019adaptive,morariufinancial,shen2016portfolio}, but these studies do not provide formal regret guarantees.

A key challenge in this setting is that minimizing regret with respect to the \ac{SR} requires learning both the mean and the variance of the reward distributions. Consequently, the analysis must account for uncertainty in multiple parameters. In the related \ac{MV} setting, this theoretical picture is largely complete: algorithms such as \texttt{MV-UCB} \cite{sani2012risk,vakili2015risk} and \texttt{MVTS} \cite{zhu2020thompson} achieve $\mathcal{O}(\log n)$ regret that matches the known $\Omega(\log n)$ lower bounds. For the \ac{SR} objective, however, the corresponding information-theoretic lower bounds have not been established. In this work, we address this gap by developing both upper and lower regret bounds for \ac{SR} maximization using a Bayesian \ac{TS} framework. In particular, we show that the proposed \texttt{SRTS} algorithm achieves $\mathcal{O}(\log n)$ regret and matches the corresponding lower bound up to constant factors.

\subsection{Contributions}
Our main theoretical and algorithmic contributions are summarized below.
\begin{itemize}
    \item \textbf{Regret decomposition for the \ac{SR} objective: } We develop a regret decomposition tailored to the fractional \ac{SR} objective. In classical bandits, regret can be expressed as a weighted sum of the expected number of pulls of suboptimal arms, where the weights correspond to the mean gaps. For the \ac{SR} objective, the fractional structure couples estimation errors in the mean and variance, preventing such a direct decomposition. We introduce a decoupling framework that separates these contributions and enables the expected regret to be expressed as a weighted sum of expected suboptimal pulls, with weights that explicitly capture the joint effect of mean and variance estimation errors.
    \item \textbf{Finite-time regret bounds for \texttt{SRTS}: } We analyze the proposed \texttt{SRTS} algorithm and establish distribution-dependent finite-time regret bounds. In particular, we show that \texttt{SRTS} achieves $\mathcal{O}(\log n)$ expected regret under the \ac{SR} objective, despite the non-sub-Gaussian nature of the fractional metric.
    \item \textbf{Information-theoretic lower bounds: } We derive a model-specific lower bound for \ac{SR} optimization using a change-of-measure argument. The result shows that any consistent policy must incur logarithmic regret, matching the order of the upper bound achieved by \texttt{SRTS} up to constant factors.
    \item \textbf{Empirical evaluation: } We conduct experiments on synthetic bandit environments to evaluate the proposed algorithm across different risk regimes. The results illustrate the behavior of \texttt{SRTS} under varying levels of risk tolerance and show improved performance compared with existing risk-aware bandit methods.
\end{itemize}

\section{Problem Formulation}
\subsection{The Sequential Decision-Making Environment}
We consider a stochastic \ac{MAB} problem with a finite set of arms indexed by $i \in [K] := \{1, \dots, K\}$, where $K \ge 2$. The learning process unfolds over a discrete time horizon of length $n$. At each time step $t \in \{1, 2, \dots, n\}$, the agent selects an action (arm) $\pi(t) \in [K]$ governed by a policy $\pi$, and observes a scalar reward $X_{\pi(t), t}$. For each arm $i \in [K]$, the reward sequence $\{X_{i,t}\}_{t=1}^n$ consists of independent and identically distributed (i.i.d.) random variables drawn from an unknown distribution $\nu_i$. Let $\mathcal{F} = (\nu_1, \dots, \nu_K)$ denote the environment characterized by these distributions. We denote the true mean and variance of arm $i$ as $\mu_i$ and $\sigma_i^2$, respectively. The rewards are assumed to be independent across arms and time steps.

A policy $\pi$ is a sequence of allocation rules where the decision at time $t$ is $\mathcal{H}_{t-1}$-measurable, with $\mathcal{H}_{t-1} = \{\pi(1), X_{\pi(1), 1}, \dots,$\\$ \pi(t-1), X_{\pi(t-1), t-1}\}$ representing the history of actions and observations up to time $t-1$. Let $s_{i,n} = \sum_{t=1}^n \mathbb{I}(\pi(t) = i)$ denote the random number of times arm $i$ is pulled up to horizon $n$.

\subsection{The Risk-Aware Objective: The Sharpe Ratio}
Unlike classical risk-neutral bandits that maximize the cumulative expected reward, our objective is to maximize the risk-adjusted return, formalized via a generalized \ac{SR}.

\begin{definition}
The {\rm \ac{SR}} of an arm $i$ with mean $\mu_i$, variance $\sigma^2_i$, and risk tolerance $\rho$ is 
\begin{align*}
    \xi_i = \frac{ \mu_i}{L_0+ \rho \sigma^2_i},
\end{align*} 
where $L_0$ is the regularization term and is the same for all the arms.
\end{definition}

The regularization term $L_0$ is introduced to stabilize the empirical estimation of the \ac{SR}, particularly during the early exploration phase when empirical variance estimates may be arbitrarily close to zero. We require $L_0 < \min_{i \in [K]} \sigma_i^2$. Without loss of generality, we assume there exists a unique optimal arm, indexed by $1$, such that $\xi_1 = \max_{i \in [K]} \xi_i$. The sub-optimality gap for the \ac{SR} is denoted by $\Delta_i = \xi_1 - \xi_i$, and the mean gap is denoted by $\Lambda_{i,j} = \mu_i - \mu_j$.

\subsection{Algorithmic Reward and Regret}
To evaluate the performance of a policy $\pi$, we define the \textbf{algorithmic empirical \ac{SR}} over the sequence of $n$ plays. The algorithmic mean under policy $\pi$ is the standard sample average:
\begin{align*}
    \tbar{\mu}_n(\pi) = \frac{1}{n} \sum_{t=1}^n X_{\pi(t),t} = \frac{1}{n} \sum_{i=1}^K s_{i,n} \tbar{\mu}_{i,s_{i,n}},
\end{align*}
where $\tbar{\mu}_{i,s_{i,n}}$ is the empirical mean of arm $i$ after $s_{i,n}$ pulls. However, the algorithmic variance under policy $\pi$ inherently couples the empirical variances of individual arms with the variance induced by the algorithm switching between arms with different means:
\begin{align*}
    \tbar{\sigma}_n^2(\pi) = \frac{1}{n} \sum_{t=1}^n \left(X_{\pi(t),t} - \tbar{\mu}_n(\pi)\right)^2 = \frac{1}{n} \sum_{i=1}^K s_{i,n} \tbar{\sigma}_{i,s_{i,n}}^2 + \frac{1}{n} \sum_{i=1}^K s_{i,n} \left(\tbar{\mu}_{i,s_{i,n}} - \tbar{\mu}_n(\pi)\right)^2.
\end{align*}
The algorithmic \ac{SR} achieved by the policy is thus $\tbar{\xi}_n(\pi) = \frac{\tbar{\mu}_n(\pi)}{L_0 + \rho \tbar{\sigma}_n^2(\pi)}$. The optimal policy should choose arm $1$ for all $t \in \{1, 2, \dots, n\}$. For each policy $\pi$, this leads to the definition of the regret, which is the difference of the algorithmic \ac{SR} of the policy and the optimal \ac{SR}.
\begin{definition}
The expected regret of a policy $\pi$ over $n$ rounds is defined as
\begin{align}
    \mathbb{E} [\mathcal{R}_n(\pi)] = n\left(\xi_1 - \mathbb{E}[\tbar{\xi}_n (\pi)]\right).
    \label{eq:regret_1}
\end{align} 
\end{definition}
where $\tbar{\xi}_n (\pi)$ is the algorithmic \ac{SR} under $\pi$.

\subsection{Bounding Pull-Count Variance}
A primary technical challenge in analyzing $\mathbb{E}[\mathcal{R}_n(\pi)]$ stems from the fractional nature of $\tbar{\xi}_n(\pi)$ and the cross-arm dependencies in the algorithmic variance $\tbar{\sigma}_n^2(\pi)$. Unlike cumulative reward, \ac{SR} regret does not decompose linearly into an expected sum over sub-optimal pulls $\mathbb{E}[s_{i,n}]$.

To tractably decompose this regret, it is necessary to control the fluctuations in the pull count $\{s_{i,n}\}_{i=1}^K$. If the variance of the pull count $\mathbb{V}[s_{i,n}]$ is large, concentration arguments for the denominator of the algorithmic \ac{SR} fail. We resolve this by demonstrating that the variance of the arm pull count is tightly bounded regardless of the policy's specific dependency structure, utilizing the Efron-Stein inequality.
\begin{lemma}
\label{lem:efron_s}
For $s_{i,n} = \sum_{t=1}^n \mathbb{I}\!\left(\pi(t) = i\right)$, the variance of $s_{i,n}$ satisfies
\begin{align*}
    \mathbb{V}[s_{i,n}] \leq \frac{n}{2}.
\end{align*}
\end{lemma}
\begin{proof}
The proof is deferred to Appendix~\ref{pro:le1}, leveraging the bounded difference property of independent perturbations on the pull count.
\end{proof}
Lemma~\ref{lem:efron_s} ensures that fluctuations in $s_{i,n}$ remain controlled. Consequently, the dominant contribution to expected regret arises from the expected number of suboptimal pulls $\mathbb{E}[s_{i,n}]$, enabling the development of a pseudo-regret bound in subsequent sections.

\subsection{Notations}
Before detailing the theoretical regret decomposition and the proposed algorithm, we summarize the key mathematical notations used throughout this paper in Table~\ref{tab:notation}. For clarity, the notations are categorized into true environmental parameters, empirical statistics, Bayesian posterior variables, and concentration analysis terms.

\begin{table}[t]
\centering
\caption{Summary of Key Notations}
\label{tab:notation}
\renewcommand{\arraystretch}{1.3} % Adds padding for better readability
\begin{tabular}{ll}
\toprule
\textbf{Symbol} & \textbf{Description} \\
\midrule
\multicolumn{2}{c}{\textit{Environment \& True Parameters}} \\
\midrule
$K, n$ & Number of arms and total time horizon \\
$X_{i,t}$ & Reward of arm $i$ at time step $t$ \\
$\mu_i, \sigma_i^2$ & True mean and variance of arm $i$ \\
$\tau_i$ & True precision of arm $i$ ($\tau_i = \sigma_i^{-2}$) \\
$\rho$ & Risk-tolerance parameter \\
$L_0$ & Variance regularization constant ($L_0 > 0$) \\
$\xi_i$ & True \ac{SR} of arm $i$: $\frac{\mu_i}{L_0 + \rho\sigma_i^2}$ \\
$\Delta_i$ & Sub-optimality gap in \ac{SR}: $\xi_1 - \xi_i$ \\
$\Lambda_{i,j}$ & Pairwise mean difference: $\mu_i - \mu_j$ \\
\midrule
\multicolumn{2}{c}{\textit{Algorithmic \& Empirical Statistics}} \\
\midrule
$s_{i,n}$ & Number of times arm $i$ is pulled up to time $n$ \\
$\tbar{\mu}_{i,q}, \tbar{\sigma}_{i,q}^2$ & Empirical mean and variance of arm $i$ after $q$ pulls \\
$\tbar{\mu}_n(\pi), \tbar{\sigma}_n^2(\pi)$ & Algorithmic mean and variance under policy $\pi$ \\
$\tbar{\xi}_n(\pi)$ & Algorithmic empirical \ac{SR} \\
$D$ & Regularized algorithmic risk (denominator of $\tbar{\xi}_n(\pi)$) \\
$\mathcal{R}_n(\pi)$ & Cumulative regret of policy $\pi$ up to time $n$ \\
\midrule
\multicolumn{2}{c}{\textit{Bayesian \ac{TS} Variables}} \\
\midrule
$\hat{\mu}_{i,t}, s_{i,t}$ & Posterior Gaussian parameters \\
$\alpha_{i,t}, \beta_{i,t}$ & Posterior Gamma parameters \\
$\theta_{i,t}$ & Sampled mean for arm $i$ at time $t$ \\
$\tau_{i,t}$ & Sampled precision for arm $i$ at time $t$ \\
$\hat{\xi}_{i,t}$ & Sampled \ac{SR} for arm $i$ at time $t$ \\
\midrule
\multicolumn{2}{c}{\textit{Regret Analysis \& Concentration}} \\
\midrule
$\varepsilon$ & Global error margin for \ac{SR} concentration \\
$\varepsilon_\mu, \varepsilon_\sigma$ & Partitioned error margins for mean and variance \\
$w_\mu, w_\sigma$ & First-order sensitivity weights for margin partitioning \\
$H_{i,q}$ & Conditional posterior probability of concentration failure \\
$I(\nu, \nu^\prime)$ & \ac{KL} divergence between distributions \\
\bottomrule
\end{tabular}
\end{table}

\section{Regret Decomposition}
A key difference between risk-aware bandits and classical risk-neutral formulations lies in the structure of the regret. In classical cumulative reward maximization, the expected regret decomposes linearly into the sum of expected sub-optimal arm pulls weighted by their respective mean gaps. For the \ac{SR} objective, such a decomposition is not immediate. The algorithmic empirical \ac{SR}, $\tbar{\xi}_n(\pi)$, is a fractional statistic where the numerator (algorithmic mean) and denominator (algorithmic variance) are highly coupled random variables. Consequently, the expected regret $\mathbb{E}[\mathcal{R}_n(\pi)]$ involves the expectation of a ratio, which necessitates controlling a non-trivial covariance penalty between the mean estimates and the reciprocal of the variance estimates.

The following theorem provides a pseudo-regret decomposition for the \ac{SR} objective that expresses the expected regret in terms of the expected number of pulls of each arm.
\begin{theorem}
\label{th:th_dec}
The expected regret of a policy $\pi$ over $n$ rounds for \ac{SR} is given as,
\begin{align*}
    \mathbb{E}\left[\mathcal{R}_n(\pi)\right] \leq \sum_{i=1}^K \mathbb{E}[s_{i,n}] \left( \Delta_i + \frac{\xi_i \rho\left(\frac{\Lambda_{\max}^2}{2} + \sum_{j \neq i} \sigma_j^2 \right)}{L_0 +  \frac{\rho}{2} \Lambda_{\max}^2 + \rho \sum_{j=1}^K \sigma_j^2} \right) + X_1,
\end{align*}
where $X_1$ is a constant independent of $s_{i,n}$ and $n$, $\Lambda_{\max} = \max \Lambda_{i,j}^2$.
\end{theorem}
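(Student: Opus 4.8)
The plan is to bound the per-round regret $\xi_1 - \tbar{\xi}_n(\pi)$ pathwise and then take expectations, exploiting the fact that both $\tbar{\mu}_n(\pi)$ and $\tbar{\sigma}_n^2(\pi)$ decompose over arms through the pull fractions $p_i := s_{i,n}/n$ (with $\sum_i p_i = 1$). The central idea is to insert the weighted average of the individual Sharpe ratios as a pivot and split
\begin{align*}
\xi_1 - \tbar{\xi}_n(\pi) = \underbrace{\Big(\xi_1 - \sum_{i=1}^K p_i \xi_i\Big)}_{\text{(A)}} + \underbrace{\Big(\sum_{i=1}^K p_i \xi_i - \tbar{\xi}_n(\pi)\Big)}_{\text{(B)}}.
\end{align*}
Term (A) is exact: since $\sum_i p_i = 1$ we get $\xi_1 - \sum_i p_i\xi_i = \sum_i p_i\Delta_i$, so $n\cdot(\text{A}) = \sum_i s_{i,n}\Delta_i$ and, taking expectations, contributes exactly the $\sum_i \mathbb{E}[s_{i,n}]\Delta_i$ part of the claimed bound. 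All the real work is in Term (B), the gap between the average of the arm-wise Sharpe ratios and the Sharpe ratio of the realized mixture, which is nonzero precisely because $\xi$ is a ratio (a nonlinear functional of mean and variance).

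For Term (B), I would write each Sharpe ratio over a common numerator. Abbreviating the arm denominators $D_i := L_0 + \rho\sigma_i^2$ and the mixture denominator $D := L_0 + \rho\,\tbar{\sigma}_n^2(\pi)$, a direct computation using $\tbar{\xi}_n(\pi) = \big(\sum_i p_i\mu_i\big)/D$ and $\xi_i = \mu_i/D_i$ gives
\begin{align*}
\sum_{i=1}^K p_i\xi_i - \tbar{\xi}_n(\pi) = \sum_{i=1}^K p_i\,\xi_i\,\frac{D - D_i}{D}.
\end{align*}
Here $D - D_i = \rho\big(\tbar{\sigma}_n^2(\pi) - \sigma_i^2\big)$, so the correction for arm $i$ is controlled by how much the realized mixture variance exceeds that arm's own variance, exactly the variance-heterogeneity effect anticipated in the decomposition overview.

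Two inequalities convert this into the stated constants. First, I bound the mixture variance via~\eqref{eq:algorithmic_v}: the within-arm part satisfies $\frac1n\sum_i s_{i,n}\tbar{\sigma}_{i,s_{i,n}}^2 \le \sum_{j}\sigma_j^2$ in expectation (each biased variance estimate has mean at most $\sigma_j^2$ and $p_i\le 1$), while the switching part is a weighted variance of the arm means, to which I apply the identity $\sum_i p_i(a_i-\bar a)^2 = \tfrac12\sum_{i,j}p_ip_j(a_i-a_j)^2$ and bound each squared mean gap by $\Lambda_{\max}^2$; this is where the factor $\tfrac12$ enters, yielding $\tbar{\sigma}_n^2(\pi) \le \tfrac12\Lambda_{\max}^2 + \sum_j\sigma_j^2$ and hence $D \le \widetilde D := L_0 + \tfrac{\rho}{2}\Lambda_{\max}^2 + \rho\sum_j\sigma_j^2$. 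Second, I use the elementary monotonicity $\tfrac{D-D_i}{D} = 1-\tfrac{D_i}{D} \le 1-\tfrac{D_i}{\widetilde D} = \tfrac{\widetilde D - D_i}{\widetilde D}$, valid because $D\le\widetilde D$ and $D_i>0$. Since $\widetilde D - D_i = \rho\big(\tfrac{\Lambda_{\max}^2}{2}+\sum_j\sigma_j^2-\sigma_i^2\big)$ and $\xi_i\ge 0$, each summand of (B) is dominated by the claimed correction term; multiplying by $n$ and taking expectations replaces $p_i$ by $\mathbb{E}[s_{i,n}]/n$ and produces the bracketed coefficient of $\mathbb{E}[s_{i,n}]$.

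The main obstacle is that $\tbar{\xi}_n(\pi)$ is an expectation of a ratio with a random denominator $D$, so the manipulations above are only exact for the population quantities; the empirical means and variances (and the empirical mean gaps used to bound the switching term) differ from their population counterparts, and for Gaussian rewards these estimation errors are not bounded pathwise. I would control them by concentration: the bound $\mathbb{V}[s_{i,n}]\le n/2$ from Lemma~\ref{lem:efron_s} prevents fluctuations of the pull counts from destabilizing the decomposition, and standard sub-Gaussian/$\chi^2$ tail bounds ensure that the events on which $\tbar{\mu}_{i,s_{i,n}}$ or $\tbar{\sigma}_{i,s_{i,n}}^2$ deviate substantially from $\mu_i,\sigma_i^2$ contribute only $O(1)$ in total, which is absorbed into the horizon-independent constant $A_7$. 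Carefully tracking the sign of $\xi_i$ (the monotonicity step needs $\xi_i\ge 0$) and collecting all such lower-order residuals into $A_7$ completes the argument.
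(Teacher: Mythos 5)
Your algebra for the main terms is correct and in fact reproduces the paper's own calculation in a reorganized form: your pivot $\sum_i p_i\xi_i$ is exactly the paper's add-and-subtract of $\frac{1}{n}\sum_i \xi_i\,\mathbb{E}[s_{i,n}]$, your bound $D\le\widetilde D$ corresponds to the paper's expectation-level bound $\mathbb{E}[D]\le L_0+\frac{\rho}{2}\Lambda_{\max}^2+\rho\sum_j\sigma_j^2$, and your monotonicity step $\frac{D-D_i}{D}\le\frac{\widetilde D-D_i}{\widetilde D}$ produces precisely the claimed coefficient $\xi_i\rho\bigl(\frac{\Lambda_{\max}^2}{2}+\sum_j\sigma_j^2-\sigma_i^2\bigr)\big/\widetilde D$. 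The genuine gap is in how you propose to pay for replacing empirical quantities by population ones, which you defer to "concentration" absorbed into $A_7$.

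That plan does not work as described, because the obstruction is not rare tail events but the \emph{typical} fluctuations. The numerator error is $M_n:=\sum_i p_i\bigl(\tbar{\mu}_{i,s_{i,n}}-\mu_i\bigr)=\frac{1}{n}\sum_{t=1}^n\bigl(X_{\pi(t),t}-\mu_{\pi(t)}\bigr)$, whose typical magnitude is $\Theta(n^{-1/2})$; any good-event/bad-event argument that controls it through its absolute value (e.g. $|M_n|/L_0$, or $|M_n|\lesssim\sqrt{\log n / n}$ on a high-probability event) adds $n\cdot O\bigl(\sqrt{\log n/n}\bigr)$ to the regret, i.e. a term polynomial in $n$, not $O(1)$; the same issue arises from the $\sigma_i^2/s_{i,n}$ inflation of the empirical mean gaps in the switching term, which are not bounded by $\Lambda_{\max}^2$ pathwise. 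The reason the residual is actually constant --- and this is the technical heart of the paper's proof --- is cancellation: $M_n$ is a zero-mean martingale average, so its contribution to $\mathbb{E}[\tbar{\xi}_n(\pi)]$ is exactly $\text{Cov}\bigl(M_n,\,1/D\bigr)$, which Cauchy--Schwarz bounds by $\sqrt{\mathbb{V}[M_n]\,\mathbb{V}[1/D]}$. One then needs $\mathbb{V}[M_n]=O(1/n)$ (easy) and $\mathbb{V}[1/D]=O(1/n)$ (hard: the paper proves this via the Lipschitz reduction $\mathbb{V}[1/D]\le L_0^{-4}\mathbb{V}[D]$, the law of total variance with fourth-moment computations for $\mathbb{V}[V]$ and $\mathbb{V}[W]$, and the Efron--Stein bound on pull counts), together with Jensen's inequality $\mathbb{E}[1/D]\ge 1/\mathbb{E}[D]$ to land on $\widetilde D$. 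Without this covariance mechanism your residual is $O(\sqrt{n})$ rather than $O(1)$, so the horizon-independent constant $A_7$ cannot be extracted from the argument as you describe it.
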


\subsection{Proof Sketch of Theorem~\ref{th:th_dec}}
The complete proof is deferred to Appendix~\ref{pro:th1}. The core technical challenge in proving Theorem~\ref{th:th_dec} is decoupling the expectation of the algorithmic \ac{SR} into expectations of its constituent moments. By definition, $\mathbb{E}[\mathcal{R}_n(\pi)] = n\mathbb{E}[\xi_1 - \tbar{\xi}_n(\pi)]$. Expanding $\tbar{\xi}_n(\pi)$ yields a ratio where both the numerator and denominator depend heavily on the random pull count $s_{i,n}$. Applying Jensen’s inequality provides a deterministic lower bound on the denominator, but introduces a negative covariance penalty i.e. $\text{Cov} \left(\tbar{\mu}_n(\pi), \frac{1}{D}\right)$, where $D$ represents the empirical denominator of the algorithmic \ac{SR}, defined as $D = L_0 + \rho \tbar{\sigma}_n^2(\pi)$. To bound this penalty, we apply the Cauchy-Schwarz inequality, which reduces the problem to bounding the individual variances of the algorithmic mean and the reciprocal algorithmic variance.
\begin{itemize}
    \item \textbf{Variance of the Algorithmic Mean: } We decompose this using the law of total variance conditioned on the pull counts $s_{i,n}$. Due to the negative correlation among arm pulls induced by the sequential policy, this variance naturally scales as $\mathcal{O}(1/n)$.
    \item \textbf{Variance of the Reciprocal Denominator: } We observe that the reciprocal mapping $D \mapsto 1/D$ is $L_0^{-2}$-Lipschitz on the domain $D \geq L_0$. This allows us to bound the variance of the reciprocal strictly by the variance of the denominator itself.
    \item \textbf{Controlling Pull-Count Fluctuations: } The denominator's variance contains fourth-order moment terms of the sample means. The analytical bottleneck here is the variance of the pull count, $\mathbb{V}[s_{i,n}]$. By invoking the Efron-Stein inequality (Lemma~\ref{lem:efron_s}), we strictly bound $\mathbb{V}[s_{i,n}] \leq n/2$, ensuring that the aggregate variance of the denominator also decays at a rate of $\mathcal{O}(1/n)$.
\end{itemize}
Combining these bounds shows that the covariance term is of order $\mathcal{O}(1/n)$. Multiplying by the horizon $n$ absorbs this contribution into the constant $X_1$, resulting in a decomposition of the expected regret in terms of the expected number of suboptimal pulls.
\begin{remark}
Notice that the coefficient weighting $\mathbb{E}[s_{i,n}]$ in Theorem~\ref{th:th_dec} is not merely the Sharpe gap $\Delta_i$. It includes an additive term $\frac{\xi_i \rho\left(\frac{\Lambda_{\max}^2}{2} + \sum_{j \neq i} \sigma_j^2 \right)}{L_0 +  \frac{\rho}{2} \Lambda_{\max}^2 + \rho \sum_{j=1}^K \sigma_j^2}$. This term rigorously captures the \textbf{cost of risk-awareness}. In standard reward-only bandits ($\rho = 0$), this term vanishes, recovering the classical $\sum \mathbb{E}[s_{i,n}]\Delta_i$ pseudo-regret. For $\rho>0$, the additional term captures the contribution of variance and cross-arm mean differences to the regret.
\end{remark}

\section{\texttt{SRTS} algorithm for the Gaussian distribution}
A primary operational challenge in risk-aware sequential learning is that the optimal exploration-exploitation tradeoff is highly sensitive to the unknown true moments of the reward distributions and the risk tolerance parameter $\rho$. Designing deterministic confidence bounds that are uniformly tight across all risk profiles is mathematically restrictive. We bypass this limitation by proposing the \texttt{SRTS} algorithm, a Bayesian policy that intrinsically balances risk and reward by operating directly on the posterior parameter distributions.

\subsection{The Bayesian Model and Posterior Updates}
We consider an environment where the reward of each arm $i \in [K]$ is drawn from a Gaussian distribution, $X_{i,t} \sim \mathcal{N}(\mu_i, \sigma_i^2)$ where both the mean $\mu_i$ and the precision $\tau_i = \sigma_i^{-2}$ are unknown. An important step in \ac{TS} algorithms is parameter updating based on Bayes' rule. To facilitate exact, computationally tractable sequential updates, we consider the general conjugate prior for a Gaussian distribution with unknown mean and precision, namely the Normal-Gamma prior. A Normal-Gamma distribution, i.e. $\text{Normal-Gamma} (\mu, s, \alpha,\beta)$ is parameterized by four intuitive quantities: a mean estimate $\hat{\mu}$, an effective observation count $s$, a Gamma shape parameter $\alpha$, and a Gamma rate parameter $\beta$. The prior relationship is structured as:
\begin{align*}
    \tau \sim \text{Gamma}(\alpha, \beta), \,\text{and}\, \theta \mid \tau \sim \mathcal{N}\!\left(\tbar{\mu}, \frac{1}{s}\right).
\end{align*}
Because the Normal-Gamma distribution is the exact conjugate prior for a Gaussian likelihood, the posterior distribution retains the same parametric structure. After $s_{i,t-1}$ observations, the parameters of the posterior distribution are updated via Bayes' rule as follows:
\begin{align*}
    \alpha_{i,t-1} = \alpha_{i,0} + \frac{s_{i,t-1}}{2}, \,\text{and}\, \beta_{i,t-1} = \beta_{i,0} + \frac{s_{i,t-1}}{2}\tbar{\sigma}^2_{i,s_{i,t-1}},
\end{align*}
where $\alpha_{i,0}$ and $\beta_{i,0}$ represent the initial non-informative prior values (set to $1/2$ in our implementation). The exact posterior distribution for arm $i$, conditioned on the history $\mathcal{H}_{t-1}$, is therefore straightforwardly defined by the empirical statistics :
\begin{align*}
    \tau_i \mid \mathcal{H}_{t-1} \sim \text{Gamma}(\alpha_{i,t-1}, \beta_{i,t-1}), \quad \theta_i \mid  \mathcal{H}_{t-1} \sim \mathcal{N}\!\left(\tbar{\mu}_{i,t-1}, \frac{1}{s_{i,t-1}}\right).
\end{align*}

\subsection{\texttt{SRTS} Sampling Rule}
At the beginning of each period $t$, the player samples a pair of parameters from each arm's posterior distribution and plays the arm according to the optimal action under these sampled parameters. Specifically, for each arm $i$, \texttt{SRTS} first samples a precision $\tau_{i,t}$ from the posterior Gamma distribution. Conditioned on this sampled precision, it then samples a mean $\theta_{i,t}$ from the posterior Gaussian distribution. Using this sampled mean-precision pair, the algorithm constructs the \textbf{Thompson sample} of \ac{SR}
\begin{align}
    \hat{\xi}_{i,t} = \frac{\theta_{i,t}}{L_0 + \frac{\rho}{\tau_{i,t}}}.
\end{align}
The player then selects the arm $i(t) = \argmax_{i \in [K]} \hat{\xi}_{i,t}$, observes the reward, and updates the sufficient statistics. The complete procedure is summarized in Algorithm \ref{alg:alg1}.

\subsection{Behavior Across Risk Regimes}
A fundamental strength of the \texttt{SRTS} framework is its ability to dynamically unify learning objectives across disparate risk profiles without requiring heuristic algorithmic switching. This differs from additive \ac{MV} formulations~\cite{zhu2020thompson}, where different regimes of $\rho$ may motivate different algorithmic treatments.

This behavior can be seen directly from the sampled statistic $\hat{\xi}_{i,t}$:
\begin{itemize}
    \item \textbf{Reward Maximization Regime ($\rho \to 0$):} When the risk penalty approaches zero, the problem reduces strictly to standard expected reward maximization. The sampled statistic simplifies to $\hat{\xi}_{i,t} = \theta_{i,t} / L_0$. Because the variance samples $\tau_{i,t}$ are suppressed by the negligible $\rho$, the algorithm relies entirely on the sampled Gaussian means $\theta_{i,t}$. Consequently, \texttt{SRTS} structurally recovers the exact behavior and $\mathcal{O}(\log T / \Delta^2)$ (see Remark~\ref{re:rem1}) regret scaling of classical \ac{TS} for standard \acp{MAB}.
    \item \textbf{Extreme Risk-Aversion Regime ($\rho \to \infty$):} Conversely, as $\rho \to \infty$, the variance penalty in the denominator asymptotically dominates the regularizer $L_0$, and the sampling statistic approaches $\hat{\xi}_{i,t} \approx (\theta_{i,t} \cdot \tau_{i,t}) / \rho$. Crucially, this does \textit{not} degrade into a pure variance minimization policy. Since the mean sample $\theta_{i,t}$ remains in the numerator, the algorithm never blindly selects the lowest-variance arm; rather, it aggressively seeks the optimal, scale-free proportion of return per unit of risk.
\end{itemize}
Hence, \texttt{SRTS} performs robustly across all $\rho \in \mathbb{R}^+$, and the same posterior sampling mechanism covers both low- and high-risk regimes without requiring changes to the update rules or the action-selection rule. This property will be reflected in the regret analysis developed in the next section.
\begin{algorithm}[t]
\caption{Sharpe Ratio Thompson Sampling - \texttt{SRTS}}
\label{alg:alg1}
\SetAlgoLined
    \vspace*{0.2cm}
    \KwData{{$K, n, L_0, \rho $ }}
    \textbf{Initialization: } $\hat{\mu}_{i,0} =0 , \alpha_{i,0} =\frac{1}{2}, \beta_{i,0} = \frac{1}{2}, s_{i,0} =0$ \;
    \For{each $t=1, \dots, K$}{
        Play arm $a_i = t$, and observe reward $X_{i(t),t} $.\;
        Update $(\hat{\mu}_{t,t-1}, \alpha_{t,t-1}, \beta_{t,t-1}, s_{t,t-1})$\;
    }
    \For{each $t = K+1, K+2, . . . , n$}{
        Sample $\tau_{i,t}$ from Gamma($\alpha_{i,t-1}, \beta_{i,t-1}$).\;
        Sample $\theta_{i,t}$ from $\mathcal{N}\left(\hat{\mu}_{i,t-1}, \frac{1}{s_{i,t-1}}\right)$.\;
        Play arm $a_i = \arg\max_{i\in[K]} \frac{ \theta_{i,t}} {L_0 + \frac{\rho}{\tau_{i,t}}}$, and observe reward $X_{i(t),t} $.\;
        Update $(\hat{\mu}_{i(t),t-1}, \alpha_{i(t),t-1}, \beta_{i(t),t-1}, s_{i(t),t-1})$.\;
    }
\end{algorithm}

\begin{algorithm}[t]
\caption{Update $\left(\hat{\mu}_{i,t-1}, \alpha_{i,t-1}, \beta_{i,t-1}, s_{i,t-1}\right)$}
\SetAlgoLined
    \vspace*{0.2cm}
    \textbf{Input: } Prior parameters $\hat{\mu}_{i,t-1}, \alpha_{i,t-1}, \beta_{i,t-1}, s_{i,t-1}$ and new sample $X_{i(t),t}$\;
    Update mean: $\hat{\mu}_{i,t} = \frac{s_{i,t-1} \hat{\mu}_{i,t-1} + X_{i(t),t}}{s_{i,t-1} + 1}$. \;
    Update the number of samples, the shape parameter, and the rate parameter: $s_{i,t} = s_{i,t-1} + 1$, $\alpha_{i,t} = \alpha_{i,t-1} + 0.5$, and $\beta_{i,t} = \beta_{i,t-1} + \frac{s_{i,t-1}}{s_{i,t-1} + 1} \cdot \frac{\left(X_{i(t),t} - \hat{\mu}_{i,t-1} \right)^2}{2}$. \;
\end{algorithm}

\section{Regret analysis for Gaussian bandits}
\label{sec:regret_analysis}
In this section, we establish the fundamental theoretical guarantees of the \texttt{SRTS} algorithm. There are two primary analytical difficulties in bounding the regret for \texttt{SRTS}. First, the Thompson sample of the \ac{SR} is not sub-Gaussian; since it is a ratio of random quantities (mean over variance), it exhibits heavier, sub-exponential tails. This necessitates deriving sufficiently tight lower and upper bounds on the posterior distribution's tail probability. Second, a structural difficulty arises from the hierarchical nature of the model parameters: the posterior draws for the mean and variance come from different distributions, and their dependence complicates the analysis compared to standard \acp{MAB}. 

These interacting challenges, visually summarized in Fig.~\ref{fig:srts_hierarchy}, can be formalized as follows:
\begin{enumerate}
    \item \textbf{Hierarchical Parameter Generation:} As illustrated in Fig.~\ref{fig:srts_hierarchy}, the generative process for the Thompson samples operates through a multi-layered hierarchy. The posterior samples for the mean ($\theta_{i,t}$) and precision ($\tau_{i,t}$) are drawn from completely disparate distribution families-Gaussian and Gamma, respectively. While these Bayesian samples are drawn independently based on the number of arm pulls, they are hierarchically driven by their underlying frequentist empirical estimators ($\tbar{\mu}_{i,q}$ and $\tbar{\sigma}_{i,q}^2$). Ultimately, these independent streams are coupled non-linearly within the \ac{SR} objective, requiring the regret analysis to take expectations over the entire interacting chain.
    \item \textbf{Non-sub-Gaussian Fractional Metric:} Second difficulty is the formulation of the sampled \ac{SR} itself, $\hat{\xi}_{i,t} = \frac{\theta_{i,t}}{L_0 + \rho/\tau_{i,t}}$. Because the precision variable $\tau_{i,t}$ (which follows a Gamma distribution) resides in the denominator, the resulting metric is fundamentally non-sub-Gaussian. This renders standard sub-Gaussian tail bounds completely ineffective for evaluating the posterior probability of sufficient exploration.
\end{enumerate}

\begin{figure}[htbp]
\centering
\begin{tikzpicture}[
  node distance=1.2cm and 2cm,
  box/.style={draw, rectangle, align=center, minimum height=1.2cm, minimum width=4cm, thick, rounded corners=3pt},
  arrow/.style={-latex, thick}
]

% Top Row: Frequentist Empirical Stats
\node[box] (emp_mean) {Empirical Mean\\ $\tbar{\mu}_{i,q} \sim \mathcal{N}(\mu_i, \sigma_i^2/q)$};
\node[box, right=of emp_mean] (emp_var) {Empirical Variance\\ $\frac{q \tbar{\sigma}_{i,q}^2}{\sigma_i^2} \sim \chi^2_{q-1}$};

% Middle Row: Bayesian Posterior Samples
\node[box, below=of emp_mean] (ts_mean) {Thompson Mean\\ $\theta_{i,t} \sim \mathcal{N}\left(\tbar{\mu}_{i,q}, \frac{1}{q}\right)$};
\node[box, below=of emp_var] (ts_prec) {Thompson Precision\\ $\tau_{i,t} \sim \mathrm{Gamma}(\alpha_{i,q}, \beta_{i,q})$};

% Bottom Row: The Risk-Adjusted Metric
\node[box, below=3cm of emp_mean, xshift=2.75cm] (sr) {Sampled Sharpe Ratio\\ $\hat{\xi}_{i,t} = \frac{\theta_{i,t}}{L_0 + \rho / \tau_{i,t}}$};

% Connections
\draw[arrow] (emp_mean) -- (ts_mean);
\draw[arrow] (emp_var) -- (ts_prec);
\draw[arrow] (ts_mean) |- (sr);
\draw[arrow] (ts_prec) |- (sr);

\end{tikzpicture}
\caption{The hierarchical dependence structure of the \texttt{SRTS} generative process. The non-linear fractional formulation of the \ac{SR} intrinsically couples the sub-Gaussian mean sample with the heavy-tailed Gamma precision sample, bypassing standard \ac{MAB} concentration bounds.}
\label{fig:srts_hierarchy}
\end{figure}

From Theorem~\ref{th:th_dec}, it suffices to bound the expected number of times a suboptimal arm is chosen, $\mathbb{E}[s_{i,n}]$. To formalize the probability of correctly evaluating an arm, we introduce the concentration event:
\begin{align}
    \mathcal{G}_i(t) := \Big\{\hat{\xi}_{i,t} \leq \xi_1 - \varepsilon \Big\},
\end{align}
which corresponds to the scenario in which the sampled \ac{SR} of suboptimal arm $i$ is correctly identified as at least $\varepsilon$ smaller than that of the optimal arm at period $t$. Intuitively, $\mathcal{G}_i(t)$ occurs with high probability once arm $i$ has been sufficiently explored. However, due to the randomized nature of posterior sampling, arm $i$ may still be selected even when the complement $\mathcal{G}_i^c(t)$ holds.

\textbf{The Decoupling Framework:} To circumvent the aforementioned challenges and bound this concentration event, we introduce a novel decoupling framework. Rather than attempting to bound the intractable joint distribution of the fractional \ac{SR}, we algebraically isolate the mean error from the variance error. By applying the Law of Total Probability and strategically partitioning the integration domain, we break the joint probability into independent marginal tail bounds for the Gaussian mean and the Gamma precision. 

Furthermore, to prevent theoretical inflation of the regret bounds, we do not split the error margin $\varepsilon$ equally. Instead, we introduce dynamically weighted error budgets, $\varepsilon_\mu$ and $\varepsilon_\sigma$, for mean and variance errors, respectively. More details in Section~\ref{app:error_budget}. The proportional partition of error budget explicitly mirrors the first-order sensitivities of the \ac{SR}, ensuring that the heavy-tailed nature of the variance estimation does not act as an analytical bottleneck.

Consequently, leveraging this framework, the regret decomposition for \texttt{SRTS} is organized into two complementary regimes:
\begin{itemize}
    \item \textbf{Exploration Regime:} When posterior uncertainty is still large, the concentration event $\mathcal{G}_i(t)$ may fail with non-negligible probability. In this phase, the algorithm frequently samples suboptimal arms to reduce uncertainty about their respective \ac{SR}s.
    \item \textbf{Exploitation Regime:} Once sufficient evidence has been gathered, $\mathcal{G}_i(t)$ holds with high probability. Nevertheless, arm $i$ can still be chosen with small probability, contributing a residual but well-controlled amount to the cumulative regret.
\end{itemize}
Let $\mathbb{P}_t(\cdot) = \mathbb{P} \left(\cdot \mid \mathcal{H}_{t-1}\right)$ be the probability measure conditioned on the history up to time $t-1$. We define the conditional failure probability as $H_{i,q} := \mathbb{P}_t \left(\mathcal{G}_i^c(t) \mid s_{i,t} = q \right)$. Utilizing the generic decoupling framework of Lattimore and Szepesvári (Theorem 36.2 in \cite{lattimore2020bandit}), we establish the following bound on the suboptimal pull count.
\begin{theorem}
\label{th:pull_bound}
For any suboptimal arm $i$, the expected number of pulls under the \texttt{SRTS} algorithm is bounded by
\begin{align}
    \mathbb{E} [s_{i,n}] &\leq \mathbb{E}\left[\sum_{q=0}^{n-1}\left(\frac{1}{H_{1,q}} - 1\right)\right] + \mathbb{E}\left[\sum_{q=0}^{n-1}\mathbb{I}\left\{H_{i,q}>\frac{1}{n} \right\} \right]
    \label{eq:lemma_1}\\
    &\leq 1 + \max \left\{\frac{2\log{(2n)}}{\big(\Lambda_{1,i} - \varepsilon_\mu L_0\big)^2} , \frac{\log{(2n)}}{h\left(\frac{\sigma_i^2\left(1 - \frac{\varepsilon_\sigma}{\xi_1}\right)}{\sigma_1^2}\right)}\right\} + \frac{X_2}{\varepsilon^3} + \frac{X_3}{\varepsilon^2} + \frac{X_4}{\varepsilon} + X_5.
    \label{eq:res_1}
\end{align}
where $h(x) = (x - 1 - \log x)/2$, the values $\varepsilon_\mu, \varepsilon_\sigma > 0$ are the partitioned error margins for the mean and standard deviation such that $\varepsilon_\mu + \varepsilon_\sigma = \varepsilon$, and $X_2, \dots, X_5$ are finite constants independent of $n$.
\end{theorem}

\subsection{Proof Sketch of Theorem \ref{th:pull_bound}}
Complete proof is present in Appendix~\ref{pro:th2}. Equation~\eqref{eq:lemma_1} serves as the critical bridge between the probability of concentration failure and the total expected pulls. The first summation quantifies the contribution of the exploration regime (where the optimal arm is under-sampled), while the second indicator summation bounds the exploitation regime (the residual probability of incorrectly pulling arm $i$). To bound these terms, we mathematically decouple the hierarchical posterior.
\begin{itemize}
    \item \textbf{Bounding the Exploration Term:} The first term is evaluated by bounding the lower tail of the optimal arm's empirical \ac{SR}. To overcome the coupled fractional metric, we split the global error margin $\varepsilon$, into the mean-budget $\varepsilon_\mu$ and precision-budget $\varepsilon_\sigma$. This isolates the Gaussian mean sample from the Gamma precision sample, allowing us to derive independent marginal tail bounds. We integrate these bounds over the joint posterior to show that the expected exploration penalty converges to a finite $\mathcal{O}(1/\varepsilon^3)$ polynomial limit (formalized in Lemma~\ref{lem:B1} in the Appendix).
    \item \textbf{Bounding the Exploitation Term:} The second term is evaluated by analyzing the upper tail of the suboptimal arm. Once sufficient evidence has been gathered, $\mathcal{G}_i(t)$ holds with high probability. We control the residual selection probability by inverting the decoupled tail bounds to solve for the required number of pulls $q$. This guarantees that the complement event decays rapidly enough to bound the sum by a pure logarithmic threshold $\mathcal{O}(\log n)$ (formalized in Lemma~\ref{lem:B2} in the Appendix).
\end{itemize}
Summing the explicit finite-time constant from Lemma~\ref{lem:B1} and the logarithmic boundary from Lemma~\ref{lem:B2} yields the final, order-optimal inequality in (\ref{eq:res_1}). To ensure the mathematical validity of the exploitation bound in \eqref{eq:res_1}, the partitioned error margins must be strictly bounded such that they do not eclipse the true sub-optimality gaps. Specifically, we require $\varepsilon_\mu < \Lambda_{1,i} / L_0$ to prevent the mean-divergence denominator from vanishing, and $\varepsilon_\sigma < \xi_1$ to ensure the argument of the strictly positive function $h(\cdot)$ remains strictly positive. As we subsequently tune the global margin to decay with the horizon, $\varepsilon = (\log n)^{-1/4}$, these separation conditions are naturally satisfied for any sufficiently large $n$, guaranteeing that the logarithmic leading constants remain strictly finite and well-defined.

\subsection{Finite-Time Regret of \texttt{SRTS}}
To derive the final regret bound, we must balance the constant polynomial penalties with the logarithmic growth. By dynamically tuning the error budget as $\varepsilon = \left(\log {n}\right)^{-1/4}$ and taking the limit as $n \to \infty$, we demonstrate that the fractional exploration terms remain sub-logarithmic. Substituting the bound on $\mathbb{E} [s_{i,n}]$ directly into the pseudo-regret decomposition established in Theorem~\ref{th:th_dec} yields the finite-time expected regret of the algorithm.
\begin{theorem}
\label{th:regret_bound}
The finite-time expected regret of the \texttt{SRTS} algorithm for Gaussian bandits optimizing the \ac{SR} satisfies
\begin{align}
    \mathbb{E} [\mathcal{R}_n (\texttt{SRTS})] &\leq \sum_{i=1}^K \left[1 + \max \left\{\frac{2\log{(2n)}}{\big(\Lambda_{1,i} - \varepsilon_\mu L_0\big)^2} , \frac{\log{(2n)}}{h\left(\frac{\sigma_i^2\left(1 - \frac{\varepsilon_\sigma}{\xi_1}\right)}{\sigma_1^2}\right)}\right\} + X_2 (\log n)^{\frac{3}{4}} + X_3 (\log n)^{\frac{1}{2}} + X_4 (\log n)^{\frac{1}{4}} + X_5 \right] \times \nonumber \\
    &\hspace*{4cm} \left(\Delta_i + \frac{\xi_i \rho\left(\frac{\Lambda_{\max}^2}{2} + \sum_{j \neq i} \sigma_j^2\right)}{L_0 +  \frac{\rho}{2} \Lambda_{\max}^2 + \rho \sum_{i=1}^K \sigma_i^2}\right) + X_1.
\end{align}
where $X_1$ through $X_5$ are distribution-dependent constants independent of the sequence of pulls and the horizon $n$.
\end{theorem}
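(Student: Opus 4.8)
The plan is to reduce the theorem entirely to a bound on the expected number of suboptimal pulls $\mathbb{E}[s_{i,n}]$ and then to control that count through a posterior-concentration argument. The natural entry point is Theorem~\ref{th:th_dec}, which already rewrites $\mathbb{E}[\mathcal{R}_n(\texttt{SRTS})]$ as the coefficient-weighted sum $\sum_{i} \mathbb{E}[s_{i,n}]\left(\Delta_i + \tfrac{\xi_i \rho(\Lambda_{\max}^2/2 + \sum_i \sigma_i^2 - \sigma_i^2)}{L_0 + \tfrac{\rho}{2}\Lambda_{\max}^2 + \rho\sum_i \sigma_i^2}\right) + A_7$. Since the bracketed coefficient is a fixed instance-dependent constant, the whole problem collapses to producing, for each suboptimal arm $i$, an upper bound on $\mathbb{E}[s_{i,n}]$ of the form $1 + \max\{\cdots\} + A_8(\log n)^{3/4} + A_9(\log n)^{1/2} + A_{10}(\log n)^{1/4} + A_{11}$; substituting this into the sum yields the stated theorem verbatim.

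To bound $\mathbb{E}[s_{i,n}]$ I would invoke the bridging Lemma~\ref{lem:lem_0}, which splits the pull count into an \emph{exploration} contribution $\mathbb{E}\bigl[\sum_{q=0}^{n-1}(1/H_{1,q}-1)\bigr]$, driven by how fast the optimal arm's Sharpe posterior concentrates, and an \emph{exploitation} contribution $\mathbb{E}\bigl[\sum_{q=0}^{n-1}\mathbb{I}\{H_{i,q}>1/n\}\bigr]$, which counts residual spurious selections of arm $i$. These are precisely the two quantities for which the preceding two subsections supply bounds~\eqref{eq:final_1} and~\eqref{eq:final_2}. For the first term the key manoeuvre is to write the Sharpe deviation $\hat{\xi}_{1,t}-\xi_1$ as a sum of a \emph{mean part} $(\theta_{1,t}-\mu_1)/(L_0+\rho/\tau_{1,t})$ and a \emph{precision part} involving $\sigma_1^2-1/\tau_{1,t}$, so that the posterior lower tail factorizes into a Gaussian probability and a Gamma probability; partitioning the empirical parameter space into the four regions $A,B,C,D$ of~\eqref{eq:eq_7} and integrating $1/H_{1,q}^{\vartheta,\varsigma}-1$ against the posterior density, followed by a geometric summation $\sum_q e^{-aq}\le 1/a$ over $q$, produces the polynomial-in-$1/\varepsilon_\mu,1/\varepsilon_\sigma$ bound~\eqref{eq:final_1}. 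For the second term, I would upper bound $\mathbb{P}(H_{i,q}>1/n)$ by a union of a sub-Gaussian tail on $\tbar{\mu}_{i,q}$ and a Gamma/variance tail on $\tbar{\sigma}_{i,q}^2$, which forces $q$ past the threshold $u=\max\{2\log(2n)/(\Gamma_i-\varepsilon_\mu L_0)^2,\ \log(2n)/h(\cdot)\}$ and contributes the logarithmic leading term plus the additive constants of~\eqref{eq:final_2}.

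The final step is to add~\eqref{eq:final_1} and~\eqref{eq:final_2} and tune the slack $\varepsilon$. Collecting the $\varepsilon$-dependent terms into the schematic form $A_8/\varepsilon^3 + A_9/\varepsilon^2 + A_{10}/\varepsilon + A_{11}$ (after identifying $\varepsilon_\mu,\varepsilon_\sigma$ with comparable multiples of a single $\varepsilon$), I would set $\varepsilon = (\log n)^{-1/4}$. This choice turns the inverse-power terms into $A_8(\log n)^{3/4}+A_9(\log n)^{1/2}+A_{10}(\log n)^{1/4}$, each of which is of lower order than the $\log(2n)$ term carried by the threshold $u$, so the dominant behaviour remains logarithmic as $n\to\infty$. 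Substituting the resulting $\mathbb{E}[s_{i,n}]$ bound into the decomposition of Theorem~\ref{th:th_dec} yields the claimed finite-time guarantee.

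I expect the main obstacle to lie in the exploration term~\eqref{eq:final_1}: because the Sharpe sample is a ratio, the integrand $1/H_{1,q}-1$ is heavy-tailed, and the denominator's distribution (inverse-Gamma/Gamma) is only sub-exponential rather than sub-Gaussian, so the region-by-region tail control and the integrability of $1/H_{1,q}^{\vartheta,\varsigma}$ over regions $B$, $C$, $D$ demand the sharp posterior anti-concentration estimates of the auxiliary Gaussian and Gamma lemmas. Verifying that these estimates remain summable in $q$ and that the $\varepsilon$-tuning simultaneously keeps the mean-part exponent $\varepsilon_\mu^2 L_0^2$ and the precision-part exponent $\varepsilon_\sigma L_0(1/\sigma_1^2-1)/(\rho\xi_1)$ strictly positive is the delicate part of the argument.
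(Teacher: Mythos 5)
Your proposal follows the paper's own argument essentially step for step: reduction via Theorem~\ref{th:th_dec} to bounding $\mathbb{E}[s_{i,n}]$, the exploration/exploitation split of Lemma~\ref{lem:lem_0}, the mean/precision decomposition of $\hat{\xi}_{1,t}-\xi_1$ with the four-region integration yielding~\eqref{eq:final_1}, the tail-bound threshold $u$ yielding~\eqref{eq:final_2}, and the final tuning $\varepsilon=(\log n)^{-1/4}$. It is correct and matches the paper's proof in both structure and detail.
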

\begin{remark}
\label{re:rem1}
Theorem \ref{th:regret_bound} explicitly isolates the $\mathcal{O}(\log n)$ leading term. Unlike standard reward-maximization, where the leading constant depends solely on the mean gap, the logarithmic growth here is bottlenecked by the maximum of two quantities: the mean-separation gap and the variance-separation gap (captured by the function $h(\cdot)$). This rigorously confirms that the algorithm pays a quantifiable exploration penalty for learning the second moment.
\end{remark}
\begin{remark}
It is instructive to verify the behavior of our generalized regret bound at the boundary condition $\rho=0$. When the variance penalty is removed, the fractional objective reduces to $\xi_i = \mu_i / L_0$, which is equivalent to classical expected reward maximization scaled by a constant. 

Under this condition, the sub-optimality gap becomes $\Lambda_{1,i} = \Delta_i / L_0$, where $\Delta_i = \mu_1 - \mu_i$. Substituting this into the dominant exploitation term of Theorem~\ref{th:pull_bound} (and taking the limit as the artificial error margin $\varepsilon_\mu \to 0$), the expected number of suboptimal pulls asymptotically scales as:
\begin{align*}
    \mathbb{E}[s_{i,n}] \approx \frac{2 L_0^2 \log(2n)}{\Delta_i^2}.
\end{align*}
This structurally recovers the well-known $\mathcal{O}\left(\frac{\log n}{\Delta^2}\right)$ sub-optimal pull bound of standard Gaussian \ac{TS}. The residual difference in the problem-dependent constants (e.g., the $\mathcal{O}(1/\varepsilon^3)$ exploration overhead) represents the mathematical cost of the union bounds required to decouple the hierarchical Normal-Gamma posterior for general $\rho > 0$.
\end{remark}

\section{Model-specific lower bound regret}
In this section, we establish a \emph{model-specific} lower bound for policies that optimize the \ac{SR}. We establish lower bounds on the model-specific regret achievable by all consistent policies, demonstrating the fundamental optimality of the \texttt{SRTS} algorithm. The proof follows the standard information-theoretic template for bandits, cast as a change-of-measure argument adapted to our \ac{SR} objective. To avoid trivial lower bounds on regret caused by policies that heavily bias toward certain distribution models (e.g., a policy that always plays arm 1), the model-specific setting focuses on the class of consistent policies.
\begin{definition}
A policy $\pi$ is $\alpha$-consistent for $0 < \alpha < 1$ if, for any distribution model and for all suboptimal arms $i \neq 1$, the expected number of pulls satisfies $\mathbb{E}[s_{i,n}] = o(n^\alpha)$.
\end{definition}
To bound the expected number of pulls, we rely on the \ac{KL} divergence to quantify the statistical difficulty of distinguishing between reward distributions.
\begin{definition}
The \ac{KL} divergence between two distributions $\nu$ and $\nu^\prime$ is given by:
\begin{align*}
    I(\nu,\nu^\prime)= \mathbb{E}_{\nu}\left[\log \frac{f{\nu}(X)}{f_{\nu^\prime}(X)}\right],
\end{align*}
where $\mathbb{E}_{\nu}$ denotes the expectation operator with respect to $\nu$, and $f_{\nu}(X)$ and $f_{\nu^\prime}(X)$ are their respective probability density functions.
\end{definition}
Similar to classical lower bounds, we consider the family of one-parameter distribution models. We assume the distribution of arm $i$ is given by $f(\cdot; \theta_i)$, and the entire distribution model $\mathcal{F} = (f(\cdot; \theta_1), \dots, f(\cdot; \theta_K))$ can be parameterized as $\Theta = (\theta_1, \dots, \theta_K)$. The parameters $\theta_i$ take values from a set $\mathcal{U}$ satisfying the following regularity conditions.
\begin{assumption}[Information Continuity]
\label{ass:continuity}
For any parameters $\theta, \lambda, \lambda^\prime \in \mathcal{U}$, and for any $\epsilon > 0$, there exists a $\zeta > 0$ such that $0 < \xi(\lambda^\prime) - \xi(\lambda) < \zeta$ implies $|I(f(\cdot; \theta), f(\cdot; \lambda^\prime)) - I(f(\cdot; \theta), f(\cdot; \lambda))| < \epsilon$. This condition ensures that small changes in the \ac{SR} of distributions result in small changes in their \ac{KL} divergences. Thus, the parameter space $\mathcal{U}$ is sufficiently dense to allow fine distinctions between different arms.
\end{assumption}
\begin{assumption}[Distributional Stability]
\label{ass:stability}
For all $\theta, \lambda \in \mathcal{U}$, let $X$ be a sub-Gaussian random variable with distribution $f(\cdot; \theta)$. The random variable $Y = f(X; \lambda)$ is also sub-Gaussian.
\end{assumption}
Under these assumptions, the expected regret for the \ac{SR} objective can be bounded below by analyzing the probabilistic growth of suboptimal pulls.
\begin{lemma}[Finite-Time Lower Bound]
\label{lem:lem_2}
For any finite horizon $n$, the expected cumulative regret for the \ac{SR} objective is bounded by
\begin{align}
    \mathbb{E}\left[\mathcal{R}_n(\pi)\right] \geq \sum_{i=2}^K \mathbb{P}\left( s_{i,n} \geq \frac{G_1 \log n}{I(f_i,f_1)}\right) \frac{G_1 \log n}{I(f_i,f_1)} \Delta_i \left( \frac{L_0 + \rho \sigma_i^2}{L_0 + \rho \sigma_{\max}^2} \right) - Y_1.
\end{align}
where $G_1$ is a universal constant, $\Delta_i = \xi_1 - \xi_i$ is the exact \ac{SR} sub-optimality gap, $\sigma_{\max}^2 = \max_{i \in [K]} \sigma_i^2$ is the maximum environmental arm variance, and $Y_1 > 0$ is a finite, deterministic penalty constant strictly independent of the horizon $n$ and the policy $\pi$.
\end{lemma}
\begin{proof}
The proof is provided in Appendix~\ref{sub:le4}.
\end{proof}
\begin{remark}
The finite constant $Y_1$ rigorously encapsulates the initial exploration penalties, bounded covariances, and reciprocal variance inflation derived via the distribution-free reciprocal bounds of Wooff~\cite{wooff1985bounds}. Because $Y_1$ is independent of $n$, it serves as a static penalty that is rapidly dominated by the logarithmic growth of the regret.    
\end{remark} 

To utilize Lemma {\ref{lem:lem_2}, we must establish that the probability of pulling a suboptimal arm at least $\mathcal{O}(\log n)$ times approaches $1$ asymptotically, and is strictly bounded below by a constant in finite time.
\begin{lemma}
\label{lem:lem_3}
For any $\alpha$-consistent policy $\pi$ (Assumption~\ref{ass:continuity}), and for any constant $G_1 < 1-\alpha$, the probability of playing a suboptimal arm $i \neq 1$ is
\begin{align*}
    \lim_{n \rightarrow \infty} \mathbb{P}_\mathcal{F}\left[s_{i,n} \geq  \frac{G_1 \log n}{I(f_i,f_{1})}\right]=1.
\end{align*}
Furthermore, with Assumption~\ref{ass:stability}, there exists $n_o \in \mathbb{N}$ such that
\begin{align*}
    \mathbb{P}_\mathcal{F}\left[s_i(n) \geq  \frac{G_1 \log n}{I(f_i,f_{*})}\right]\geq G_2, \quad \textit{for all}\quad  n >n_0
\end{align*}
where constant $0 < G_2 < 1$ is independent of $n$ and $\mathcal{F}$.
\end{lemma}
\begin{proof}
The proof is provided in Appendix~\ref{sub:le5}.
\end{proof}
We now synthesize these lemmas to establish the fundamental lower bound on the expected suboptimal pulls and the corresponding expected regret. As established by the fundamental information-theoretic limits of \ac{MAB}~\cite{garivier2019explore}, any consistent policy must explore suboptimal arms a minimum number of times. We adapt this principle to our \ac{SR} objective in the following theorem.
\begin{theorem}
\label{th:lower_bound}
Consider the \ac{MAB} problem where the objective is to optimize the \ac{SR} of the observed rewards. Let $\pi$ be an $\alpha$-consistent policy and $\Theta = \{\theta_1, \dots, \theta_K\}$ be the distribution model. Under Assumption \ref{ass:continuity} and Assumption \ref{ass:stability}, and leveraging the change-of-measure lower bounds~\cite{garivier2019explore}, for any constant $G_1 < 1-\alpha$ and $0 < G_2 < 1$, we have:
\begin{align*}
    \liminf_{n \rightarrow \infty} \frac{\mathbb{E}[s_{i,n}]}{\log n} \geq \frac{G_1}{I(f_i, f_1)}.
\end{align*}
Consequently, the model-specific regret satisfies:
\begin{align*}
    \liminf_{n \rightarrow \infty} \frac{\mathbb{E}[\mathcal{R}_n(\pi)]}{\log n} \geq \sum_{i=2}^K \frac{G_1 G_2}{I(f_i, f_1)} \Delta_i \left( \frac{L_0 + \rho \sigma_i^2}{L_0 + \rho \sigma_{\max}^2} \right).
\end{align*}
\end{theorem}

\subsection{Proof Sketch of Theorem~\ref{th:lower_bound}}
Complete proof is present in Appendix~\ref{pro:th4}. The proof establishes the asymptotic lower bound through two primary stages, $(1)$ linking the \ac{SR} regret to the expected pull counts, and $(2)$ bounding those pull counts using a change-of-measure argument.

We first demonstrate that the expected pseudo-regret is governed fundamentally by the expected number of suboptimal pulls, $\mathbb{E}[s_{i,n}]$. Because the algorithmic \ac{SR} is a fractional statistic, we apply a covariance decomposition to separate the empirical mean from the regularized precision. By bounding the covariance penalty using the asymptotic variance limits established in Theorem~\ref{th:th_dec}, we show that the expected regret is strictly lower-bounded by $\sum \mathbb{E}[s_{i,n}]\big(\Delta_i - \frac{\rho \xi_i \sigma_i^2}{L_0}\big) - \mathcal{O}(1)$. Consequently, bounding the regret reduces entirely to finding an asymptotic lower bound for $\mathbb{E}[s_{i,n}]$.

To lower bound $\mathbb{E}[s_{i,n}]$, we fix the true environment $\mathcal{F}$ where arm $1$ is optimal. For any suboptimal arm $i$, we construct a perturbed alternative environment $\mathcal{F}^i$ where the distribution of arm $i$ is shifted such that it becomes strictly optimal. By Assumption 1 (Information Continuity), we can construct this shift such that the \ac{KL} divergence $I(f_i, \tilde{f}_i)$ remains arbitrarily close to $I(f_i, f_1)$. We define the log-likelihood ratio of the observations from arm $i$ under the two environments as $\gamma$. To bound the probability that the policy under-samples arm $i$ (i.e., $s_{i,n} < \frac{G_1 \log n}{I(f_i, f_1)}$), we partition the probability space on the event $\{\gamma \gtrless G_3 \log n\}$ for a carefully chosen constant $G_3 < 1 - \alpha$:
\begin{enumerate}
    \item \textbf{When the likelihood ratio is large ($\gamma > G_3 \log n$): } The empirical log-likelihood concentrates around its true expectation. By Assumption 2 (Distributional Stability), the log-likelihood exhibits sub-Gaussian tails, and applying the Chernoff bound ensures the probability of this large deviation event decays exponentially fast.
    \item \textbf{When the likelihood ratio is small ($\gamma \leq G_3 \log n$): } The observation paths under $\mathcal{F}$ and $\mathcal{F}^i$ are statistically difficult to distinguish. We apply the Radon-Nikodym theorem to change the probability measure from $\mathcal{F}$ to the perturbed environment $\mathcal{F}^i$. Under $\mathcal{F}^i$, arm $i$ is optimal. By the definition of $\alpha$-consistency, the policy must heavily sample arm $i$; failure to do so occurs with a probability bounded by $\mathcal{O}(n^{\alpha + G_3 - 1})$. Because we selected $G_3 < 1 - \alpha$, this probability vanishes as $n \to \infty$.
\end{enumerate}
Combining these two cases proves that the probability of pulling arm $i$ fewer than $\mathcal{O}(\log n)$ times goes to zero. Applying Markov's inequality to this high-probability event establishes that $\liminf \mathbb{E}[s_{i,n}] / \log n \geq G_1 / I(f_i, f_1)$, which, when substituted into the result of Step 1, completes the proof.

\begin{remark}[Order Optimality of \texttt{SRTS}]
The combination of Lemma \ref{lem:lem_2}, Lemma \ref{lem:lem_3}, and Theorem~\ref{th:lower_bound} formally implies that for any $\alpha$-consistent policy, the cumulative regret under the \ac{SR} objective must grow \textit{at least} logarithmically in $n$. The arm-wise constants are dictated strictly by the \ac{SR}-adjusted gaps and the \ac{KL} information numbers. This perfectly matches the $\mathcal{O}(\log n)$ upper bound we established for \texttt{SRTS} in Theorem~\ref{th:regret_bound}, verifying that our algorithm is fundamentally order-optimal in the model-specific regime.
\end{remark}

\section{On the Gap Between the Upper and Lower Bounds}
\label{sec:tightness}
The finite-time upper and lower bounds derived in the previous sections do not match exactly in their problem-dependent constants. This discrepancy is expected. The upper bound (Theorem~\ref{th:regret_bound}) is obtained by controlling the empirical mean and variance terms separately using concentration inequalities and union bounds, whereas the lower bound (Theorem~\ref{th:lower_bound}) is expressed through the joint \ac{KL} divergence between reward distributions. Consequently, the constants appearing in the two bounds need not coincide at finite horizons.

More precisely, the regret upper bound is obtained by controlling the Gaussian mean error and the Gamma precision error separately, leading to surrogate divergence terms involving the squared mean gap and the function $h(x) = \frac{x - 1 - \log x}{2}.$ In contrast, the lower bound depends on the exact \ac{KL} divergence between Gaussian distributions with different means and variances. As a result, the constants appearing in the two bounds are not identical at finite horizons.

\subsection{Local Gaussian Regime}
To better understand the relationship between these quantities, consider a two-arm environment $K=2$ in which the \ac{SR} gap becomes small. Let $\Lambda_{1,2} = \mu_1 - \mu_2$ and $\Xi_{1,2} = \sigma_1^2 - \sigma_2^2$ denote the mean and variance differences between the arms. For Gaussian rewards, the \ac{KL} divergence governing the lower bound admits the second-order expansion
\begin{align*}
    I(f_2,f_1) = \frac{\Lambda_{1,2}^2}{2\sigma_1^2} + \frac{\Xi_{1,2}^2}{4\sigma_1^4} + \mathcal{O}\!\left(\Lambda_{1,2}^2+\Xi_{1,2}^2\right).
\end{align*}
In the upper bound analysis, the variance component is controlled through $h(x)=\frac{x-1-\log x}{2}$. When the variances of the two arms are close, the Taylor expansion gives
\begin{align*}
    h\!\left(\frac{\sigma_2^2}{\sigma_1^2}\right) = \frac{\Xi_{1,2}^2}{4\sigma_1^4} + \mathcal{O}\!\left(\Xi_{1,2}^2\right).
\end{align*}
Thus, in the regime where the arms are nearly indistinguishable, the surrogate variance term used in the upper bound has the same second-order behavior as the variance component of the Gaussian \ac{KL} divergence. This observation suggests that the difference between the two bounds is primarily due to the decoupling and union-bound arguments used to control the fractional \ac{SR} statistic, rather than a fundamental inefficiency of the exploration strategy.

The comparison above highlights a structural distinction between the two analyses. The lower bound couples the mean and variance parameters through the joint \ac{KL} divergence, whereas the upper bound controls their deviations separately and combines the resulting bounds through a maximum operator. This decoupling is necessary to handle the non-linear fractional structure of the \ac{SR}, but it introduces additional constants into the regret bound. Consequently, the finite-time upper bound should be viewed as a conservative guarantee. The local expansion presented above indicates that, when the reward distributions of competing arms become close, the divergence terms appearing in the upper bound behave consistently with the \ac{KL} divergence governing the lower bound. This provides additional evidence that the regret scaling derived for \texttt{SRTS} captures the correct statistical difficulty of the \ac{SR} bandit problem.

\subsection{Cross-Arm Variance Effects Under Equal \acp{SR}}
An important structural feature of the \ac{SR} objective is that it is a many-to-one mapping: different combinations of means and variances may yield the same \ac{SR}. Consequently, two arms can satisfy $\xi_1=\xi_2$ while still having distinct reward distributions (e.g., $\mu_1\neq\mu_2$ or $\sigma_1^2\neq\sigma_2^2$).

When the parameters differ, the corresponding reward distributions remain statistically distinguishable, and the \ac{KL} divergence between them is strictly positive. In classical bandits, sampling among such arms does not incur regret because the objective depends only on the expected reward of each arm individually. For the \ac{SR} objective, the situation differs. The algorithmic \ac{SR} is computed from the aggregate reward sequence generated by the policy. When the policy alternates between arms with different means, the resulting mixture introduces an additional variance component. As shown in Theorem~\ref{th:regret_bound}, this effect appears through the cross-arm term $\frac{1}{2n^2}\sum_i\sum_j s_i s_j (\mu_i-\mu_j)^2$. This term reflects the increase in global empirical variance resulting from mixing observations from arms with different reward levels. Since the \ac{SR} depends inversely on variance, this effect reduces the empirical objective and contributes to regret.

This mechanism helps explain the gap between the upper and lower bounds. The lower bound characterizes the difficulty of distinguishing individual reward distributions through \ac{KL} divergence, whereas the upper bound additionally captures the variance contribution induced by mixing rewards from arms with different means. As a result, when arms share the same \ac{SR} but have different parameters, the constants in the two bounds need not coincide.

\section{Optimal Partitioning of the Error Budget}
\label{app:error_budget}
In the proof of the upper bound (Appendix~\ref{pro:th2}), we decoupled the concentration event of the \ac{SR} by partitioning the total error margin $\varepsilon$ into a mean-dependent margin $\varepsilon_\mu$ and a variance-dependent margin $\varepsilon_\sigma$, such that $\varepsilon_\mu + \varepsilon_\sigma = \varepsilon$. 

A natural choice would be the symmetric allocation $\varepsilon_\mu = \varepsilon_\sigma = \varepsilon/2$. However, such a choice does not reflect the asymmetric sensitivity of the \ac{SR} to perturbations in the mean and variance parameters. To formalize this, consider the algebraic decomposition of the sampled \ac{SR} gap derived in Lemma \ref{lem:B1}:
\begin{align}
    \hat{\xi}_{i,t} - \xi_1 = \underbrace{\frac{\theta_{i,t} - \mu_1}{L_0 + \frac{\rho}{\tau_{i,t}}}}_{\text{Mean Error Component}} + \underbrace{\frac{\rho \mu_1 \left(\sigma_1^2 - \frac{1}{\tau_{i,t}}\right)}{\left(L_0 + \frac{\rho}{\tau_{i,t}}\right)\left(L_0 + \rho\sigma_1^2\right)}}_{\text{Variance Error Component}}.
\end{align}
When the sampled parameters are close to their true values, the dominant scaling factors of these two terms are approximately $\frac{1}{L_0 + \rho\sigma_i^2}$ and $\frac{\mu_1}{L_0 + \rho\sigma_1^2}$, respectively. These coefficients quantify the first-order sensitivity of the \ac{SR} to deviations in the mean and variance estimates.

If the error budget were divided equally, the component with the larger sensitivity coefficient would dominate the concentration requirement. In the regret analysis, this would increase the number of samples needed to ensure the corresponding deviation remains below its allocated margin, thereby enlarging the leading logarithmic constant in the pull-count bound (Lemma~\ref{lem:B2}). To account for this asymmetry, the error margin is partitioned proportionally to the sensitivities of the two components.  We define the sensitivity weights as
\begin{align}
    w_\mu = \frac{1}{L_0 + \rho \sigma_i^2}, \quad w_\sigma = \frac{\mu_1}{L_0 + \rho \sigma_1^2}.
\end{align}
Normalizing these weights yields the optimal fractional allocation
\begin{align}
    \varepsilon_\mu = \left( \frac{w_\mu}{w_\mu + w_\sigma} \right) \varepsilon, \quad \varepsilon_\sigma = \left( \frac{w_\sigma}{w_\mu + w_\sigma} \right) \varepsilon.
\end{align}
which strictly satisfies the partition constraint $\varepsilon_\mu + \varepsilon_\sigma = \varepsilon$. 

This partition balances the concentration requirements associated with the Gaussian mean sample and the Gamma precision sample. In particular, it avoids a situation in which a single component dominates error control and determines the overall sampling complexity. The allocation also adapts naturally across different risk regimes. For example, when $\rho \to 0$, the \ac{SR} objective reduces to a scaled expected reward. In this regime, the variance sensitivity vanishes ($w_\sigma \to 0$), and the entire error budget is assigned to the mean component ($\varepsilon_\mu \to \varepsilon$). This recovers the concentration structure used in classical \ac{TS} analyses.

\begin{figure*}[t]
\centering
\subfloat[]
{\includegraphics[width = 0.315\textwidth]{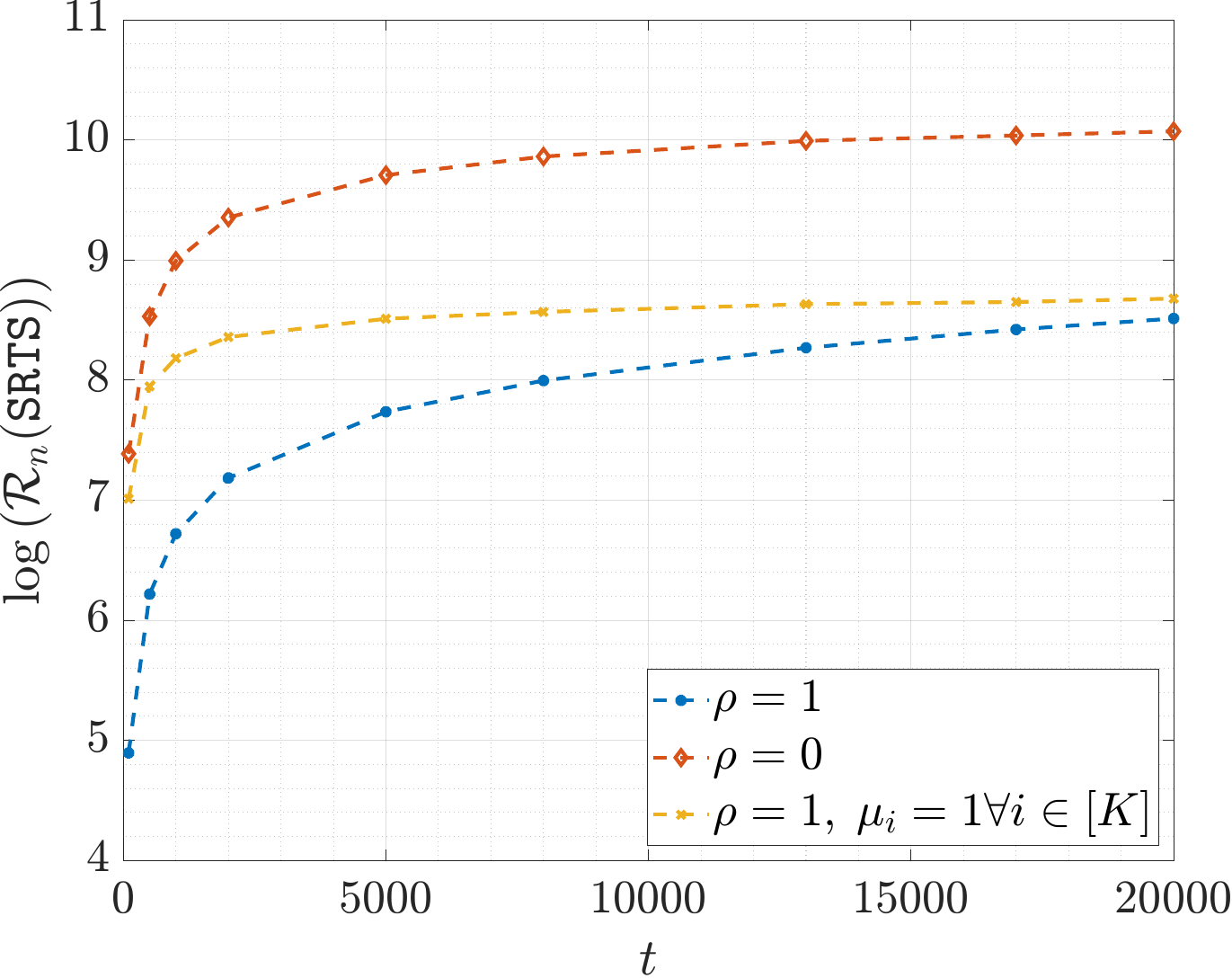}
\label{fig:result_0}}
\hfil
\subfloat[]
{\includegraphics[width = 0.315\textwidth]{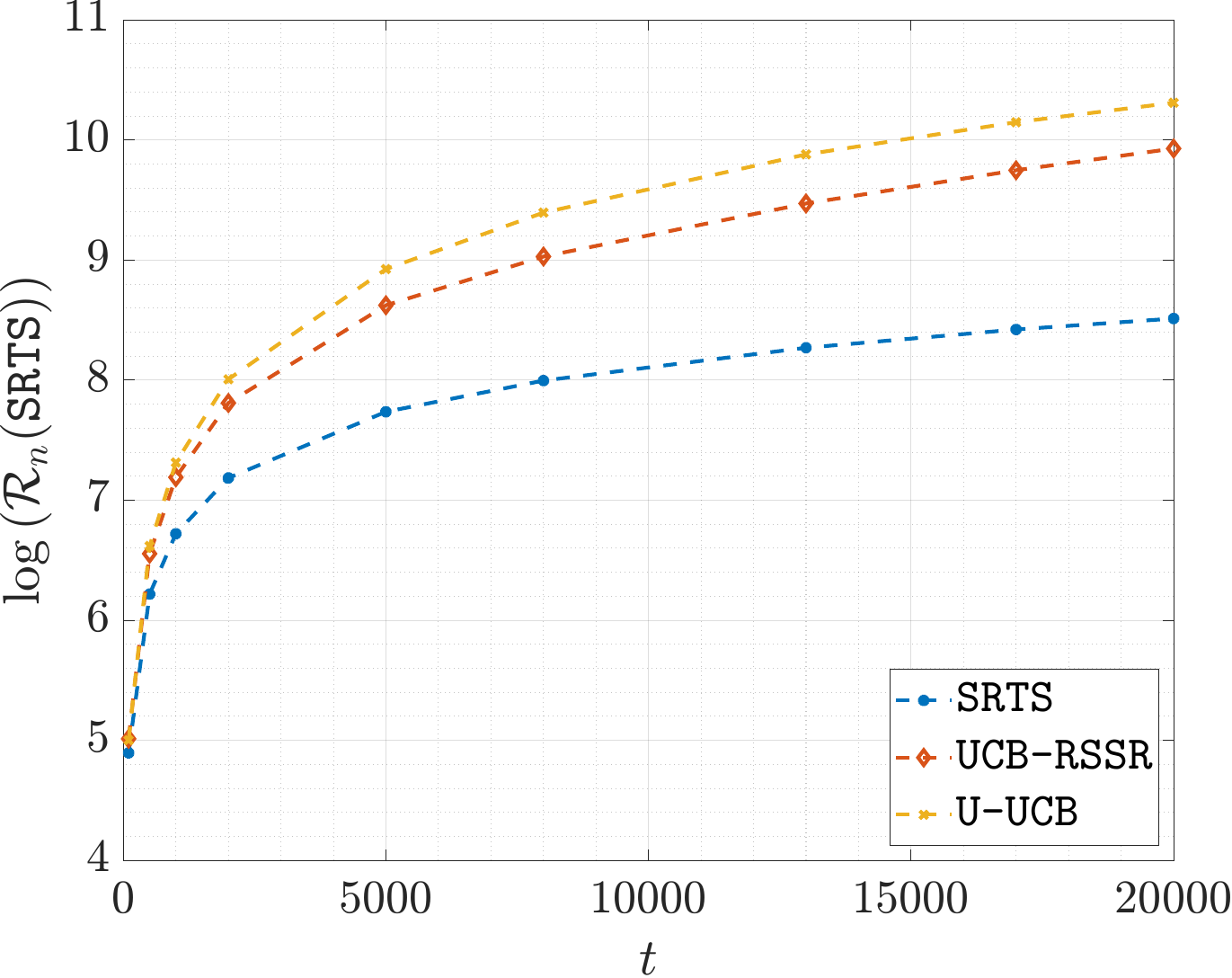}
\label{fig:result_4}}
\hfil
\subfloat[]
{\includegraphics[width = 0.3\textwidth]{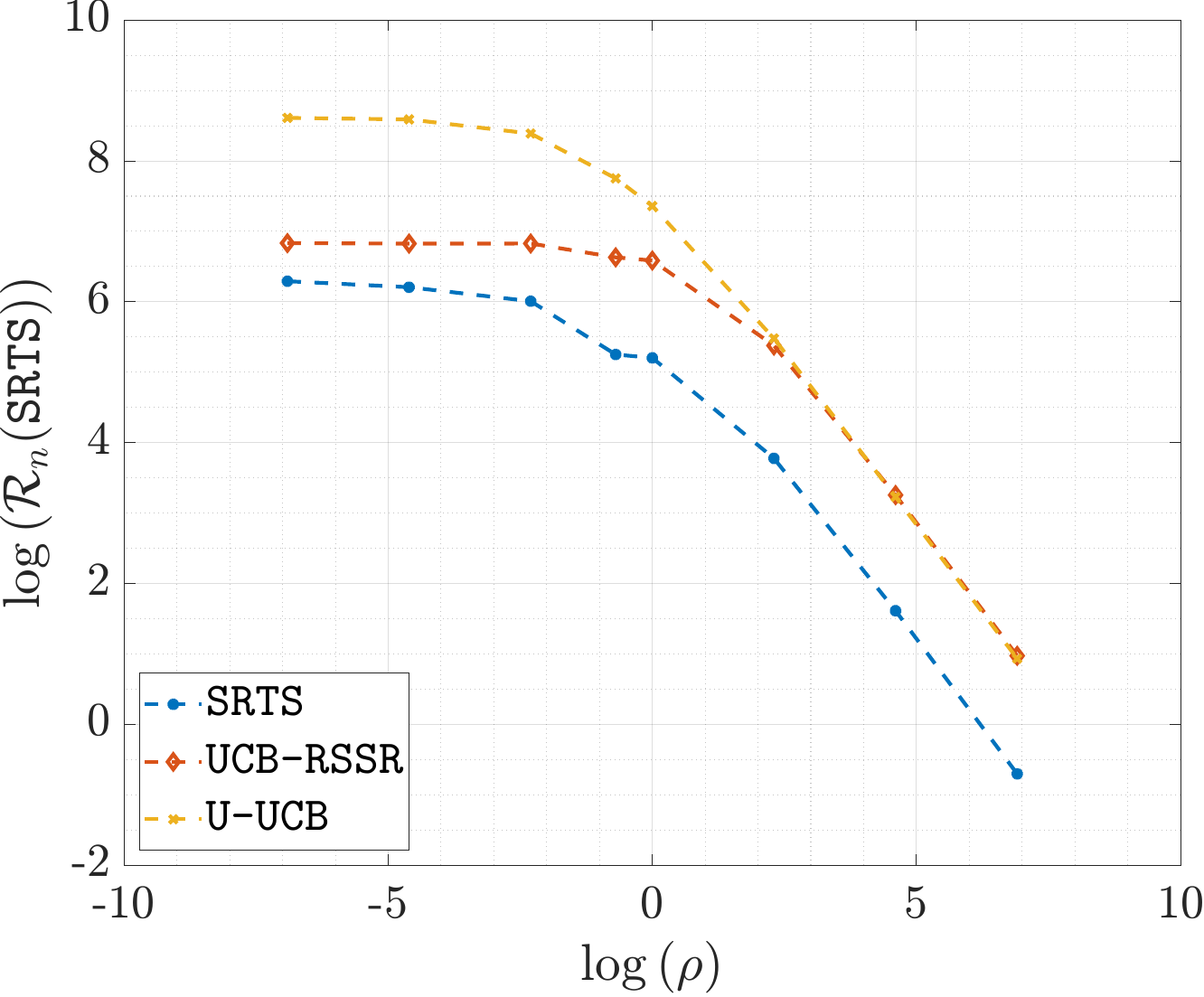}
\label{fig:result_5}}
\caption{(a) Here we have three results: (1) Regret when $\rho = 1$, (2) Regret when $\rho = 0$, and (3) Regret when $\rho = 1$ and $\mu_i = 1$, $i \in \{1,2,\dots,K\}$. (b) Performance of \texttt{SRTS} w.r.t \texttt{UCB-RSSR} and \texttt{U-UCB} for $\rho = 1$. (c) Regret v/s $\rho$ for Gaussian \texttt{SRTS} w.r.t \texttt{UCB-RSSR} and \texttt{U-UCB}.}
\label{fig:r2}
\end{figure*}

\section{Numerical Results}
\label{sec:numerical_results}
This section presents numerical experiments designed to illustrate the empirical performance of the proposed \texttt{SRTS} algorithm and compare it with existing approaches for risk-aware bandit optimization.

\subsection{Experimental Setup and Baselines}
We consider a Gaussian bandit environment with $K=10$ arms. The true mean and variance parameters are 
\begin{align*}
    \mu &= (0.10, 0.27, 0.34, 0.41, 0.43, 0.55, 0.56, 0.67, 0.71, 0.79) \,\text{and} \\
    \sigma^2 &= (0.05, 0.09, 0.19, 0.14, 0.44, 0.24, 0.36, 0.56, 0.49, 0.85),
\end{align*}
which are the same as the experiment from~\cite{sani2012risk}. Unless otherwise specified, the time horizon is $n=20{,}000$. Reported regret values are averaged over $500$ independent Monte Carlo runs. The empirical variability across runs is small relative to the mean and is therefore omitted from the plots for clarity.

We compare \texttt{SRTS} with two algorithms designed for fractional or risk-aware bandit objectives:
\begin{enumerate}
    \item \textbf{\texttt{U-UCB}} \cite{cassel2023general}: an optimistic algorithm for Empirical Distribution Performance Measures (EDPMs) that incorporates a variance regularization term.
    \item \textbf{\texttt{UCB-RSSR}} \cite{khurshid2025optimizing}: a frequentist algorithm for optimizing a regularized squared \ac{SR} using path-dependent concentration bounds derived from McDiarmid’s and Hoeffding’s inequalities.
\end{enumerate}

\subsection{Behavior Across Risk Regimes}
We first examine the behavior of \texttt{SRTS} under different values of the risk parameter $\rho$. Figure~\ref{fig:result_0} reports the cumulative regret on a logarithmic scale for three representative regimes.
\begin{itemize}
    \item \textbf{Balanced risk-return ($\rho = 1$): } In this regime both mean and variance influence the \ac{SR} objective. The regret grows approximately logarithmically with time, consistent with the theoretical guarantees.
    \item \textbf{Reward-maximization regime ($\rho = 0$): } Setting $\rho = 0$ removes the variance penalty and reduces the objective to expected reward maximization. In this case, \texttt{SRTS} behaves similarly to standard \ac{TS} for stochastic bandits.
    \item \textbf{Variance-dominated regime: } To isolate the role of variance, we consider an environment where all arms share the same mean $\mu_i = 1$, while retaining their original variances. In this setting, the objective favors arms with smaller variance, and the algorithm concentrates on the lowest-variance arm over time.
\end{itemize}

\subsection{Comparison With Existing Algorithms}
We next compare \texttt{SRTS} with the baseline algorithms. Figure~\ref{fig:result_4} shows cumulative regret (log scale) as a function of time in the balanced setting $\rho=1$. In these experiments, \texttt{SRTS} achieves lower regret than both \texttt{UCB-RSSR} and \texttt{U-UCB}. One possible explanation is that the Bayesian posterior sampling mechanism adapts to the joint uncertainty in the mean and variance parameters, while frequentist algorithms rely on conservative concentration bounds for the fractional objective.

Finally, Figure~\ref{fig:result_5} examines performance as the risk parameter varies over $\rho \in [10^{-3},10^{3}]$. Across all algorithms, the cumulative regret decreases as $\rho$ increases, because a higher $\rho$ imposes a heavier penalty on variance, thereby widening the sub-optimality gap between the arms. Across this range, \texttt{SRTS} consistently achieves lower regret than the baseline methods, indicating stable performance across a wide spectrum of risk sensitivities.

\section{Conclusion}
\label{sec:conclusion}
This paper studies \ac{SR} maximization in stochastic multi-armed bandits. Unlike classical reward maximization, the \ac{SR} objective is a fractional statistic that depends on both the mean and variance of the reward distribution. This coupling introduces additional analytical challenges compared with standard bandit objectives. We proposed the \texttt{SRTS} algorithm, a \ac{TS} policy based on the Normal-Gamma conjugate posterior for Gaussian rewards. To analyze its performance, we developed a regret decomposition tailored to the fractional \ac{SR} objective and introduced a decoupling framework that separates the contributions of the mean and variance estimation errors. This analysis yields a distribution-dependent $\mathcal{O}(\log n)$ upper bound on the expected regret. A complementary change-of-measure argument establishes a corresponding lower bound, showing that the algorithm achieves the correct logarithmic order.

The proposed framework applies across a broad range of risk sensitivities governed by the parameter $\rho$. In particular, the formulation recovers classical \ac{TS} behavior as $\rho \to 0$, while maintaining stability in regimes where variance plays a dominant role. Numerical experiments illustrate the algorithm's behavior across different risk regimes and demonstrate competitive performance relative to existing methods for risk-aware bandit optimization. Overall, the results provide a theoretical and algorithmic framework for sequential decision-making under \ac{SR} objectives and illustrate how Bayesian posterior sampling can be adapted to fractional, risk-sensitive bandit problems.

\bibliography{references.bib}
\bibliographystyle{ieeetr}

\newpage

\appendices
\section{Proof of Lemma~\ref{lem:efron_s}}
\label{pro:le1}
\begin{proof}
Let $Z_1, Z_2, \dots, Z_n$ be independent random variables taking values in some space $\mathcal{Z}$, and let $f\left(Z_1, \dots, Z_n\right)$ be a square-integrable function. Let $Z_1^\prime, \dots, Z_n^\prime$ be independent copies of $Z_1, \dots, Z_n$, and define: 
\begin{align*}
    f^{(i)} = f\left(Z_1, \dots, Z_{i-1}, Z_i^\prime, Z_{i+1}, \dots, Z_n\right),
\end{align*}
then the Efron-Stein inequality states: $\mathbb{V}\left[f\left(Z_1, \dots, Z_n\right)\right] \leq \frac{1}{2} \sum_{i=1}^n \mathbb{E}\left[\left(f - f^{(i)}\right)^2\right]$. 

For $s_{i,n} = f\left(X_1, \dots, X_n\right) = \sum_{t=1}^n \mathbb{I} \left(X_t = i\right)$, where $X_t = i$ is the arm $i$ pulled at time $t$, we apply Efron-Stein inequality on $\mathbb{V}\left[s_{i,n}\right]$. If we change $X_t$ to $X_t^\prime$ (leaving other $X_j$ fixed), the maximum change in $s_{i,n}$ is $\left|f(X) - f(X^{(t)})\right| \leq 1$, because at most one count $\mathbb{I}(X_t = i)$ can flip (from $0$ to $1$ or vice versa). Substituting into Efron-Stein inequality, we get,
\begin{align*}
    \mathbb{V}[s_{i,n}] \leq \frac{1}{2} \sum_{t=1}^n \mathbb{E}\left[\left(s_{i,n} - s_{i,n}^{(t)} \right)^2\right] \leq \frac{1}{2} \sum_{t=1}^n 1 = \frac{n}{2}.
\end{align*}
Thus, the variance of the arm pull count $s_{i,n}$ is at most linear in $n$.
\end{proof}

\section{Proof of Theorem~\ref{th:th_dec}}
\label{pro:th1}
\begin{proof}
The proof of Theorem~\ref{th:th_dec} relies on decoupling the expectation of the algorithmic \ac{SR}. Because the \ac{SR} is a fractional statistic, the empirical mean and empirical variance are dependent. We proceed in three steps: first, we establish a deterministic lower bound on the denominator via Jensen's inequality, which introduces a covariance penalty. Second, we bound the variances of the numerator and the reciprocal of the denominator separately (Lemmas \ref{le:le_2} and \ref{le:le_3}). Finally, we control the covariance penalty using the Cauchy-Schwarz inequality.

We begin by expanding the expected regret:
\begin{align}
    &\mathbb{E} \left[\mathcal{R}_n(\pi)\right] = n\mathbb{E}\left[\xi_1 - \tbar{\xi}_n(\pi)\right] \nonumber\\
    &= n\mathbb{E} \left[\frac{1}{n} \sum_{i=1}^K s_{i,n} \frac{\mu_1}{L_0 + \rho\sigma^2_1} - \frac{ \frac{1}{n} \sum_{i=1}^K s_{i,n} \tbar{\mu}_{i,s_{i,n}}}{L_0 + \rho\left(\frac{1}{n} \sum_{i=1}^K s_{i,n} \tbar{\sigma}_{i,s_{i,n}}^2 + \frac{1}{2n^2} \sum_{i=1}^K \sum_{j\neq i} s_{i,n} s_{j,n} \left(\tbar{\mu}_{i,s_{i,n}} - \tbar{\mu}_{j,s_{j,n}} \right)^2\right)} \right] \nonumber\\
    &= n \left(\frac{1}{n} \sum_{i=1}^K \mathbb{E}[s_{i,n}] \frac{\mu_1}{L_0 + \rho\sigma^2_1} - \mathbb{E} \left[\frac{ \frac{1}{n} \sum_{i=1}^K s_{i,n} \tbar{\mu}_{i,s_{i,n}}}{L_0 + \rho\left(\frac{1}{n} \sum_{i=1}^K s_{i,n} \tbar{\sigma}_{i,s_{i,n}}^2 + \frac{1}{2n^2} \sum_{i=1}^K \sum_{j\neq i} s_{i,n} s_{j,n} \left(\tbar{\mu}_{i,s_{i,n}} - \tbar{\mu}_{j,s_{j,n}} \right)^2\right)} \right]\right) \nonumber \\
    &= n \Vast(\frac{1}{n} \sum_{i=1}^K \mathbb{E}[s_{i,n}] \frac{\mu_1}{L_0 + \rho\sigma^2_1} - \vast(\mathbb{E} \left[\frac{1}{n} \sum_{i=1}^K s_{i,n} \tbar{\mu}_{i,s_{i,n}}\right] \times \nonumber\\ 
    &\hspace*{2cm} \mathbb{E} \left[\frac{1}{L_0 + \rho\left(\frac{1}{n} \sum_{i=1}^K s_{i,n} \tbar{\sigma}_{i,s_{i,n}}^2 + \frac{1}{2n^2} \sum_{i=1}^K \sum_{j\neq i} s_{i,n} s_{j,n} \left(\tbar{\mu}_{i,s_{i,n}} - \tbar{\mu}_{j,s_{j,n}} \right)^2\right)}\right] + \nonumber\\
    &\hspace*{1.5cm} \text{Cov} \left(\frac{1}{n} \sum_{i=1}^K s_{i,n} \tbar{\mu}_{i,s_{i,n}}, \frac{1}{L_0 + \rho\left(\frac{1}{n} \sum_{i=1}^K s_{i,n} \tbar{\sigma}_{i,s_{i,n}}^2 + \frac{1}{2n^2} \sum_{i=1}^K \sum_{j\neq i} s_{i,n} s_{j,n} \left(\tbar{\mu}_{i,s_{i,n}} - \tbar{\mu}_{j,s_{j,n}} \right)^2\right)}\right) \vast) \Vast) \nonumber \\
    &\overset{(a)}{\leq} n \Vast(\frac{1}{n} \sum_{i=1}^K \mathbb{E}[s_{i,n}] \frac{\mu_1}{L_0 + \rho\sigma^2_1} - \frac{\frac{1}{n} \sum_{i=1}^K \mu_i \mathbb{E}[s_{i,n}]}{\mathbb{E} \left[L_0 + \rho\left(\frac{1}{n} \sum_{i=1}^K s_{i,n} \tbar{\sigma}_{i,s_{i,n}}^2 + \frac{1}{2n^2} \sum_{i=1}^K \sum_{j\neq i} s_{i,n} s_{j,n} \left(\tbar{\mu}_{i,s_{i,n}} - \tbar{\mu}_{j,s_{j,n}} \right)^2\right)\right]} - \nonumber\\
    &\hspace*{1.5cm} \sqrt{\mathbb{V}\left[\frac{1}{n} \sum_{i=1}^K s_{i,n} \tbar{\mu}_{i,s_{i,n}}\right] \times \mathbb{V} \left[\frac{1}{L_0 + \rho\left(\frac{1}{n} \sum_{i=1}^K s_{i,n} \tbar{\sigma}_{i,s_{i,n}}^2 + \frac{1}{2n^2} \sum_{i=1}^K \sum_{j\neq i} s_{i,n} s_{j,n} \left(\tbar{\mu}_{i,s_{i,n}} - \tbar{\mu}_{j,s_{j,n}} \right)^2\right)}\right]}  \Vast).
    \label{eq:eq_10}
\end{align}
Step (a) follows from Jensen inequality as $L_0 + \rho\left(\frac{1}{n} \sum_{i=1}^K s_{i,n} \tbar{\sigma}_{i,s_{i,n}}^2 + \frac{1}{2n^2} \sum_{i=1}^K \sum_{j\neq i} s_{i,n} s_{j,n} \left(\tbar{\mu}_{i,s_{i,n}} - \tbar{\mu}_{j,s_{j,n}} \right)^2\right)$ is strictly positive. To systematically evaluate the bound established in~\eqref{eq:eq_10}, the remainder of the proof reduces to analyzing three distinct statistical quantities. Let $D$ denote the regularized algorithmic risk, defined as the denominator of the algorithmic empirical \ac{SR}:
\begin{align*}
    D = L_0 + \rho\left(\frac{1}{n} \sum_{i=1}^K s_{i,n} \tbar{\sigma}_{i,s_{i,n}}^2 + \frac{1}{2n^2} \sum_{i=1}^K \sum_{j\neq i} s_{i,n} s_{j,n} \left(\tbar{\mu}_{i,s_{i,n}} - \tbar{\mu}_{j,s_{j,n}} \right)^2\right).
\end{align*}
The subsequent analysis is then structured around the following auxiliary lemmas:
\begin{itemize}
    \item[\textbf{1.}] \textbf{The expectation of the regularized algorithmic risk (Lemma~\ref{le:le_1}): }
    \begin{align*}
        \mathbb{E} \left[L_0 + \rho\left(\frac{1}{n} \sum_{i=1}^K s_{i,n} \tbar{\sigma}_{i,s_{i,n}}^2 + \frac{1}{2n^2} \sum_{i=1}^K \sum_{j\neq i} s_{i,n} s_{j,n} \left(\tbar{\mu}_{i,s_{i,n}} - \tbar{\mu}_{j,s_{j,n}} \right)^2\right)\right] \leq L_0 +  \frac{\rho}{2} \Lambda_{\max}^2 + \rho \sum_{i=1}^K \sigma_i^2.
    \end{align*}
    where $\Lambda_{\max} = \max \Lambda_{i,j}^2\, \forall i, j \in \{1, 2, \dots, K\}$. The complete proof of Lemma~\ref{le:le_1} is present in Appendix~\ref{sub:le_1}.
    \item[\textbf{2.}] \textbf{The variance of the algorithmic empirical mean (Lemma~\ref{le:le_2}): }
    \begin{align*}
        \mathbb{V}\left[\frac{1}{n} \sum_{i=1}^K s_{i,n} \tbar{\mu}_{i,s_{i,n}}\right] \leq \frac{1}{n} \sum_{i=1}^K \sigma_i^2 + \frac{1}{2n} \sum_{i=1}^K \mu_i^2 \sim \mathcal{O} \left(\frac{Q_1}{n}\right).
    \end{align*}
    where $Q_1$ is a constant independent of $n$ and $s_{i,n}$. The complete proof of Lemma~\ref{le:le_2} is present in Appendix~\ref{sub:le_2}.
    \item[\textbf{3.}] \textbf{The variance of the reciprocal of the regularized algorithmic risk} \textbf{(Lemma~\ref{le:le_3}): }
    \begin{align*}
        \mathbb{V} \left[\frac{1}{L_0 + \rho\left(\frac{1}{n} \sum_{i=1}^K s_{i,n} \tbar{\sigma}_{i,s_{i,n}}^2 + \frac{1}{2n^2} \sum_{i=1}^K \sum_{j\neq i} s_{i,n} s_{j,n} \left(\tbar{\mu}_{i,s_{i,n}} - \tbar{\mu}_{j,s_{j,n}} \right)^2\right)}\right] \lesssim \mathcal{O} \left(\frac{Q_2}{n}\right).
    \end{align*}
    where $Q_2$ is a constant independent of $n$ and $s_{i,n}$. The complete proof of Lemma~\ref{le:le_3} is present in Appendix~\ref{sub:le_3}.
\end{itemize}
Substituting the bounds established in Lemmas \ref{le:le_1}, \ref{le:le_2}, and \ref{le:le_3} back into~\eqref{eq:eq_10}, the expected regret becomes bounded by,
\begin{align*}
    \mathbb{E} \left[\mathcal{R}_n(\pi)\right] \leq n \left(\frac{1}{n} \sum_{i=1}^K \mathbb{E}[s_{i,n}] \frac{\mu_1}{L_0 + \rho\sigma^2_1} - \frac{\frac{1}{n} \sum_{i=1}^K \mu_i \mathbb{E}[s_{i,n}]}{L_0 +  \frac{\rho}{2} \Lambda_{\max}^2 + \rho \sum_{i=1}^K \sigma_i^2} + \sqrt{\frac{Q_1}{n} \times \frac{Q_2}{n}}\right).
\end{align*}
Let $X_1 = \sqrt{Q_1 Q_2}$ absorb the finite asymptotic variance penalty, and let $E = \frac{\rho}{2} \Lambda_{\max}^2 + \rho \sum_{i=1}^K \sigma_i^2$, we get
\begin{align}
    \mathbb{E} \left[\mathcal{R}_n(\pi)\right] \leq \sum_{i=1}^K \mathbb{E}[s_{i,n}] \xi_1 - \frac{\sum_{i=1}^K \mu_i \mathbb{E}[s_{i,n}]}{L_0 + E} + X_1.
\end{align}
To naturally extract the algorithmic penalty, we utilize the intrinsic definition of the \ac{SR}. Substituting $\mu_i = \xi_i(L_0 + \rho\sigma_i^2)$ and $\Delta_i = \xi_1 - \xi_i$ this into the fraction yields,
\begin{align*}
    \mathbb{E} \left[\mathcal{R}_n(\pi)\right] &\leq \sum_{i=1}^K \mathbb{E}[s_{i,n}] \left( \Delta_i + \xi_i - \xi_i \frac{L_0 + \rho\sigma_i^2}{L_0 + E} \right) + X_1 \\
    &= \sum_{i=1}^K \mathbb{E}[s_{i,n}] \left[ \Delta_i + \xi_i \left( 1 - \frac{L_0 + \rho\sigma_i^2}{L_0 + E} \right) \right] + X_1 \\
    &= \sum_{i=1}^K \mathbb{E}[s_{i,n}] \left[ \Delta_i + \xi_i \left( \frac{L_0 + E - (L_0 + \rho\sigma_i^2)}{L_0 + E} \right) \right] + X_1 \\
    &\leq \sum_{i=1}^K \mathbb{E}[s_{i,n}] \left( \Delta_i + \frac{\xi_i \rho\left(\frac{\Lambda_{\max}^2}{2} + \sum_{j \neq i} \sigma_j^2 \right)}{L_0 +  \frac{\rho}{2} \Lambda_{\max}^2 + \rho \sum_{j=1}^K \sigma_j^2} \right) + X_1.
\end{align*}
\end{proof}

\section{Auxiliary Lemma used in Theorem~\ref{th:th_dec}}
\subsection{expectation of the regularized algorithmic risk}
\label{sub:le_1}
\begin{lemma}
\label{le:le_1}
Let $D$ denote the regularized algorithmic risk. Its expectation is bounded strictly from above by:
\begin{align*}
    \mathbb{E} \left[L_0 + \rho\left(\frac{1}{n} \sum_{i=1}^K s_{i,n} \tbar{\sigma}_{i,s_{i,n}}^2 + \frac{1}{2n^2} \sum_{i=1}^K \sum_{j\neq i} s_{i,n} s_{j,n} \left(\tbar{\mu}_{i,s_{i,n}} - \tbar{\mu}_{j,s_{j,n}} \right)^2\right)\right] \leq L_0 +  \frac{\rho}{2} \Lambda_{\max}^2 + \rho \sum_{i=1}^K \sigma_i^2,
\end{align*}
where $\Lambda_{\max} = \max_{i, j \in [K]} |\mu_i - \mu_j|$ is the maximum pairwise mean difference.
\end{lemma}

\begin{proof}
We begin by leveraging the linearity of expectation to separate the regularization constant from the empirical variance components. Because the empirical variance and mean estimates depend heavily on the random pull count, we decouple them by invoking the Law of Total Expectation, conditioning on the pull counts $\mathbf{s}_n = \{s_{1,n}, \dots, s_{K,n}\}$. 

Substituting the known expectations of the empirical estimators conditioned on the pull counts, we have:
\begin{align*}
    &\mathbb{E} \left[L_0 + \rho\left(\frac{1}{n} \sum_{i=1}^K s_{i,n} \tbar{\sigma}_{i,s_{i,n}}^2 + \frac{1}{2n^2} \sum_{i=1}^K \sum_{j\neq i} s_{i,n} s_{j,n} \left(\tbar{\mu}_{i,s_{i,n}} - \tbar{\mu}_{j,s_{j,n}} \right)^2\right)\right] \\
    &= L_0 + \rho\mathbb{E} \left[\frac{1}{n} \sum_{i=1}^K s_{i,n} \tbar{\sigma}_{i,s_{i,n}}^2 + \frac{1}{2n^2} \sum_{i=1}^K \sum_{j\neq i} s_{i,n} s_{j,n} \left(\tbar{\mu}_{i,s_{i,n}} - \tbar{\mu}_{j,s_{j,n}} \right)^2\right] \\
    &= L_0 + \rho \mathbb{E} \left[\mathbb{E} \left[\frac{1}{n} \sum_{i=1}^K s_{i,n} \tbar{\sigma}_{i,s_{i,n}}^2 \Bigg| \mathbf{s}_n \right]\right] + \rho \mathbb{E} \left[\mathbb{E} \left[\frac{1}{2n^2} \sum_{i=1}^K \sum_{j\neq i} s_{i,n} s_{j,n} \left(\tbar{\mu}_{i,s_{i,n}} - \tbar{\mu}_{j,s_{j,n}} \right)^2 \Bigg| \mathbf{s}_n \right]\right] \\
    &= L_0 + \rho \mathbb{E} \left[\frac{1}{n} \sum_{i=1}^K s_{i,n} \mathbb{E} \left[\tbar{\sigma}_{i,s_{i,n}}^2 \bigg| \mathbf{s}_n \right]\right] + \rho \mathbb{E} \left[\frac{1}{2n^2} \sum_{i=1}^K \sum_{j\neq i} s_{i,n} s_{j,n} \mathbb{E} \left[\left(\tbar{\mu}_{i,s_{i,n}} - \tbar{\mu}_{j,s_{j,n}} \right)^2 \bigg| \mathbf{s}_n \right]\right] \\
    &= L_0 + \rho \mathbb{E} \left[\frac{1}{n} \sum_{i=1}^K \sigma^2_i s_{i,n} \right] + \rho \mathbb{E} \left[\frac{1}{2n^2} \sum_{i=1}^K \sum_{j\neq i} s_{i,n} s_{j,n} \left(\Lambda_{i,j}^2 + \frac{\sigma_i^2}{s_{i,n}} + \frac{\sigma_j^2}{s_{j,n}}\right) \right].
\end{align*}
To obtain a global upper bound, we bound the pairwise mean differences by their maximum, $\Lambda_{i,j}^2 \leq \Lambda_{\max}^2$, and resolve the allocation sums. Utilizing the identity $\sum_{i=1}^K s_{i,n} = n$, we simplify the cross-terms:
\begin{align*}
    &\leq L_0 +  \rho \frac{1}{n} \sum_{i=1}^K \sigma^2_i \mathbb{E} \left[s_{i,n} \right] + \rho \Lambda_{\max}^2\mathbb{E}\left[\frac{1}{2n^2} \sum_{i=1}^K \sum_{j\neq i} s_{i,n} s_{j,n}\right] + \rho \frac{1}{2n} \sum_{i=1}^K \sum_{j\neq i} \mathbb{E}\left[\sigma_i^2 s_{j,n} + \sigma_j^2 s_{i,n}\right] \\
    &= L_0 +  \rho \frac{1}{n} \sum_{i=1}^K \sigma^2_i \mathbb{E} \left[s_{i,n} \right] + \rho \Lambda_{\max}^2\mathbb{E}\left[\frac{1}{2n^2} \left(\left(\sum_{i=1}^K s_{i,n}\right)^2 - \sum_{i=1}^K s_{i,n}^2 \right) \right] + \frac{\rho}{2} \left(2 \sum_{i=1}^K \sigma_i^2 - \frac{2}{n}\sum_{i=1}^K \sigma_i^2 \mathbb{E} [s_{i,n}]\right) \\
    &\leq L_0 +  \frac{\rho}{2} \Lambda_{\max}^2 + \rho \sum_{i=1}^K \sigma_i^2.
\end{align*}
This completes the proof.
\end{proof}

\subsection{Variance of the Algorithmic Empirical Mean}
\label{sub:le_2}
\begin{lemma}
\label{le:le_2}
The variance of the algorithmic empirical mean under policy $\pi$ is bounded by:
\begin{align*}
    \mathbb{V}\left[\frac{1}{n} \sum_{i=1}^K s_{i,n} \tbar{\mu}_{i,s_{i,n}}\right] \leq \frac{1}{n} \sum_{i=1}^K \sigma_i^2 + \frac{1}{2n} \sum_{i=1}^K \mu_i^2 \sim \mathcal{O} \left(\frac{Q_1}{n}\right),
\end{align*}
where $Q_1 > 0$ is a problem-dependent constant independent of $n$ and the pull count $s_{i,n}$.
\end{lemma}

\begin{proof}
To isolate the variability intrinsic to the reward distributions from the variability induced by the algorithmic sampling policy, we apply the law of total variance. We condition on the random pull count vector $\mathbf{s}_n = \{s_{1,n}, \dots, s_{K,n}\}$:
\begin{align*}
    \mathbb{V}\left[\frac{1}{n} \sum_{i=1}^K s_{i,n} \tbar{\mu}_{i,s_{i,n}}\right] &= \mathbb{E}\left[\mathbb{V}\left[\frac{1}{n} \sum_{i=1}^K s_{i,n} \tbar{\mu}_{i,s_{i,n}} \Bigg| \mathbf{s}_n \right]\right] + \mathbb{V}\left[\mathbb{E}\left[\frac{1}{n} \sum_{i=1}^K s_{i,n} \tbar{\mu}_{i,s_{i,n}} \Bigg| \mathbf{s}_n \right]\right].
\end{align*}

Because the empirical means of different arms are conditionally independent given the pull counts, the cross-arm variances in the first term evaluate to zero. Substituting the true moments for the conditionally independent arms yields:
\begin{align*}
   \mathbb{V}\left[\frac{1}{n} \sum_{i=1}^K s_{i,n} \tbar{\mu}_{i,s_{i,n}}\right] &= \mathbb{E}\left[\frac{1}{n^2} \sum_{i=1}^K s_{i,n}^2 \cdot \frac{\sigma_i^2}{s_{i,n}}\right] +  \mathbb{V}\left[\frac{1}{n} \sum_{i=1}^K s_{i,n} \mu_i\right] \\
    &= \frac{1}{n^2} \sum_{i=1}^K \sigma_i^2 \mathbb{E}[s_{i,n}] + \frac{1}{n^2} \sum_{i=1}^K \mu_i^2 \mathbb{V}[s_{i,n}] + \frac{1}{n^2} \sum_{i \neq j} \mu_i \mu_j \text{Cov}(s_{i,n}, s_{j,n}).
\end{align*}
To upper bound this expression, we observe two structural properties of the pull count. First, as the total number of pulls is deterministically constrained to $\sum_{i=1}^K s_{i,n} = n$, the pull counts $s_{i,n}$ and $s_{j,n}$ are negatively correlated, rendering the covariance terms non-positive. Consequently, dropping the covariance sum strictly bounds the expression from above. Second, we strictly bound the marginal expectations and variances of the pull count. We trivially have $\mathbb{E}[s_{i,n}] \leq n$, and from Lemma \ref{lem:efron_s}, the variance of the pull counts is bounded by $\mathbb{V}[s_{i,n}] \leq n/2$. Applying these bounds sequentially gives:
\begin{align*}
    \mathbb{V}\left[\frac{1}{n} \sum_{i=1}^K s_{i,n} \tbar{\mu}_{i,s_{i,n}}\right] &\leq \frac{1}{n^2} \sum_{i=1}^K \sigma_i^2 \mathbb{E}[s_{i,n}] + \frac{1}{n^2} \sum_{i=1}^K \mu_i^2 \mathbb{V}[s_{i,n}] \\
    &\leq \frac{1}{n^2} \sum_{i=1}^K \sigma_i^2 (n) + \frac{1}{n^2} \sum_{i=1}^K \mu_i^2 \left(\frac{n}{2}\right) \\
    &= \frac{1}{n} \sum_{i=1}^K \sigma_i^2 + \frac{1}{2n} \sum_{i=1}^K \mu_i^2.
\end{align*}
This final expression confirms that the variance of the algorithmic empirical mean decays asymptotically as $\mathcal{O}\left(\frac{Q_1}{n}\right)$, where $Q_1 = \sum_{i=1}^K \sigma_i^2 + \frac{1}{2} \sum_{i=1}^K \mu_i^2$ is a constant entirely independent of $n$ and the algorithmic policy.
\end{proof}

\subsection{Variance of the Reciprocal of the Regularized Algorithmic Risk}
\label{sub:le_3}
\begin{lemma}
\label{le:le_3}
Let $D$ denote the regularized algorithmic risk. The variance of its reciprocal is strictly bounded from above such that:
\begin{align*}
    \mathbb{V} \left[\frac{1}{D}\right] = \mathbb{V} \left[\frac{1}{L_0 + \rho\left(\frac{1}{n} \sum_{i=1}^K s_{i,n} \tbar{\sigma}_{i,s_{i,n}}^2 + \frac{1}{2n^2} \sum_{i=1}^K \sum_{j\neq i} s_{i,n} s_{j,n} \left(\tbar{\mu}_{i,s_{i,n}} - \tbar{\mu}_{j,s_{j,n}} \right)^2\right)}\right] \leq \mathcal{O} \left(\frac{Q_2}{n}\right),
\end{align*}
where $Q_2 > 0$ is a finite problem-dependent constant independent of $n$ and the pull count $\mathbf{s}_n$.
\end{lemma}

\begin{proof}
To tractably bound this highly non-linear variance term, we decompose the algorithmic risk $D$ into two constituent empirical components: the aggregate empirical variance $V$, and the cross-arm switching penalty $W$. Let:
\begin{align*}
 D = L_0 + \rho(V + W), \quad \text{where} \quad V = \frac{1}{n}\sum_{i=1}^K s_{i,n} \tbar{\sigma}_{i,s_{i,n}}^2, \quad \text{and} \quad W = \frac{1}{2n^2}\sum_{i \neq j} s_{i,n}s_{j,n}(\tbar{\mu}_{i,s_{i,n}} - \tbar{\mu}_{j,s_{j,n}})^2.
\end{align*}
The proof proceeds in three stages: establishing the Lipschitz continuity of the reciprocal mapping, bounding the variance of $V$ and $W$ independently, and synthesizing the bounds.

\textbf{Stage 1: Lipschitz Continuity of the Reciprocal Transform} \\
Consider the mapping $f(D) = 1/D$. Because the regularization parameter ensures $D \geq L_0 > 0$ almost surely, the derivative is uniformly bounded as $|f^\prime (D)| = |-1/D^2| \leq 1/L_0^2$. By the mean value theorem, for any $D, D^\prime \geq L_0$:
\begin{align*}
    |f(D) - f(D^\prime)| \leq \left(\sup_{\xi \geq L_0} |f^\prime(\xi)|\right) |D - D^\prime| \leq \frac{1}{L_0^2}|D - D^\prime|.
\end{align*} 
For any Lipschitz function $f$ and random variable $D$, the variance satisfies $\mathbb{V}[f(D)] \leq \mathbb{E}[(f(D) - f(\mathbb{E}[D]))^2]$. Applying the Lipschitz bound yields:
\begin{align}
    \mathbb{V}\left[\frac{1}{D}\right] \leq \frac{1}{L_0^4}\mathbb{V}[D] = \frac{\rho^2}{L_0^4}\mathbb{V}[V+W].
    \label{eq:eq_l_1}
\end{align}
Consequently, bounding the reciprocal variance reduces strictly to bounding $\mathbb{V}[V+W] = \mathbb{V}[V] + \mathbb{V}[W] + 2 \text{Cov}(V, W)$.

\textbf{Stage 2: Bounding the Variance of the Empirical Components} \\
\textit{Part A (Bound on $\mathbb{V}[V]$):} We apply the law of total variance conditioned on the pull count vector $\mathbf{s}_n$. Given $\mathbf{s}_n$, the empirical variances are conditionally independent across arms. Letting $\kappa_i$ denote the fourth central moment (kurtosis) of arm $i$, the conditional variance expands as:
\begin{align*}
    \mathbb{V}\left[V \big| \mathbf{s}_n\right] &= \frac{1}{n^2} \sum_{i=1}^K s_{i,n}^2 \mathbb{V}\left[\tbar{\sigma}_{i,s_{i,n}}^2 \Big| \mathbf{s}_n\right] = \frac{1}{n^2} \sum_{i=1}^K s_{i,n}^2 \left(\frac{\kappa_i}{s_{i,n}} - \sigma_i^4\left(\frac{s_{i,n}-3}{s_{i,n}(s_{i,n}-1)}\right)\right) \\
    &\leq \frac{1}{n^2} \sum_{i=1}^K s_{i,n} \kappa_i + \frac{1}{n^2} \sum_{i=1}^K \frac{2s_{i,n} \sigma_i^4}{s_{i,n}-1}.
\end{align*}
Taking the expectation over $\mathbf{s}_n$ and combining it with the variance of the conditional expectation, we bound the total variance. Utilizing Lemma\ref{lem:efron_s} and noting the non-positive covariance of the allocation constraints, we establish:
\begin{align}
    \mathbb{V}[V] &\leq \mathbb{E}\left[\mathbb{V}\left[V \big| \mathbf{s}_n\right]\right] + \mathbb{V}\left[\mathbb{E}\left[V \big| \mathbf{s}_n\right]\right] \nonumber \\
    &\leq \frac{1}{n^2} \sum_{i=1}^K \kappa_i\mathbb{E}[s_{i,n}] + \frac{1}{n^2} \sum_{i=1}^K 2\sigma_i^4 \mathbb{E}\left[\frac{s_{i,n}}{s_{i,n}-1}\right] + \frac{1}{n^2} \sum_{i=1}^K \sigma_i^4 \mathbb{V}[s_{i,n}] \nonumber \\
    &\leq \frac{1}{n} \sum_{i=1}^K \kappa_i + \frac{1}{n^2} \sum_{i=1}^K 4\sigma_i^4 + \frac{1}{2n} \sum_{i=1}^K \sigma_i^4 \sim \mathcal{O}\left(\frac{A_1}{n}\right).
    \label{eq:eq_a2}
\end{align}

\textit{Part B (Bound on $\mathbb{V}[W]$):} Bounding the cross-arm penalty $W$ requires expanding the fourth-order empirical mean differences. By the law of total variance conditioned on $\mathbf{s}_n$,
\begin{align}
    \mathbb{V}[W] = \mathbb{E}\left[\mathbb{V}\left[W \big| s_{i,n}, s_{j,n}\right] \right] + \mathbb{V} \left[\mathbb{E}\left[W | s_{i,n}, s_{j,n} \right] \right].
    \label{eq:eq_a6}
\end{align}
We first bound the conditionally independent empirical means:
\begin{align*}
    \mathbb{V}\left[W \big| \mathbf{s}_n\right] \leq \frac{1}{4n^4} \sum_{i=1}^K \sum_{j\neq i} s_{i,n}^2 s_{j,n}^2 \mathbb{V}\left[(\tbar{\mu}_{i} - \tbar{\mu}_j)^2 \bigg| \mathbf{s}_n\right] 
\end{align*}
Expanding $\mathbb{V}[(\tbar{\mu}_{i} - \tbar{\mu}_j)^2 | \mathbf{s}_n] = \mathbb{E}[(\tbar{\mu}_{i} - \tbar{\mu}_j)^4] - (\mathbb{E}[(\tbar{\mu}_{i} - \tbar{\mu}_j)^2])^2$ yields a polynomial of empirical moments. Grouping terms by their asymptotic dependence on the allocation counts yields:
\begin{itemize}
    \item \textbf{$\mathcal{O}(1)$ terms:} $(\mu_i - \mu_j)^4$
    \item \textbf{$\mathcal{O}(1/s)$ terms:} $6(\mu_i - \mu_j)^2\left(\frac{\sigma_i^2}{s_{i,n}} + \frac{\sigma_j^2}{s_{j,n}}\right)$
    \item \textbf{$\mathcal{O}(1/s^2)$ and higher order terms:} Bounded asymptotically by $\mathcal{O}\left(1/s_{i,n}^2\right) + \mathcal{O}\left(1/s_{j,n}^2\right)$.
\end{itemize}
Aggregating these orders and subtracting the squared second moment bounds the conditional variance as,
 \begin{align*}
    \mathbb{V}\left[(\tbar{\mu}_{i} - \tbar{\mu}_j)^2 \bigg| s_{i,n}, s_{j,n}\right] &\leq 4\left(\mu_i - \mu_j\right)^2\left(\frac{\sigma_i^2}{s_{i,n}} + \frac{\sigma_j^2}{s_{j,n}}\right)  + 2\left(\frac{\sigma_i^4}{s_{i,n}^2} + \frac{\sigma_j^4}{s_{j,n}^2} + \frac{\sigma_i^2\sigma_j^2}{s_{i,n}^2 s_{j,n}^2}\right) + \mathcal{O}\left(\frac{1}{s_{i,n}^2}\right) + \mathcal{O}\left(\frac{1}{s_{j,n}^2}\right).
\end{align*}
Taking the expectation over $\mathbf{s}_n$ yields the first half of the LTV decomposition:
\begin{align}
    \mathbb{E}\left[\mathbb{V}\left[W \big| \mathbf{s}_n\right] \right] &\leq \frac{1}{n^4} \sum_{i=1}^K \sum_{j\neq i} \mathbb{E}\left[s_{i,n} s_{j,n}^2 \right] (\mu_i - \mu_j)^2 \sigma_i^2 + \frac{1}{n^4} \sum_{i=1}^K \sum_{j\neq i} \mathbb{E}\left[s_{i,n}^2 s_{j,n} \right] (\mu_i - \mu_j)^2 \sigma_j^2 + \nonumber\\
    &\hspace*{1.5cm}\frac{1}{2n^4} \sum_{i=1}^K \sum_{j\neq i} \mathbb{E}\left[s_{j,n}^2 \right] \sigma_i^4 +  \frac{1}{2n^4} \sum_{i=1}^K \sum_{j\neq i} \mathbb{E}\left[s_{i,n}^2 \right] \sigma_j^4 + \frac{1}{2n^4} \sum_{i=1}^K \sum_{j\neq i} \sigma_i^2 \sigma_j^2 + \mathcal{O}\left(\frac{1}{n^2}\right).
    \label{eq:eq_t1}
\end{align}
For the variance of the conditional expectation, i.e., $\mathbb{V}[\mathbb{E}[W | \mathbf{s}_n]]$, we exploit the negative covariance of the multinomial-like pull count constraints. We show,
\begin{align*}
    &\mathbb{V}\left[\frac{1}{2n^2} \sum_{i=1}^K \sum_{j\neq i} s_{i,n} s_{j,n} \left(\left(\mu_i - \mu_j\right)^2 + \frac{\sigma_i^2}{s_{i,n}} + \frac{\sigma_j^2}{s_{j,n}}\right)\right] = \\
    &\hspace*{3cm} \frac{1}{4n^4} \mathbb{V}\left[\sum_{i=1}^K \sum_{j\neq i} s_{i,n} s_{j,n} \underbrace{\left(\mu_i - \mu_j\right)^2}_{a_{ij}} \right] + \frac{1}{4n^4} \mathbb{V}\left[\sum_{i=1}^K \sum_{j\neq i} s_{j,n} \sigma_i^2\right] + \mathbb{V}\left[\frac{1}{4n^2} \sum_{i=1}^K \sum_{j\neq i} s_{i,n} \sigma_j^2\right] \\
    &\hspace*{0cm} \leq \frac{1}{4n^4} \sum_{i=1}^K \sum_{j \neq i} \sum_{k=1}^K \sum_{l \neq k} a_{ij} a_{kl} \text{Cov}(s_{i,n} s_{j,n}, s_{k,n} s_{l,n}) + \frac{1}{4n^4} \mathbb{V}\left[\left(\sum_{i=1}^K \sigma_i^2\right) \left(\sum_{i=1}^K s_{i,n}\right)\right] + \frac{1}{4n^4} \mathbb{V}\left[\left(\sum_{i=1}^K \sigma_i^2\right) \left(\sum_{i=1}^K s_{i,n}\right)\right].
\end{align*}
Specifically, because $\text{Cov}(s_{i,n} s_{j,n}, s_{k,n} s_{l,n}) \leq 0$ for mutually distinct arm pairs, the cross-correlations are strictly non-positive, allowing us to drop them for an upper bound:
\begin{align}
    \mathbb{V}\left[\mathbb{E}\left[W \big| \mathbf{s}_n\right]\right] \leq \frac{1}{4n^4} \sum_{i=1}^K \sum_{j\neq i} \mathbb{V}\left[s_{i,n} s_{j,n}\right] (\mu_i - \mu_j)^2 + \frac{1}{2n^4} \left(\sum_{i=1}^K \sigma_i^2\right)^2 \sum_{i=1}^K \mathbb{V}\left[s_{i,n}\right]. 
    \label{eq:eq_t2}
\end{align}
Summing \eqref{eq:eq_t1} and \eqref{eq:eq_t2}, the dominant terms decay at a rate of $\mathcal{O}(1/n)$ with subsequent higher-order decay components, establishing 
\begin{align*}
    \mathbb{V}[W] \sim \mathcal{O}(A_2/n).
\end{align*}
\textbf{Stage 3: Synthesis via Cauchy-Schwarz} \\
To bound the cross-covariance, we invoke the Cauchy-Schwarz inequality:
\begin{align*}
    \left|\text{Cov}(V,W)\right| \leq \sqrt{\mathbb{V}[V]\mathbb{V}[W]} \sim \sqrt{\mathcal{O}\left(\frac{A_1}{n}\right) \mathcal{O}\left(\frac{A_2}{n}\right)} = \mathcal{O}\left(\frac{A_3}{n}\right).
\end{align*}
Finally, substituting the established bounds for $\mathbb{V}[V]$, $\mathbb{V}[W]$, and $\text{Cov}(V,W)$ back into the Lipschitz bound from Stage 1 establishes:
\begin{align*}
    \mathbb{V}\left[\frac{1}{D}\right] \leq \frac{\rho^2}{L_0^4} \left( \mathcal{O}\left(\frac{A_1}{n}\right) + \mathcal{O}\left(\frac{A_2}{n}\right) + \mathcal{O}\left(\frac{A_3}{n}\right) \right) \sim \mathcal{O}\left(\frac{Q_2}{n}\right).
\end{align*}
This completes the proof.
\end{proof}

\section{Proof of the Theorem~\ref{th:pull_bound}}
\label{pro:th2}
To bound the expected number of suboptimal pulls $\mathbb{E}[s_{i,n}]$, we utilize the generic decoupling framework for posterior sampling (Theorem 36.2 in \cite{lattimore2020bandit}). $H_{1,q} := \mathbb{P}_t \left(\mathcal{G}_1^c(t) \mid s_{1,t} = q \right)$ denotes the conditional probability that the optimal arm's sampled \ac{SR} falls below the threshold $\xi_1 - \varepsilon$. The expected pulls of any suboptimal arm $i$ can be bounded by decoupling the exploration and exploitation regimes,
\begin{align}
    \mathbb{E} [s_{i,n}] \leq \underbrace{\mathbb{E}\left[\sum_{q=0}^{n-1}\left(\frac{1}{H_{1,q}} - 1\right)\right]}_{\text{Exploration Regime} \longrightarrow \text{Lemma \ref{lem:B1}}} + \underbrace{\mathbb{E}\left[\sum_{q=0}^{n-1}\mathbb{I}\left\{H_{i,q}>\frac{1}{n} \right\} \right]}_{\text{Exploitation Regime} \longrightarrow \text{Lemma \ref{lem:B2}}}.
\label{eq:app_lemma_1}
\end{align}
The remainder of this proof systematically bounds these two summations via Lemma \ref{lem:B1} and Lemma \ref{lem:B2}, respectively.

\subsection{Bounding the Exploration Regime}
\begin{lemma}
\label{lem:B1}
For any $\varepsilon_\mu, \varepsilon_\sigma > 0$ such that $\varepsilon_\mu + \varepsilon_\sigma = \varepsilon$, the expected exploration penalty caused by under-sampling the optimal arm is bounded by:
\begin{align*}
    \mathbb{E}\left[\sum_{q=0}^{n-1}\left(\frac{1}{H_{1,q}} - 1\right)\right] &\leq \frac{C_1^\prime}{\varepsilon_\mu^3} + \frac{C_2^\prime }{\varepsilon_\sigma^2}\left(\sigma_1^2 + \frac{\varepsilon_\sigma L_0}{\rho \xi_1}\right)^2 + \left(C_2 + \frac{C_3}{\varepsilon_\mu} \right) \frac{C_3^\prime}{\varepsilon_\mu} + \frac{C_4^\prime}{\varepsilon_\sigma} + \frac{C_4}{\varepsilon_\sigma} \left( C_2 + \frac{C_3}{\varepsilon_\mu} \right) \left(\frac{C_5^\prime}{\varepsilon_\mu + C_6^\prime \varepsilon_\sigma} \right),
\end{align*}
where $C_1^\prime, \dots, C_6^\prime$ are finite problem-dependent constants strictly independent of $n$.
\end{lemma}

\begin{proof}
Evaluating the expectation $\mathbb{E}[1/H_{1,q} - 1]$ requires integrating a heavily skewed function, as the term is aggressively penalized when the posterior mass $H_{1,q}$ drops close to zero. We resolve this by conditioning on the empirical sufficient statistics of the optimal arm. 

Let $\vartheta = \tbar{\mu}_{1,q}$ and $\varsigma = \tbar{\sigma}_{1,q}^2$ denote the empirical sample mean and sample variance after $q$ pulls. Because the true underlying reward distribution is Gaussian $\mathcal{N}(\mu_1, \sigma_1^2)$, Cochran's theorem guarantees that these two estimators are strictly statistically independent and follow exact finite-sample distributions:
\begin{enumerate}
    \item \textbf{The Sample Mean:} $\vartheta \sim \mathcal{N}\left(\mu_1, \frac{\sigma_1^2}{q}\right)$, with probability density $f_\mu(\vartheta) = h_1 \exp\left(-\frac{q(\vartheta - \mu_1)^2}{2\sigma_1^2}\right)$, where the normalizing constant is $h_1 = \sqrt{\frac{q}{2\pi\sigma_1^2}}$.
    \item \textbf{The Sample Variance:} The scaled variance follows a chi-squared distribution, which is equivalently formulated as a Gamma distribution $\varsigma \sim \mathrm{Gamma}\left(\frac{q}{2}, \frac{q}{2\sigma_1^2}\right)$. Its density is $f_\sigma(\varsigma) = h_2 \varsigma^{\frac{q}{2}-1} \exp\left(-\frac{q\varsigma}{2\sigma_1^2}\right)$, with normalizing constant $h_2 = \frac{q^{q/2}}{2^{q/2}\Gamma(q/2)\sigma_1^q}$.
\end{enumerate}

We define the conditional posterior probability of sufficient exploitation as
\begin{align*}
    H_{1,q}^{\vartheta, \varsigma} = \mathbb{P}_t \left(\hat{\xi}_{1,t} \geq \xi_1 - \varepsilon \,\big|\, s_{1,t} = q, \tbar{\mu}_{1,q} = \vartheta, \tbar{\sigma}_{1,q}^2 = \varsigma \right).
\end{align*}
As $\vartheta$ and $\varsigma$ are nonindependent, the expected exploration penalty can be rigorously formulated as a double integral over the joint sampling distribution as,
\begin{align}
    \mathbb{E}\left[\frac{1}{H_{1,q}} - 1\right] = \int_0^\infty \int_{-\infty}^\infty \left( \frac{1}{H_{1,q}^{\vartheta, \varsigma}} - 1 \right) f_\mu(\vartheta) f_\sigma(\varsigma) \, {\rm d}\vartheta \, {\rm d}\varsigma.
    \label{eq:expectation_integral}
\end{align}

To evaluate the conditional posterior $H_{1,q}^{\vartheta, \varsigma}$, we decompose the \ac{SR} estimation error to isolate the sub-Gaussian mean sample $\theta_{1,t}$ from the sub-exponential precision sample $\tau_{1,t}$:
\begin{align*}
    \hat{\xi}_{1,t} - \xi_1 &= \frac{\theta_{1,t}}{L_0 + \frac{\rho}{\tau_{1,t}}} - \frac{\mu_1}{L_0 + \rho\sigma_1^2} = \frac{\theta_{1,t}\left(L_0 + \rho\sigma_1^2\right) - \mu_1\left(L_0 + \frac{\rho}{\tau_{1,t}}\right)}{\left(L_0 + \frac{\rho}{\tau_{1,t}}\right)\left(L_0 + \rho\sigma_1^2\right)} \\
    &= \frac{\left(\theta_{1,t} - \mu_1\right)\left(L_0 + \rho\sigma_1^2\right) + \rho\mu_1\left(\sigma_1^2 - \frac{1}{\tau_{1,t}}\right)}{\left(L_0 + \frac{\rho}{\tau_{1,t}}\right)\left(L_0 + \rho\sigma_1^2\right)}  \\
    &= \frac{\theta_{1,t} - \mu_1}{\left(L_0 + \frac{\rho}{\tau_{1,t}}\right)} + \frac{\rho \mu_1 \left(\sigma_1^2 - \frac{1}{\tau_{1,t}}\right)}{\left(L_0 + \frac{\rho}{\tau_{1,t}}\right)\left(L_0 + \rho\sigma_1^2\right)}.
\end{align*}
By partitioning the global error margin $\varepsilon$ into a mean-budget $\varepsilon_\mu$ and a variance-budget $\varepsilon_\sigma$, we establish a strict lower bound on the joint posterior tail probability using the intersection of events:
\begin{align}
    &\mathbb{P}_t \left(\frac{\theta_{1,t} - \mu_1}{L_0 + \frac{\rho}{\tau_{1,t}}} + \frac{\rho \mu_1 \left(\sigma_1^2 - \frac{1}{\tau_{1,t}}\right)}{\left(L_0 + \frac{\rho}{\tau_{1,t}}\right)\left(L_0 + \rho\sigma_1^2\right)} \geq -\varepsilon \right) \nonumber\\
    &\hspace{1cm} \geq \mathbb{P}_t \left(\frac{\theta_{1,t} - \mu_1}{L_0 + \frac{\rho}{\tau_{1,t}}} \geq -\varepsilon_\mu \right) \cdot \mathbb{P}_t \left(\frac{\rho \mu_1 \left(\sigma_1^2 - \frac{1}{\tau_{1,t}}\right)}{\left(L_0 + \frac{\rho}{\tau_{1,t}}\right)\left(L_0 + \rho\sigma_1^2\right)} \geq - \varepsilon_\sigma \right) \nonumber\\
    &\hspace{1cm} = \mathbb{P}_t \left( \theta_{1,t} \geq \mu_1 -\varepsilon_\mu L_0 - \frac{\varepsilon_\mu \rho}{\tau_{1,t}}\right) \cdot \mathbb{P}_t \left(\tau_{1,t} \geq \frac{1 - \frac{\varepsilon_\sigma}{\xi_1}}{\sigma_1^2 + \frac{\varepsilon_\sigma L_0}{\rho \xi_1}} \right) \nonumber\\
    &\hspace{1cm} \geq \mathbb{P}_t \left( \theta_{1,t} \geq \mu_1 -\varepsilon_\mu L_0 \right) \cdot \mathbb{P}_t \left(\tau_{1,t} \geq \frac{1 - \frac{\varepsilon_\sigma}{\xi_1}}{\sigma_1^2 + \frac{\varepsilon_\sigma L_0}{\rho \xi_1}} \right).
    \label{eq:tail_lower_bound}
\end{align}

To integrate over the joint sampling distribution in \eqref{eq:expectation_integral}, we partition the empirical parameter space $(\vartheta, \varsigma) \in \mathbb{R} \times \mathbb{R}^+$ into four distinct domains based on how the estimators deviate from the true moments. Let $\tau = \sigma_1^2 + \frac{\varepsilon_\sigma L_0}{\xi_1}$ act as the critical variance threshold:
\begin{itemize}
    \item \textbf{Domain A: both empirical moments under-estimate} $\vartheta \leq \mu_1, \varsigma \geq \sigma_1^2$. Both empirical moments are highly favorable. The lower bound follows directly from \eqref{eq:tail_lower_bound}.
    \item \textbf{Domain B: mean over-estimates, variance under-estimates} $\vartheta > \mu_1, \varsigma \geq \sigma_1^2$. Because the Gaussian posterior mean centers above the threshold, the mean tail probability is bounded strictly below by $1/2$, yielding $H_{1,q}^{\vartheta, \varsigma} \geq \frac{1}{2} \mathbb{P} \left(\tau_{1,t} \geq \frac{1 - \frac{\varepsilon_\sigma}{\xi_1}}{\sigma_1^2 + \frac{\varepsilon_\sigma L_0}{\rho \xi_1}} \right)$.
    \item \textbf{Domain C: mean under-estimates, variance over-estimates} $\vartheta \leq \mu_1, \varsigma < \sigma_1^2$. By symmetry on the Gamma precision tail, the variance probability is bounded by $1/2$, yielding $H_{1,q}^{\vartheta, \varsigma} \geq \frac{1}{2} \mathbb{P} \left( \theta_{1,t} \geq \mu_1 -\varepsilon_\mu L_0 \right)$.
    \item \textbf{Domain D: both empirical moments over-estimate} $\vartheta > \mu_1, \varsigma < \sigma_1^2$. Both tail probabilities naturally exceed $1/2$ by the design of the posterior updates, establishing a constant absolute lower bound $H_{1,q}^{\vartheta, \varsigma} \geq 1/4$.
\end{itemize}
Thus, based on the above four domains, the lower tail bound on the Thompson sample for the best arm is
\begin{align}
    &\mathbb{P}_t\left(\hat{\xi}_{1,t} \geq \xi_1 - \left(1 + \rho\right)\varepsilon \,\Big| s_{1,t} = q, \tbar{\mu}_{1,q} = \vartheta, \tbar{\sigma}_{1,q}^2 = \varsigma \right) \geq \nonumber\\
    &\hspace{3cm}\begin{cases}
        \mathbb{P}\left(\theta_{1,t} \geq \mu_1 -\varepsilon_\mu L_0 \right) \cdot \mathbb{P} \left(\tau_{1,t} \geq \frac{1 - \frac{\varepsilon_\sigma}{\xi_1}}{\sigma_1^2 + \frac{\varepsilon_\sigma L_0}{\rho \xi_1}} \right) & , \vartheta \leq \mu_1, \varsigma \geq \sigma_1^2 \\
        \frac{1}{2} \mathbb{P} \left(\tau_{1,t} \geq \frac{1 - \frac{\varepsilon_\sigma}{\xi_1}}{\sigma_1^2 + \frac{\varepsilon_\sigma L_0}{\rho \xi_1}} \right) & , \vartheta > \mu_1, \varsigma \geq \sigma_1^2 \\
        \frac{1}{2} \mathbb{P} \left( \theta_{1,t} \geq \mu_1 -\varepsilon_\mu L_0 \right) & , \vartheta \leq \mu_1, \varsigma < \sigma_1^2 \\
        \frac{1}{4} & , \vartheta > \mu_1, \varsigma < \sigma_1^2
    \end{cases}
    \label{eq:eq_7}
\end{align}

Substituting the explicit density functions $f_\mu(\vartheta)$ and $f_\sigma(\varsigma)$ back into the expectation \eqref{eq:expectation_integral}, we split the global integration into these four bounded regions:
\begin{align}
    \mathbb{E}\left[\frac{1}{H_{1,q}} - 1\right] &= h_1h_2 \left(\int_A + \int_B + \int_C + \int_D\right) \left(\frac{1}{H_{1,q}^{\vartheta, \varsigma}} - 1\right) \exp{\left(-\frac{q\left(\vartheta - \mu_1\right)^2}{2\sigma_1^2}\right)} \varsigma^{\frac{q}{2}-1} e^{-\frac{q\varsigma}{2\sigma_1^2}} {\rm d}\vartheta {\rm d} \varsigma.
\end{align}
We map these regions directly to the bounds established in the four domains above,
\begin{enumerate}
    \item \textbf{Region A} corresponds to Domain D: $A = \left[\mu_1,\infty\right] \times \left[0,\tau \right)$.
    \item \textbf{Region B} corresponds to Domain C: $B = \left[-\infty, \mu_1 - \varepsilon_{\mu} L_0 \right) \times \left[0, \tau \right)$.
    \item \textbf{Region C} corresponds to Domain B: $C = \left[\mu_1 - \varepsilon_{\mu} L_0, \infty \right) \times \left[\tau, \infty \right)$.
    \item \textbf{Region D} corresponds to Domain A: $D = \left[-\infty, \mu_1 - \varepsilon_{\mu} L_0 \right) \times \left[\tau, \infty \right)$.
\end{enumerate}

\textit{Evaluation of Integrals:} We bound each of the four integral domains using the Gaussian and Gamma concentration bounds established in the auxiliary lemmas (Lemmas~\ref{le:lemma_gaussian}, \ref{le:lemma_gamma}, \ref{le:lemma_gaussian2}, and \ref{le:lemma_gamma2}). We proceed by evaluating each domain sequentially, beginning with Region $A$.

\textbf{1. Integration over Region A}\\
In this domain, the optimal arm is sufficiently sampled such that the posterior failure probability is bounded ($H_{1,q}^{\vartheta, \varsigma} \geq 1/4$). Thus, we can bound the inverse utilizing the tangent inequality $\frac{1}{x} - 1 \leq 4(1 - x)$ for $x \geq 1/4$:
\begin{align*}
\frac{1}{H_{1,q}^{\vartheta, \varsigma}} - 1 \leq 4 \left(1 - H_{1,q}^{\vartheta, \varsigma}\right).
\end{align*}
Substituting this into the integral and splitting the probability of the union of events yields:
\begin{align*}
    &h_1h_2 \int_A \left(\frac{1}{H_{1,q}^{\vartheta, \varsigma}} - 1\right) \exp{\left(-\frac{q\left(\vartheta - \mu_1\right)^2}{2\sigma_1^2}\right)} \varsigma^{\frac{q}{2}-1} e^{-\frac{q\varsigma}{2\sigma_1^2}} {\rm d}\vartheta {\rm d} \varsigma \\
    &\hspace*{1cm} \leq 4h_1h_2 \int_A \left(1- H_{1,q}^{\vartheta, \varsigma}\right) \exp{\left(-\frac{q\left(\vartheta - \mu_1\right)^2}{2\sigma_1^2}\right)} \varsigma^{\frac{q}{2}-1} e^{-\frac{q\varsigma}{2\sigma_1^2}} {\rm d}\vartheta {\rm d} \varsigma\\
    &\hspace*{0cm} \leq 4h_1h_2 \int_A \left(\mathbb{P} \left( \theta_{i,t} \leq \mu_1 - \frac{\varepsilon_\mu L_0}{2} \bigg| \tbar{\mu}_{1,s} = \vartheta \right) + \mathbb{P} \bigg(\tau_{i,t} \leq \frac{1 - \frac{\varepsilon_\sigma}{\xi_1}}{\sigma_1^2 + \frac{\varepsilon_\sigma L_0}{\rho \xi_1}} \bigg| \tbar{\sigma}_{1,s}^2 = \varsigma \bigg) \right) \exp{\left(-\frac{q\left(\vartheta - \mu_1\right)^2}{2\sigma_1^2}\right)} \varsigma^{\frac{q}{2}-1} e^{-\frac{q\varsigma}{2\sigma_1^2}} {\rm d}\vartheta {\rm d} \varsigma \\
    &\hspace*{0cm} \leq 4h_1 \int_{\mu_1}^\infty \mathbb{P} \left( \theta_{i,t} \leq \mu_1 - \frac{\varepsilon_\mu L_0}{2} \bigg| \tbar{\mu}_{1,s} = \vartheta \right) \exp{\left(-\frac{q\left(\vartheta - \mu_1\right)^2}{2\sigma_1^2}\right)} {\rm d}\vartheta + 4h_2 \int_0^\tau \mathbb{P} \left(\tau_{i,t} \leq \frac{1 - \frac{\varepsilon_\sigma}{\xi_1}}{\sigma_1^2 + \frac{\varepsilon_\sigma L_0}{\rho \xi_1}} \bigg| \tbar{\sigma}_{1,s}^2 = \varsigma \right) \varsigma^{\frac{q}{2}-1} e^{-\frac{q\varsigma}{2\sigma_1^2}} {\rm d} \varsigma \\
    &\hspace*{0cm} \leq 4h_1 \int_{\mu_1}^\infty \mathbb{P} \left( \theta_{i,t} \leq \mu_1 - \frac{\varepsilon_\mu L_0}{2} \bigg| \tbar{\mu}_{1,s} = \vartheta \right) \exp{\left(-\frac{q\left(\vartheta - \mu_1\right)^2}{2\sigma_1^2}\right)} {\rm d}\vartheta + 4h_2 \int_0^\tau \mathbb{P} \left(\tau_{i,t} \leq \frac{1}{\sigma_1^2 + \frac{\varepsilon_\sigma L_0}{\rho \xi_1}} \bigg| \tbar{\sigma}_{1,s}^2 = \varsigma \right) \varsigma^{\frac{q}{2}-1} e^{-\frac{q\varsigma}{2\sigma_1^2}} {\rm d} \varsigma \\
    &\hspace*{0cm} \overset{(a)}{\leq} \underbrace{4h_1 \int_{\mu_1}^\infty \exp{\left(-\frac{q \left(\vartheta - \mu_1+\frac{\varepsilon_\mu L_0}{2} \right)^2}{2}\right)} \exp{\left(-\frac{q\left(\vartheta - \mu_1\right)^2}{2\sigma_1^2}\right)} {\rm d}\vartheta}_{I_1} + \underbrace{4h_2 \int_0^\tau \exp{\left(-\frac{q \left(\varsigma - \left(\sigma_1^2 + \frac{\varepsilon_\sigma L_0}{\rho \xi_1}\right)\right)^2}{4 \left(\sigma_1^2 + \frac{\varepsilon_\sigma L_0}{\rho \xi_1}\right)^2}\right)} \varsigma^{\frac{q}{2}-1} e^{-\frac{q\varsigma}{2\sigma_1^2}} {\rm d} \varsigma}_{I_2}.
\end{align*}
Step (a) follows directly from the Gaussian and Gamma left tail bounds, i.e., Lemma~\ref{le:lemma_gaussian} and Lemma~\ref{le:lemma_gamma} respectively. By defining $x = 1 + \frac{1}{\sigma_1^2}$, $y = \varepsilon_\mu L_0$, making substitution, $u = \vartheta - \mu_1 + \frac{y}{2x}$, and completing the square for the Gaussian integral, we bound $I_1$ by,
\begin{align*}
    I_1 &= 4h_1 \exp\left(-\frac{q \varepsilon_\mu^2 L_0^2}{8} + \frac{q y^2}{8x} \right) \int_{\frac{y}{2x}}^\infty \exp\left( -\frac{q x}{2} u^2 \right) {\rm d}u \\
    &\overset{(b)}{\leq} 4h_1 \exp\left(-\frac{q \varepsilon_\mu^2 L_0^2}{8} + \frac{q y^2}{8x} \right) \cdot \frac{2}{q y} \exp\left( -\frac{q y^2}{8x} \right) = \frac{8 h_1}{q y} \exp\left(-\frac{q \varepsilon_\mu^2 L_0^2}{8} \right) \\
    &= \sqrt{\frac{q}{2 \pi \sigma_1^2}}\frac{8}{q \varepsilon_\mu L_0} \exp\left(-\frac{q \varepsilon_\mu^2 L_0^2}{8} \right) \leq C_1 \exp\left(-\frac{q \varepsilon_\mu^2 L_0^2}{8} \right).
\end{align*}
Step (b) follows by using the standard Gaussian tail bound, i.e., $\int_z^\infty e^{-a u^2} {\rm d}u \leq \frac{e^{-a z^2}}{2 a z}$, for $z > 0$. For $I_2$, to isolate the exponential decay with respect to $q$, we partition the integration domain using the  concentration event $\mathcal{E} = \left\{ \varsigma \leq \sigma_1^2 + \frac{\varepsilon_\sigma L_0}{2\rho \xi_1} \right\}$. By the Law of Total Probability, we split the integral into two regimes:
\begin{align*}
    I_2 = 4 \int_{\mathcal{E}} \mathbb{P}\left(\tau_{1,t} \leq \frac{1}{\sigma_1^2 + \frac{\varepsilon_\sigma L_0}{\rho \xi_1}} \bigg| \varsigma \right) f(\varsigma) {\rm d}\varsigma + 4 \int_{\mathcal{E}^C} \mathbb{P}\left(\tau_{1,t} \leq \frac{1}{\sigma_1^2 + \frac{\varepsilon_\sigma L_0}{\rho \xi_1}} \bigg| \varsigma \right) f(\varsigma) {\rm d}\varsigma.
\end{align*}
Conditioned on $\mathcal{E}$, the empirical variance $\varsigma$ is strictly bounded away from the failure threshold $\tau = \sigma_1^2 + \frac{\varepsilon_\sigma L_0}{\rho \xi_1}$. The distance between them is at least $\left(\sigma_1^2 + \frac{\varepsilon_\sigma L_0}{\rho \xi_1}\right) - \varsigma \geq \left(\sigma_1^2 + \frac{\varepsilon_\sigma L_0}{\rho \xi_1}\right) - \left(\sigma_1^2 + \frac{\varepsilon_\sigma L_0}{2\rho \xi_1}\right) = \frac{\varepsilon_\sigma L_0}{2\rho \xi_1}$. 
Substituting this worst-case gap into the Gamma posterior tail bound, i.e., Lemma~\ref{le:lemma_gamma}, yields a deterministic exponential upper bound that is entirely independent of the integration variable $\varsigma$,
\begin{align*}
    \mathbb{P}\left(\tau_{1,t} \leq \frac{1}{\sigma_1^2 + \frac{\varepsilon_\sigma L_0}{\rho \xi_1}} \bigg| \varsigma \in \mathcal{E} \right) \leq \exp\left( - \frac{q \left(\left(\sigma_1^2 + \frac{\varepsilon_\sigma L_0}{\rho \xi_1}\right) - \varsigma\right)^2}{4 \left(\sigma_1^2 + \frac{\varepsilon_\sigma L_0}{\rho \xi_1}\right)^2} \right) \leq \exp\left( - \frac{q \varepsilon_\sigma^2 L_0^2}{16 \rho^2 \xi_1^2 \left(\sigma_1^2 + \frac{\varepsilon_\sigma L_0}{\rho \xi_1}\right)^2} \right).
\end{align*}
Because this bound is a constant with respect to $\varsigma$, the remaining integral over the probability density trivially integrates to a value less than or equal to $1$:
\begin{align*}
    4 \exp\left( - \frac{q \varepsilon_\sigma^2 L_0^2}{16 \rho^2 \xi_1^2 \left(\sigma_1^2 + \frac{\varepsilon_\sigma L_0}{\rho \xi_1}\right)^2} \right) \int_{\mathcal{E}} f(\varsigma) {\rm d}\varsigma \leq 4 \exp\left( - \frac{q \varepsilon_\sigma^2 L_0^2}{16 \rho^2 \xi_1^2 \left(\sigma_1^2 + \frac{\varepsilon_\sigma L_0}{\rho \xi_1}\right)^2} \right).
\end{align*}
For the complement event $\mathcal{E}^C$, the empirical variance has severely deviated from its true mean. We trivially upper bound the conditional posterior failure probability by $1$. The remaining integral evaluates exactly to the prior probability of this large deviation occurring:
\begin{align*}
    4 \int_{\mathcal{E}^C} 1 \cdot f(\varsigma) {\rm d}\varsigma = 4 \, \mathbb{P}\left( \tbar{\sigma}_{1,q}^2 > \sigma_1^2 + \frac{\varepsilon_\sigma L_0}{2\rho \xi_1} \right) \leq 4 \exp\left(- c_1 q \frac{\varepsilon_\sigma^2 L_0^2}{4\rho^2 \xi_1^2} \right).
\end{align*}
Because $\tbar{\sigma}_{1,q}^2$ is an empirical variance calculated over $q$ samples, standard sub-exponential Chernoff bounds for the Gamma distribution dictate that the probability of this right-tail deviation decays exponentially as $4 \exp\left(- c_1 q \frac{\varepsilon_\sigma^2 L_0^2}{4\rho^2 \xi_1^2} \right)$, where $c_1 > 0$ is a distribution-dependent constant independent of $q$. Summing the partitioned bounds establishes that $I_2$ is strictly governed by the slower of the two exponential decay rates. We seamlessly absorb the dependencies to yield a final bound strictly in the form,
\begin{align*}
    I_2 \leq 4 \exp\left( - \frac{q \varepsilon_\sigma^2 L_0^2}{16 \rho^2 \xi_1^2 \left(\sigma_1^2 + \frac{\varepsilon_\sigma L_0}{\rho \xi_1}\right)^2} \right) + 4 \exp\left(- c_2 q \frac{\varepsilon_\sigma^2 L_0^2}{4\rho^2 \xi_1^2} \right) \leq 8 \exp\left( - \frac{q \varepsilon_\sigma^2 L_0^2}{16 \rho^2 \xi_1^2 \left(\sigma_1^2 + \frac{\varepsilon_\sigma L_0}{\rho \xi_1}\right)^2} \right).
\end{align*}

Substituting the derived bounds for $I_1$ and $I_2$ back into the original expression yields the final result for region $A$,
\begin{align}
    h_1h_2 \int_A \left(\frac{1}{H_{1,q}^{\vartheta, \varsigma}} - 1\right) \exp{\left(-\frac{q\left(\vartheta - \mu_1\right)^2}{2\sigma_1^2}\right)} \varsigma^{\frac{q}{2}-1} e^{-\frac{q\varsigma}{2\sigma_1^2}} {\rm d}\vartheta {\rm d} \varsigma \leq C_1 \exp\left(-\frac{q \varepsilon_\mu^2 L_0^2}{8} \right) + 8 \exp\left( - \frac{q \varepsilon_\sigma^2 L_0^2}{16 \rho^2 \xi_1^2 \left(\sigma_1^2 + \frac{\varepsilon_\sigma L_0}{\rho \xi_1}\right)^2} \right).
    \label{eq:eq_A}
\end{align}

\textbf{2. Integration over Region B}\\
In this domain, the variance empirical sample provides a strict bound, leaving the inverse heavily dependent on the mean's tail
\begin{align*}
    \frac{1}{H_{1,q}^{\vartheta, \varsigma}} - 1 \leq \frac{2}{\mathbb{P} \left( \theta_{1,t} \geq \mu_1 -\varepsilon_\mu L_0 \mid \tbar{\mu}{1,q} = \vartheta\right)}.
\end{align*}
Substituting this bound and applying the two-sided Gaussian tail inequality, i.e., Lemma~\ref{le:lemma_gaussian2}, yields,
\begin{align*}
    &h_1h_2 \int_B \left(\frac{1}{H{1,q}^{\vartheta, \varsigma}} - 1\right) \exp{\left(-\frac{q\left(\vartheta - \mu_1\right)^2}{2\sigma_1^2}\right)} \varsigma^{\frac{q}{2}-1} e^{-\frac{q\varsigma}{2\sigma_1^2}} {\rm d}\vartheta {\rm d} \varsigma \\
    &\hspace*{0.5cm}\leq 2h_1 \int_{-\infty}^{\mu_1-\varepsilon_\mu L_0} \frac{1}{\mathbb{P} \left( \theta_{1,t} \geq \mu_1 -\varepsilon_\mu L_0 \mid \tbar{\mu}*{1,q} = \vartheta\right)} \exp{\left(-\frac{q\left(\vartheta - \mu_1\right)^2}{2\sigma_1^2}\right)} {\rm d}\vartheta \\
    &\hspace*{0.5cm}\leq 2h_1 \int_{-\infty}^{\mu_1-\varepsilon_\mu L_0} \left(\sqrt{q} \left(\mu_1 - \vartheta - \varepsilon_\mu L_0\right) + \sqrt{q\left(\mu_1 - \vartheta - \varepsilon_\mu L_0\right)^2 + 4}\right) \\
    &\hspace*{3.5cm} \times \exp{\left(\frac{q\left(\vartheta - \mu_1 + \varepsilon_\mu L_0\right)^2}{2}\right)} \exp{\left(-\frac{q\left(\vartheta - \mu_1\right)^2}{2\sigma_1^2}\right)} {\rm d}\vartheta.
\end{align*}
Applying the transformation $z = \left(\mu_1 - \vartheta - \varepsilon_\mu L_0\right)$, thus limits of $z$ are from $0$ to $\infty$, the above equation becomes
\begin{align*}
   &2h_1 \int_{0}^{\infty} \left(\sqrt{q} z + \sqrt{qz^2 + 4}\right) \exp{\left(\frac{qz^2}{2}\right)} \exp{\left(-\frac{q\left(z + \varepsilon_\mu L_0\right)^2}{2\sigma_1^2}\right)} {\rm d}z \\
   &\hspace*{1cm} \leq 2h_1 \exp{\left(-\frac{q \varepsilon_\mu^2 L_0^2}{2\sigma_1^2}\right)} \int_{0}^{\infty} \left(\sqrt{q} z + \sqrt{qz^2 + 4}\right) \exp{\left(\frac{qz^2}{2}\right)} \exp{\left(-\frac{qz^2}{2\sigma_1^2}\right)} \exp{\left(-\frac{q z \varepsilon_\mu L_0}{\sigma_1^2}\right)} {\rm d}z.
\end{align*}
Define constants $a = \frac{1}{2\sigma_1^2} - \frac{1}{2} = \frac{1 - \sigma_1^2}{2\sigma_1^2}$ and $b = \frac{\varepsilon_\mu L_0}{\sigma_1^2}$. Then, for $\sqrt{qz^2 + 4} \leq \sqrt{q}z + 2$, the integral becomes
\begin{align*}
    &2h_1 \exp\left(-\frac{q \varepsilon_\mu^2 L_0^2}{2\sigma_1^2}\right) \int_0^\infty 2\left(\sqrt{q}z + 1 \right) \exp\left( -q a z^2 - q b z \right) {\rm d}z \\
    &= 4h_1 \sqrt{q} \exp\left(-\frac{q \varepsilon_\mu^2 L_0^2}{2\sigma_1^2}\right) \underbrace{\int_0^\infty z \exp(-q a z^2 - q b z) dz}_{I_1} + 4h_1 \exp\left(-\frac{q \varepsilon_\mu^2 L_0^2}{2\sigma_1^2}\right) \underbrace{\int_0^\infty \exp(-q a z^2 - q b z) dz}_{I_2}. 
\end{align*}
Since $a = \frac{1}{2\sigma_1^2} - \frac{1}{2}$, for convergence of the integrals, we require $a \geq 0$, i.e., $\sigma_1^2 \leq 1$. Under this assumption, both integrals converge. Completing the squares yield $\exp(-q a z^2 - q b z) = \exp\left(\frac{q b^2}{4a}\right) \cdot \exp\left( -q a \left(z + \frac{b}{2a} \right)^2 \right)$ which solves $I_2$ as
\begin{align*}
    I_2 &= \exp\left(\frac{q b^2}{4a}\right) \int_0^\infty \exp\left( -q a \left(z + \frac{b}{2a} \right)^2 \right) {\rm d}z \\
    &= \frac{1}{2}\sqrt{\frac{\pi}{qa}} \exp\left(\frac{q b^2}{4a}\right) \operatorname{erfc}\left(\frac{b}{2} \sqrt{\frac{q}{a}}\right) \leq \frac{1}{2}\sqrt{\frac{\pi}{qa}} \exp\left(\frac{q b^2}{4a}\right) \times \exp{\left(-\frac{qb^2}{4a}\right)} \frac{2}{b}\sqrt{\frac{a}{\pi q}} \\
    &= \frac{1}{bq} = \frac{\sigma_1^2}{q \varepsilon_\mu L_0}.
\end{align*}
For brevity let $\zeta := qb$. Moreover,
\begin{align*}
    \frac{\partial}{\partial \zeta} I_2(\zeta) = \int_0^\infty \frac{\partial}{\partial \zeta}\left(e^{-q a z^2 - \zeta z}\right)\,{\rm d}z = - \int_0^\infty z\, e^{-\alpha z^2 - \zeta z}\,{\rm d}z = -I_1,
\end{align*} 
so $I_1 = -\dfrac{{\rm d}}{{\rm d}\zeta} I_2$. Differentiating the closed form for $I_2$ gives
\begin{align*}
    \frac{{\rm d}}{{\rm d}\zeta} \left[ \frac{1}{2}\sqrt{\frac{\pi}{qa}} \exp\left(\frac{q b^2}{4a}\right) \operatorname{erfc}\left(\frac{b}{2} \sqrt{\frac{q}{a}}\right)\right] &= \frac{1}{2}\sqrt{\frac{\pi}{qa}}\, \frac{{\rm d}}{{\rm d}\zeta} \left[\exp\left(\frac{\zeta^2}{4qa}\right) \operatorname{erfc}\left(\frac{\zeta}{2\sqrt{qa}}\right)\right] \\
    &\hspace*{-3cm}= \frac{1}{2}\sqrt{\frac{\pi}{qa}}\, \left[\exp\left(\frac{\zeta^2}{4qa}\right) \frac{\zeta}{qa}\operatorname{erfc}\left(\frac{\zeta}{2\sqrt{qa}}\right) + \exp\left(\frac{\zeta^2}{4qa}\right) \left(-\frac{1}{\sqrt{\pi qa}} \exp{\left(-\frac{\zeta^2}{4qa}\right)}\right)\right] \\
    &\hspace*{-3cm}= \frac{1}{2}\sqrt{\frac{\pi}{qa}}\, \left[\exp\left(\frac{\zeta^2}{4qa}\right) \frac{\zeta}{qa}\operatorname{erfc}\left(\frac{\zeta}{2\sqrt{qa}}\right) - \frac{1}{\sqrt{\pi qa}}\right].
\end{align*}
Hence,
\begin{align*}
    I_1 = -\dfrac{{\rm d}}{{\rm d}\zeta} I_2 &= \frac{1}{2}\sqrt{\frac{\pi}{qa}}\, \left[\frac{1}{\sqrt{\pi qa}} - \exp\left(\frac{qb^2}{4a}\right)\frac{b}{4a} \operatorname{erfc}\left(\frac{b}{2} \sqrt{\frac{q}{a}}\right)\right] \\
    &\leq \frac{1}{2 aq} = \frac{\sigma_1^2}{q \left(1 - \sigma_1^2\right)}.
\end{align*}
Substituting the derived bounds for $I_1$ and $I_2$ back into the original expression yields the final result for this region is
\begin{align}
    2h_1 \exp{\left(-\frac{q \varepsilon_\mu^2 L_0^2}{2\sigma_1^2}\right)} \left(\sqrt{q} I_1 + I_2\right) &\leq 2h_1 \exp{\left(-\frac{q \varepsilon_\mu^2 L_0^2}{2\sigma_1^2}\right)} \left(\sqrt{q} \frac{\sigma_1^2}{q \left(1 - \sigma_1^2\right)} + \frac{\sigma_1^2}{q \varepsilon_\mu L_0}\right)\nonumber \\
    &= \exp{\left(-\frac{q \varepsilon_\mu^2 L_0^2}{2\sigma_1^2}\right)} \left( C_2 + \frac{C_3}{\varepsilon_\mu} \right).
    \label{eq:eq_B}
\end{align}

\textbf{3. Integration over Region C}\\
In this domain, the mean empirical sample provides the strict bound, shifting the responsibility to the Gamma precision tail
\begin{align*}
    \frac{1}{H_{1,q}^{\vartheta, \varsigma}} - 1 \leq \frac{2}{\mathbb{P} \left(\tau_{1,t} \geq \frac{1 - \frac{\varepsilon_\sigma}{\xi_1}}{\sigma_1^2 + \frac{\varepsilon_\sigma L_0}{\rho \xi_1}} \,\Big|\, \tbar{\sigma}_{1,q}^2 = \varsigma \right)} \leq \frac{2}{\mathbb{P} \left(\tau_{1,t} \geq \frac{1}{\sigma_1^2 + \frac{\varepsilon_\sigma L_0}{\rho \xi_1}} \,\Big|\, \tbar{\sigma}_{1,q}^2 = \varsigma \right)}.
\end{align*}
Applying the two-sided Mills-type bound for the Gamma distribution, i.e., Lemma~\ref{le:lemma_gamma2}, allows us to invert the probability and bound the integral,
\begin{align}
    &h_1h_2 \int_C \left(\frac{1}{H_{1,q}^{\vartheta, \varsigma}} - 1\right) \exp{\left(-\frac{q\left(\vartheta - \mu_1\right)^2}{2\sigma_1^2}\right)} \varsigma^{\frac{q}{2}-1} e^{-\frac{q\varsigma}{2\sigma_1^2}} {\rm d}\vartheta {\rm d} \varsigma \nonumber\\
    &\hspace*{1cm}\leq 2h_1h_2 \int_B \frac{1}{\mathbb{P} \left(\tau_{1,t} \geq \frac{1}{\sigma_1^2 + \frac{\varepsilon_\sigma L_0}{\rho \xi_1}} \,\Big|\, \tbar{\sigma}_{1,q}^2 = \varsigma \right)} \exp{\left(-\frac{q\left(\vartheta - \mu_1\right)^2}{2\sigma_1^2}\right)} \varsigma^{\frac{q}{2}-1} e^{-\frac{q\varsigma}{2\sigma_1^2}} {\rm d}\vartheta {\rm d} \varsigma \nonumber\\
    &\hspace*{1cm}\leq h_1 \int_{\mu_1}^{\infty} \exp{\left(-\frac{q\left(\vartheta - \mu_1\right)^2}{2\sigma_1^2}\right)} {\rm d}\vartheta \cdot 2h_2 \int_\tau^\infty \frac{1}{\mathbb{P} \left(\tau_{1,t} \geq \frac{1}{\sigma_1^2 + \frac{\varepsilon_\sigma L_0}{\rho \xi_1}} \,\Big|\, \tbar{\sigma}_{1,q}^2 = \varsigma \right)} \varsigma^{\frac{q}{2}-1} e^{-\frac{q\varsigma}{2\sigma_1^2}} {\rm d} \varsigma \nonumber\\
    &\hspace*{1cm}\leq 2h_2 \int_\tau^\infty \frac{1}{\mathbb{P} \left(\tau_{1,t} \geq \frac{1}{\sigma_1^2 + \frac{\varepsilon_\sigma L_0}{\rho \xi_1}} \,\Big|\, \tbar{\sigma}_{1,q}^2 = \varsigma \right)} \varsigma^{\frac{q}{2}-1} e^{-\frac{q\varsigma}{2\sigma_1^2}} {\rm d} \varsigma \nonumber\\
    &\hspace*{1cm} \overset{(a)}{\leq} \frac{2 q^{q/2}}{2^{q/2} \Gamma\left(q/2\right)\sigma_1^{q}} \int_\tau^\infty \frac{2^{\frac{q}{2}-1} \Gamma\left(q/2\right)}{q^{\frac{q}{2}-1} \varsigma^{\frac{q}{2}-1}} \left(\sigma_1^2 + \frac{\varepsilon_\sigma L_0}{\rho \xi_1}\right)^{\frac{q}{2}-1} \exp{\left(\frac{q \varsigma}{2\left(\sigma_1^2 + \frac{\varepsilon_\sigma L_0}{\rho \xi_1}\right)}\right)} \varsigma^{\frac{q}{2}-1} e^{-\frac{q\varsigma}{2\sigma_1^2}} {\rm d} \varsigma \nonumber\\
    &\hspace*{1cm}= \frac{q}{\sigma_1^{q}} \left(\sigma_1^2 + \frac{\varepsilon_\sigma L_0}{\rho \xi_1}\right)^{\frac{q}{2}-1} \int_\tau^\infty \exp{\left(-\frac{q \varsigma \varepsilon_\sigma L_0}{2\rho\xi_1\sigma_1^2 \left(\sigma_1^2 + \frac{\varepsilon_\sigma L_0}{\rho \xi_1}\right)}\right)}{\rm d} \varsigma \nonumber\\
    &\hspace*{1cm}= \frac{q}{\sigma_1^{q}} \left(\sigma_1^2 + \frac{\varepsilon_\sigma L_0}{\rho \xi_1}\right)^{\frac{q}{2}-1} \times \frac{2 \rho \xi_1\sigma_1^2 \left(\sigma_1^2 + \frac{\varepsilon_\sigma L_0}{\rho \xi_1}\right)}{q \varepsilon_\sigma L_0} \exp{\left(-\frac{q \varepsilon_\sigma L_0}{2 \rho \xi_1\sigma_1^2}\right)} \nonumber\\
    &\hspace*{1cm}= 2 \rho \xi_1 \left(1 + \frac{\varepsilon_\sigma L_0}{\sigma_1^2 \rho \xi_1}\right)^{\frac{q}{2}} \frac{\sigma_1^2}{ \varepsilon_\sigma L_0} \exp{\left(-\frac{q \varepsilon_\sigma L_0}{2 \rho \xi_1\sigma_1^2}\right)} \nonumber\\
    &\hspace*{1cm}\leq \frac{2 \rho \xi_1\sigma_1^2}{\varepsilon_\sigma L_0} \exp{\left(\frac{q\varepsilon_\sigma L_0}{2\rho \xi_1}\right)} \exp{\left(-\frac{q \varepsilon_\sigma L_0}{2 \rho \xi_1\sigma_1^2}\right)} \nonumber\\
    &\hspace*{1cm}\leq \frac{C_4}{\varepsilon_\sigma} \exp{\left(-\frac{q \varepsilon_\sigma L_0}{2 \rho \xi_1} \left(\frac{1}{\sigma_1^2} - 1 \right)\right)}.
    \label{eq:eq_C}
\end{align}
Step (a) follows from Lemma~\ref{le:lemma_gamma2}.

\textbf{4. Integration over Region D}\\
In this domain, both moments underestimate the risk, making the failure probability strictly bounded by the product of both individual tail probabilities
\begin{align*}
    \frac{1}{H_{1,q}^{\vartheta, \varsigma}} - 1 \leq \frac{4}{\mathbb{P}\left(\theta_{1,t} \geq \mu_1 -\varepsilon_\mu L_0 \right) \cdot \mathbb{P} \left(\tau_{1,t} \geq \frac{1 - \frac{\varepsilon_\sigma}{\xi_1}}{\sigma_1^2 + \frac{\varepsilon_\sigma L_0}{\rho \xi_1}} \right)} \leq \frac{4}{\mathbb{P}\left(\theta_{1,t} \geq \mu_1 -\varepsilon_\mu L_0 \right) \cdot \mathbb{P} \left(\tau_{1,t} \geq \frac{1}{\sigma_1^2 + \frac{\varepsilon_\sigma L_0}{\rho \xi_1}} \right)}. 
\end{align*}
Because the integral across Domain $D$ naturally factors into the independent integrals evaluated in Domain $B$ and Domain $C$, the final bound is strictly the product of their respective asymptotic limits,
\begin{align}
    &h_1h_2 \int_D \left(\frac{1}{H_{1,q}^{\vartheta, \varsigma}} - 1\right) \exp{\left(-\frac{q\left(\vartheta - \mu_1\right)^2}{2\sigma_1^2}\right)} \varsigma^{\frac{q}{2}-1} e^{-\frac{q\varsigma}{2\sigma_1^2}} {\rm d}\vartheta {\rm d} \varsigma \leq \nonumber \\
    &\hspace*{2.5cm} \left( C_2 + \frac{C_3}{\varepsilon_\mu} \right) \exp{\left(-\frac{q \varepsilon_\mu^2 L_0^2}{2\sigma_1^2}\right)} \times \frac{C_4}{\varepsilon_\sigma} \exp{\left(-\frac{q \varepsilon_\sigma L_0}{2 \rho \xi_1} \left(\frac{1}{\sigma_1^2} - 1 \right)\right)}.
    \label{eq:eq_D}
\end{align}

\hspace{-0.34cm}\textbf{Aggregation of the Integrals:}\\
Summing the evaluations using~\eqref{eq:eq_A},~\eqref{eq:eq_B},~\eqref{eq:eq_C}, and~\eqref{eq:eq_A} of Region $A$, $B$, $C$, and $D$ respectively yields the expectation $\mathbb{E}[1/H_{1,q} - 1]$ conditioned on $q$ pulls as
\begin{align*}
    \mathbb{E} \left[\frac{1 - H_{1,s}}{H_{1,s}}\right] &\leq \frac{C_1}{\varepsilon_\mu} \exp\left(-\frac{q \varepsilon_\mu^2 L_0^2}{8} \right) + 8 \exp{\left(-\frac{q\varepsilon_\sigma^2 L_0^2}{16\rho^2 \xi_1^2 \left(\sigma_1^2 + \frac{\varepsilon_\sigma L_0}{\rho \xi_1}\right)^2}\right)} + \left( C_2 + \frac{C_3}{\varepsilon_\mu} \right) \exp{\left(-\frac{q \varepsilon_\mu^2 L_0^2}{2\sigma_1^2}\right)} \\
    &\hspace*{-1cm} + \frac{C_4}{\varepsilon_\sigma} \exp{\left(-\frac{q \varepsilon_\sigma L_0}{2 \rho \xi_1} \left(\frac{1}{\sigma_1^2} - 1 \right)\right)} + \left( C_2 + \frac{C_3}{\varepsilon_\mu} \right) \exp{\left(-\frac{q \varepsilon_\mu^2 L_0^2}{2\sigma_1^2}\right)} \times \frac{C_4}{\varepsilon_\sigma} \exp{\left(-\frac{q \varepsilon_\sigma L_0}{2 \rho \xi_1} \left(\frac{1}{\sigma_1^2} - 1 \right)\right)}.
\end{align*}
Summing this over all possible pull counts $q \in \{1, \dots, \infty\}$ requires resolving series of the form $\sum_{q=1}^\infty e^{-aq} \leq \frac{1}{a}$ for $a > 0$. Applying this geometric series upper bound strictly isolates the $\varepsilon$ margins into a finite polynomial:
\begin{align}
    \sum_{q=1}^\infty \mathbb{E} \left[\frac{1 - H_{1,s}}{H_{1,s}}\right] &\leq \frac{C_1^\prime}{\varepsilon_\mu^3} + \frac{C_2^\prime }{\varepsilon_\sigma^2}\left(\sigma_1^2 + \frac{\varepsilon_\sigma L_0}{\rho \xi_1}\right)^2 + \left(C_2 + \frac{C_3}{\varepsilon_\mu} \right) \frac{C_3^\prime}{\varepsilon_\mu} + \frac{C_4^\prime}{\varepsilon_\sigma} + \frac{C_4}{\varepsilon_\sigma} \left( C_2 + \frac{C_3}{\varepsilon_\mu} \right) \left(\frac{C_5^\prime}{\varepsilon_\mu + C_6^\prime \varepsilon_\sigma} \right) .
    \label{eq:final_1}
\end{align}
This concludes the proof of Lemma \ref{lem:B1}, confirming that the exploration penalty is bounded independent of $n$.
\end{proof}

\subsection{Bounding the Exploitation Regime}
\begin{lemma}
\label{lem:B2}
For any suboptimal arm $i$, the expected number of pulls resulting from the residual failure of the posterior concentration is bounded by:
\begin{align*}
    \mathbb{E}\left[\sum_{q=0}^{n-1}\mathbb{I}\left\{H_{i,q}>\frac{1}{n} \right\} \right] \leq 1 + \max \left\{\frac{2\log{(2n)}}{\big(\Lambda_{1,i} - \varepsilon_\mu L_0\big)^2} , \frac{\log{(2n)}}{h\left(\frac{\sigma_i^2\left(1 - \frac{\varepsilon_\sigma}{\xi_1}\right)}{\sigma_1^2}\right)}\right\} + \frac{8 \sigma_i^2}{\varepsilon_{\mu}^2 L_0^2} + \frac{4 \sigma_1^4 \rho^2\xi_1^2}{\varepsilon_\sigma^2 L_0^2}.
\end{align*}
where $h(x) = (x - 1 - \log x)/2$ and $\Lambda_{1,i} = \mu_1 - \mu_i$.
\end{lemma}
\begin{proof}
To bound the indicator function, we must evaluate the conditions under which $H_{i,q} > 1/n$. Recall that $H_{i,q} = \mathbb{P}_t\left(\hat{\xi}_{i,t} > \xi_1 - \varepsilon \mid s_{i,t} = q \right)$. This probability represents the event where a suboptimal arm generates an erroneously high \ac{SR} sample despite having been pulled $q$ times. We begin by decomposing the sampled \ac{SR} gap to isolate the sampled mean $\theta_{i,t}$ and precision $\tau_{i,t}$, following the algebraic expansion from Lemma~\ref{lem:B1},
\begin{align*}
    \widehat{\xi}_{i,t} - \xi_1 = \frac{\theta_{i,t} - \mu_1}{\left(L_0 + \frac{\rho}{\tau_{i,t}}\right)} + \frac{\rho \mu_1 \left(\sigma_1^2 - \frac{1}{\tau_{i,t}}\right)}{\left(L_0 + \frac{\rho}{\tau_{i,t}}\right)\left(L_0 + \rho\sigma_1^2\right)}.
\end{align*}
Applying the union bound over the partitioned error margins $\varepsilon_\mu + \varepsilon_\sigma = \varepsilon$, we can write $H_{i,q}$ conditioned on $\tbar{\mu}_{i,q} = \mu$ and $\tbar{\sigma}_{i,q}^2 = \sigma^2$ as,
\begin{align*}
    &\mathbb{P} \left(\frac{\theta_{i,t} - \mu_1}{\left(L_0 + \frac{\rho}{\tau_{i,t}}\right)} + \frac{\rho \mu_1 \left(\sigma_1^2 - \frac{1}{\tau_{i,t}}\right)}{\left(L_0 + \frac{\rho}{\tau_{i,t}}\right)\left(L_0 + \rho\sigma_1^2\right)} \geq - \left(\varepsilon_\mu + \varepsilon_\sigma \right) \right) \\
    &\hspace{1.5cm} \leq \mathbb{P} \left(\frac{\theta_{i,t} - \mu_1}{\left(L_0 + \frac{\rho}{\tau_{i,t}}\right)} \geq -\varepsilon_\mu \right) + \mathbb{P} \left(\frac{\rho \mu_1 \left(\sigma_1^2 - \frac{1}{\tau_{i,t}}\right)}{\left(L_0 + \frac{\rho}{\tau_{i,t}}\right)\left(L_0 + \rho\sigma_1^2\right)} \geq -\varepsilon_\sigma \right)\\
    &\hspace{1.5cm} \leq \mathbb{P} \left(\theta_{i,t} - \mu_1 \geq -\varepsilon_\mu L_0 \right) + \mathbb{P} \left(\frac{\sigma_1^2 - \frac{1}{\tau_{i,t}}}{\left(L_0 + \frac{\rho}{\tau_{i,t}}\right)} \geq -\frac{\varepsilon_\sigma}{\rho \xi_1}  \right) \\
    &\hspace{1.5cm} = \mathbb{P} \left(\theta_{i,t} + \frac{\varepsilon_\mu\rho}{\tau_{i,t}} \geq \mu_1 - \varepsilon_\mu L_0 \right) + \mathbb{P} \left(\tau_{i,t} \geq \frac{1 - \frac{\varepsilon_\sigma}{\xi_1}}{\sigma_1^2 + \frac{\varepsilon_\sigma L_0}{\rho \xi_1}} \right) \\
    &\hspace{1.5cm} \leq \mathbb{P} \left( \theta_{i,t} \geq \mu_1 - \frac{\varepsilon_\mu L_0}{2} \right) + \mathbb{P} \left(\frac{\varepsilon_\mu\rho}{\tau_{i,t}} \geq - \frac{\varepsilon_\mu L_0}{2} \right) + \mathbb{P} \left(\tau_{i,t} \geq \frac{1 - \frac{\varepsilon_\sigma}{\xi_1}}{\sigma_1^2 + \frac{\varepsilon_\sigma L_0}{\rho \xi_1}} \right).
\end{align*}
Given the empirical sufficient statistics $\tbar{\mu}_{i,q}$ and $\tbar{\sigma}_{i,q}^2$, we bound these posterior probabilities using the standard Gaussian tail bound and the Gamma concentration bound (from Lemma S-2 in~\cite{zhu2020thompson}) :
\begin{align}
    H_{i,q} \leq \exp{\left(-\frac{q}{2}\left(\mu_1 - \mu - \frac{\varepsilon_\mu L_0}{2} \right)^2\right)} + \exp{\left(-q h\left(\frac{\sigma^2\left(1 - \frac{\varepsilon_\sigma}{\xi_1}\right)}{\sigma_1^2 + \frac{\varepsilon_\sigma L_0}{\rho \xi_1}}\right)\right)}.
\label{eq:posterior_bounds}
\end{align}
where $h(x) = (x-1-\log{x})/2$. The next step if inverting the bounds to find empirical thresholds. For the event $\{H_{i,q} > 1/n\}$ to occur, at least one of the two exponential bounds in~\eqref{eq:posterior_bounds} must exceed $1/(2n)$. This logical implication allows us to map the failure of the posterior concentration directly onto the deviations of the empirical estimates. By taking the natural logarithm of both terms and isolating the empirical moments, we establish the following deterministic thresholds :
\begin{align*}
    \left[H_{i,q} > \frac{1}{n}\right] \implies \left[ \tbar{\mu}_{i,q} \geq \mu_1 - \frac{\varepsilon_\mu L_0}{2} - \sqrt{\frac{2\log{(2n)}}{q}} \right] \bigcup \left[ \frac{\tbar{\sigma}_{i,q}^2\left(1 - \frac{\varepsilon_\sigma}{\xi_1}\right)}{\sigma_1^2 + \frac{\varepsilon_\sigma L_0}{\rho \xi_1}} \leq h_{+}^{-1}\left(\frac{\log{(2n)}}{q}\right) \right],
\end{align*}
where $h_{+}^{-1}(y) = \max\{x : h(x)=y\}$. Note: While $h(\cdot)$ also has a lower inverse branch $h_{-}^{-1}$, bounded distributions ensure the upper branch dominates the failure rate. The next step is to find the frequentist concentration of empirical estimates. We define the sufficient exploration threshold $u$ as the minimum number of pulls required for the true parameters to satisfy these inequalities strictly,
\begin{align*}
    u = \max \left[\frac{2\log{(2n)}}{\big(\Lambda_{1,i} - \varepsilon_\mu L_0\big)^2} , \frac{\log{(2n)}}{h\left(\frac{\sigma_i^2\left(1 - \frac{\varepsilon_\sigma}{\xi_1}\right)}{\sigma_1^2}\right)}\right].
\end{align*}
For all $q \geq u$, the deterministic thresholds fall strictly outside the true moments $\mu_i$ and $\sigma_i^2$. We can thus apply frequentist concentration inequalities (Chernoff bounds) to the empirical statistics. Centering the mean inequality around $\mu_i$ yields $\tbar{\mu}_{i,q} - \mu_i \geq \Lambda_{1,i} - \frac{\varepsilon_\mu L_0}{2} - \sqrt{\frac{2\log(2n)}{q}}$. Bounding the probabilities of these empirical deviations gives
\begin{align*}
    \hspace*{-0.5cm} \mathbb{P} \left(H_{i,q}>\frac{1}{n}\right) &\leq \mathbb{P}\left(\tbar{\mu}_{i,q}+\sqrt{\frac{2 \log (2 n)}{q}} \geq \mu_{1} - \frac{\varepsilon_\mu L_0}{2}\right) + \mathbb{P} \left(h_{-}^{-1}\left(\frac{\log (2 n)}{q}\right) \leq \frac{\tbar{\sigma}_{i,q}^2\left(1 - \frac{\varepsilon_\sigma}{\xi_1}\right)}{\sigma_1^2 + \frac{\varepsilon_\sigma L_0}{\rho \xi_1}} \leq h_{+}^{-1}\left(\frac{\log (2 n)}{q}\right)\right) \\
    &\leq \mathbb{P}\left(\tbar{\mu}_{i, q}-\mu_{i} \geq \Gamma_{i} - \frac{\varepsilon_\mu L_0}{2} - \sqrt{\frac{2 \log (2 n)}{q}}\right) + \mathbb{P}\left(\tbar{\sigma}_{i,q}^{2} \leq \left(\frac{\sigma_1^2 + \frac{\varepsilon_\sigma L_0}{\rho \xi_1}}{1 - \frac{\varepsilon_\sigma}{\xi_1}}\right) h_{+}^{-1}\left(\frac{\log (2 n)}{q}\right)\right) \\
    &\hspace*{-1cm}\leq \exp{\left(-\frac{q}{2\sigma_i^2} \left(\Lambda_{1,i} - \frac{\varepsilon_\mu L_0}{2} - \sqrt{\frac{2 \log (2 n)}{q}}\right)^2\right)} + \exp{\left(-\frac{q-1}{4\sigma_i^4} \left(\left(\frac{\sigma_1^2 + \frac{\varepsilon_\sigma L_0}{\rho \xi_1}}{1 - \frac{\varepsilon_\sigma}{\xi_1}}\right) h_{+}^{-1}\left(\frac{\log (2 n)}{q}\right) - \sigma_i^2\right)^2\right)} \\
    &\hspace*{-1cm}\leq \exp{\left(-\frac{q \varepsilon_{\mu}^2 L_0^2}{8\sigma_i^2} \right)} + \exp{\left(-\frac{q-1}{4\sigma_1^4} \left(\frac{\varepsilon_\sigma^2 L_0^2}{\rho^2\xi_1^2}\right)\right)}.
\end{align*}
Next, we proceed with the summation over the horizon. We bound the expected exploitation penalty by summing the indicator over all $q$ up to $n$. We aggressively bound the first $u$ terms by $1$, and apply the bounds derived above for $q \geq u$ :
\begin{align*}
\sum_{q=1}^n \mathbb{P} \left(H_{i,q} > \frac{1}{n}\right) &\leq u + \sum_{q=\lceil u \rceil}^n \left[ \exp{\left(-\frac{q \varepsilon_{\mu}^2 L_0^2}{8\sigma_i^2} \right)} + \exp{\left(-\frac{q-1}{4\sigma_1^4} \left(\frac{\varepsilon_\sigma^2 L_0^2}{\rho^2\xi_1^2}\right)\right)} \right].
\end{align*}
Applying the geometric series bound $\sum e^{-ax} \leq 1/a$ to the decaying exponentials completes the proof:
\begin{align*}
    \sum_{q=1}^n \mathbb{P} \left(H_{i,q} > \frac{1}{n}\right) \leq 1 + \max \left[\frac{2\log{(2n)}}{\big(\Lambda_{1,i} - \varepsilon_\mu L_0\big)^2} , \frac{\log{(2n)}}{h\left(\frac{\sigma_i^2\left(1 - \frac{\varepsilon_\sigma}{\xi_1}\right)}{\sigma_1^2}\right)}\right] + \frac{8 \sigma_i^2}{\varepsilon_{\mu}^2 L_0^2} + \frac{4 \sigma_1^4 \rho^2\xi_1^2}{\varepsilon_\sigma^2 L_0^2}.
\end{align*}
This completes the proof of Lemma \ref{lem:B2}.
\end{proof}

\subsection{Conclusion of the Proof of Theorem \ref{th:pull_bound}}
With the expected exploration penalty strictly bounded in Lemma~\ref{lem:B1} and the expected exploitation penalty strictly bounded in Lemma~\ref{lem:B2}, we can now synthesize the global bound on the expected number of suboptimal pulls. Substituting the results of Lemma~\ref{lem:B1} and Lemma~\ref{lem:B2} back into the decoupling framework of Equation~\eqref{eq:app_lemma_1}, we obtain:
\begin{align*}
    \mathbb{E} \left[s_{i,n}\right] &\leq \frac{C_1^\prime}{\varepsilon_\mu^3} + \frac{C_2^\prime }{\varepsilon_\sigma^2}\left(\sigma_1^2 + \frac{\varepsilon_\sigma L_0}{\rho \xi_1}\right)^2 + \left(C_2 + \frac{C_3}{\varepsilon_\mu} \right) \frac{C_3^\prime}{\varepsilon_\mu} + \frac{C_4^\prime}{\varepsilon_\sigma} + \frac{C_4}{\varepsilon_\sigma} \left( C_2 + \frac{C_3}{\varepsilon_\mu} \right) \left(\frac{C_5^\prime}{\varepsilon_\mu + C_6^\prime \varepsilon_\sigma} \right) + \\
    &\hspace*{3cm} 1 + \max \left\{\frac{2\log{(2n)}}{\big(\Lambda_{1,i} - \varepsilon_\mu L_0\big)^2} , \frac{\log{(2n)}}{h\left(\frac{\sigma_i^2\left(1 - \frac{\varepsilon_\sigma}{\xi_1}\right)}{\sigma_1^2}\right)}\right\} + \frac{8 \sigma_i^2}{\varepsilon_{\mu}^2 L_0^2} + \frac{4 \sigma_1^4 \rho^2\xi_1^2}{\varepsilon_\sigma^2 L_0^2} \\
    &\leq 1 + \max \left\{\frac{2\log{(2n)}}{\big(\Lambda_{1,i} - \varepsilon_\mu L_0\big)^2} , \frac{\log{(2n)}}{h\left(\frac{\sigma_i^2\left(1 - \frac{\varepsilon_\sigma}{\xi_1}\right)}{\sigma_1^2}\right)}\right\} + \frac{X_2}{\varepsilon^3} + \frac{X_3}{\varepsilon^2} + \frac{X_4}{\varepsilon} + X_5. 
\end{align*}
Because $\varepsilon_\mu$ and $\varepsilon_\sigma$ are strictly proportional to the total error margin $\varepsilon$, we can absorb the highest-order polynomial dependencies from both lemmas into universal distribution-dependent constants $X_2, X_3, X_4,$ and $X_5$. Rearranging the terms yields:
\begin{align*}
\mathbb{E} [s_{i,n}] \leq 1 + \max \left[\frac{2\log{(2n)}}{\big(\Lambda_{1,i} - \varepsilon_\mu L_0\big)^2} , \frac{\log{(2n)}}{h\left(\frac{\sigma_i^2\left(1 - \frac{\varepsilon_\sigma}{\xi_1}\right)}{\sigma_1^2}\right)}\right] + \frac{X_2}{\varepsilon^3} + \frac{X_3}{\varepsilon^2} + \frac{X_4}{\varepsilon} + X_5.
\end{align*}
This perfectly establishes the bound presented in Equation \eqref{eq:res_1}.

To complete the connection to the final regret bound in Theorem \ref{th:regret_bound}, we dynamically tune the error margin as a decaying function of the horizon, setting $\varepsilon = (\log n)^{-1/4}$. Substituting this into the polynomial constants gives:
\begin{align*}
\frac{X_2}{\varepsilon^3} + \frac{X_3}{\varepsilon^2} + \frac{X_4}{\varepsilon} = X_2 (\log n)^{\frac{3}{4}} + X_3 (\log n)^{\frac{1}{2}} + X_4 (\log n)^{\frac{1}{4}}.
\end{align*}
Because these terms grow strictly slower than the leading $\mathcal{O}(\log n)$ threshold $u$, the expected number of suboptimal pulls remains logarithmically bounded. Substituting this final expression for $\mathbb{E}[s_{i,n}]$ into the algorithmic regret decomposition from Theorem~\ref{th:th_dec} formally yields the finite-time expected regret of the \texttt{SRTS} policy.

\hfill $\square$

\section{Proof of Theorem~\ref{th:lower_bound}}
\label{pro:th4}
This appendix provides the complete mathematical derivation for the model-specific lower bound established in Theorem~\ref{th:lower_bound}. To construct this proof, we proceed in three logical steps. First, in Appendix~\ref{sub:le4}, we derive a pseudo-regret lower bound (Lemma~\ref{lem:lem_2}), demonstrating through a covariance decomposition that the cumulative regret is fundamentally bottlenecked by the expected number of suboptimal arm pulls. Next, in Appendix~\ref{sub:le5}, we utilize a change-of-measure argument to establish an information-theoretic lower bound on these pulls (Lemma~\ref{lem:lem_3}), proving that any uniformly $\alpha$-consistent policy must sample suboptimal arms at a rate bounded by their \ac{KL} divergence. Finally, in Appendix~\ref{sub:le6}, we synthesize these auxiliary results, formally evaluating the limits across both asymptotic and finite-time regimes to conclude the proof of Theorem~\ref{th:lower_bound}.

\subsection{Regret Lower Bound via Suboptimal Pulls}
\label{sub:le4}
\textbf{Lemma \ref{lem:lem_2} (Restated).}
The expected regret of a policy $\pi$ over $n$ rounds for \ac{SR} is given as
\begin{align*}
    \mathbb{E}\left[\mathcal{R}_n(\pi)\right] \geq \sum_{i=2}^K \mathbb{P}\left( s_{i,n} \geq \frac{G_1 \log n}{I(f_i,f_1)}\right) \frac{G_1 \log n}{I(f_i,f_1)} \Delta_i \left( \frac{L_0 + \rho \sigma_i^2}{L_0 + \rho \sigma_{\max}^2} \right) - Y_1,
\end{align*}
where $G_1$ and $Y_1$ are constants.

\begin{proof}
We begin by expressing the expected cumulative regret using the difference between the optimal \ac{SR} and the algorithmic \ac{SR},
\begin{align}
    &\mathbb{E} \left[\mathcal{R}_n(\pi)\right] = n\mathbb{E}\left[\xi_1 - \tbar{\xi}_n(\pi)\right] \nonumber\\
    &= n\mathbb{E} \left[\frac{1}{n} \sum_{i=1}^K s_{i,n} \frac{\mu_1}{L_0 + \rho\sigma^2_1} - \frac{ \frac{1}{n} \sum_{i=1}^K s_{i,n} \tbar{\mu}_{i,s_{i,n}}}{L_0 + \rho\left(\frac{1}{n} \sum_{i=1}^K s_{i,n} \tbar{\sigma}_{i,s_{i,n}}^2 + \frac{1}{2n^2} \sum_{i=1}^K \sum_{j\neq i} s_{i,n} s_{j,n} \left(\tbar{\mu}_{i,s_{i,n}} - \tbar{\mu}_{j,s_{j,n}} \right)^2\right)} \right] \nonumber\\
    &= n \left(\frac{1}{n} \sum_{i=1}^K \mathbb{E}[s_{i,n}] \frac{\mu_1}{L_0 + \rho\sigma^2_1} - \mathbb{E} \left[\frac{ \frac{1}{n} \sum_{i=1}^K s_{i,n} \tbar{\mu}_{i,s_{i,n}}}{L_0 + \underbrace{\rho\left(\frac{1}{n} \sum_{i=1}^K s_{i,n} \tbar{\sigma}_{i,s_{i,n}}^2 + \frac{1}{2n^2} \sum_{i=1}^K \sum_{j\neq i} s_{i,n} s_{j,n} \left(\tbar{\mu}_{i,s_{i,n}} - \tbar{\mu}_{j,s_{j,n}} \right)^2\right)}_{D} } \right]\right).
    \label{eq:eq_29}
\end{align}
To bound the expectation of the fractional term, we apply the covariance decomposition. Applying the Cauchy-Schwarz inequality to the covariance penalty yields,
\begin{align}
    &\mathbb{E} \left[\frac{\frac{1}{n} \sum_{i=1}^K s_{i,n} \tbar{\mu}_{i,s_{i,n}}}{L_0 + D} \right] =  \mathbb{E} \left[\frac{1}{n} \sum_{i=1}^K s_{i,n} \tbar{\mu}_{i,s_{i,n}}\right] \mathbb{E} \left[\frac{1}{L_0 + D}\right] + \text{Cov} \left(\frac{1}{n} \sum_{i=1}^K s_{i,n} \tbar{\mu}_{i,s_{i,n}}, \frac{1}{L_0 + D}\right) \nonumber\\
    &\leq \mathbb{E} \left[\frac{1}{n} \sum_{i=1}^K s_{i,n} \tbar{\mu}_{i,s_{i,n}}\right] \mathbb{E} \left[\frac{1}{L_0 + D}\right] + \sqrt{\mathbb{V}\left[\frac{1}{n} \sum_{i=1}^K s_{i,n} \tbar{\mu}_{i,s_{i,n}}\right] \mathbb{V} \left[\frac{1}{L_0 + D}\right]} \nonumber\\
    &\overset{(a)}{\leq} \left[\frac{1}{n} \sum_{i=1}^K s_{i,n} \tbar{\mu}_{i,s_{i,n}}\right] \mathbb{E} \left[\frac{1}{L_0 + D}\right] + \sqrt{\frac{Q_1}{n} \times \frac{Q_2}{n}}.
    \label{eq:eq_2}
\end{align}
Step (a) follows from Lemma~\ref{le:le_2} and~\ref{le:le_3}. Next, we must upper-bound the expectation $\mathbb{E}\left[\frac{1}{L_0 + D}\right]$. We apply the distribution-free reciprocal expectation bound from Wooff~\cite{wooff1985bounds},
\begin{align*}
    \mathbb{E}\left[\frac{1}{L_0 + D}\right] \leq \gamma \left( 1 + \frac{\mathbb{V}[D]}{\mathbb{E}[D] L_0} \right).
\end{align*} 
where the scaling factor is defined as $\gamma = \frac{\mathbb{E}[D]}{\mathbb{V}[D] + \mathbb{E}[D](\mathbb{E}[D] + L_0)}$. Because variances are strictly non-negative i.e., $\mathbb{V}[D] \geq 0$, we trivially extract the strict upper bound $\gamma \leq \frac{\mathbb{E}[D]}{\mathbb{E}[D](\mathbb{E}[D] + L_0)} = \frac{1}{\mathbb{E}[L_0 + D]}$. Substituting this, along with the bound $\mathbb{V}[D] \leq \frac{\rho^2 Q_2}{n}$ from Lemma~\ref{le:le_3}, and $\mathbb{E}[L_0 + D] \geq L_0$ isolates the finite-time penalty as
\begin{align}
    \mathbb{E}\left[\frac{1}{L_0 + D}\right] \leq \frac{1}{\mathbb{E}[L_0 + D]} + \frac{\mathbb{V}[D]}{\mathbb{E}[L_0 + D] \mathbb{E}[D] L_0} \leq \frac{1}{\mathbb{E}[L_0 + D]} + \frac{1}{n} \left( \frac{\rho^2 Q_2}{\mathbb{E}[D] L_0^2} \right)
    \label{eq:eq_30}
\end{align}
From~\eqref{eq:eq_29}, $D$ is composed of the empirical arm variance $V = \frac{1}{n} \sum_{i=1}^K s_{i,n} \tbar{\sigma}_{i,s_{i,n}}^2$ and the strictly positive cross-arm switching penalty $W = \frac{1}{2n^2} \sum_{i=1}^K \sum_{j\neq i} s_{i,n} s_{j,n} \left(\tbar{\mu}_{i,s_{i,n}} - \tbar{\mu}_{j,s_{j,n}} \right)^2$. We show 
\begin{align}
    \mathbb{E}[D] = \rho \mathbb{E}[V + W] \geq \rho \mathbb{E}[V] = \frac{\rho}{n} \sum_{i=1}^K \sigma_i^2 \mathbb{E}[s_{i,n}].
    \label{eq:eq_31}
\end{align}
Let $\sigma_{\min}^2 = \min_{i \in [K]} \sigma_i^2$ denote the minimum arm variance. We can unconditionally bound the sum:
\begin{align*}
    \mathbb{E}[D] \geq \frac{\rho \sigma_{\min}^2}{n} \sum_{i=1}^K \mathbb{E}[s_{i,n}] = \rho \sigma_{\min}^2.
\end{align*}
Substituting $\frac{1}{\mathbb{E}[D]} \leq \frac{1}{\rho \sigma_{\min}^2}$ into~\eqref{eq:eq_30} our bound yields,
\begin{align*}
    \mathbb{E}\left[\frac{1}{L_0 + D}\right] \leq \frac{1}{\mathbb{E}[L_0 + D]} + \frac{1}{n} \left( \frac{\rho Q_2}{L_0^2 \sigma_{\min}^2} \right).
\end{align*}
Substituting both the covariance bound and the exact expectation bound back into~\eqref{eq:eq_29}, we obtain,
\begin{align*}
    \mathbb{E} \left[\mathcal{R}_n (\pi)\right] &\geq n \left(\frac{1}{n} \sum_{i=1}^K \mathbb{E}[s_{i,n}] \xi_1 - \frac{1}{n} \sum_{i=1}^K \mu_i \mathbb{E}[s_{i,n}] \left(\frac{1}{\mathbb{E}[L_0 + D]} + \frac{1}{n} \left( \frac{\rho Q_2}{L_0^2 \sigma_{\min}^2} \right)\right) - \sqrt{\frac{Q_1}{n} \times \frac{Q_2}{n}} \right) \\
    &= n \left(\frac{1}{n} \sum_{i=1}^K \mathbb{E}[s_{i,n}] \xi_1 - \frac{1}{n} \sum_{i=1}^K \mu_i \mathbb{E}[s_{i,n}] \frac{1}{\mathbb{E}[L_0 + D]} - \frac{1}{n^2} \sum_{i=1}^K \frac{\mu_i \rho Q_2}{L_0^2 \sigma_{\min}^2} \mathbb{E}[s_{i,n}] - \sqrt{\frac{Q_1}{n} \times \frac{Q_2}{n}} \right) \\
    &\overset{(a)}{\geq} n \left(\frac{1}{n} \sum_{i=1}^K \mathbb{E}[s_{i,n}] \xi_1 - \frac{1}{n} \sum_{i=1}^K \mu_i \mathbb{E}[s_{i,n}] \frac{1}{\mathbb{E}[L_0 + D]} - \frac{\mu_{\max} \rho Q_2}{n L_0^2 \sigma_{\min}^2} - \frac{\sqrt{Q_1 Q_2}}{n} \right) \\
    &\overset{(b)}{\geq} n\left(\frac{1}{n} \sum_{i=1}^K \mathbb{E}[s_{i,n}] \xi_1 - \frac{\frac{1}{n} \sum_{i=1}^K \mu_i \mathbb{E}[s_{i,n}]}{L_0 + \frac{\rho}{n} \sum_{i=1}^K \sigma_i^2 \mathbb{E}[s_{i,n}]} - \frac{1}{n}\underbrace{\left(\frac{\mu_{\max} \rho Q_2}{ L_0^2 \sigma_{\min}^2} + \sqrt{Q_1 Q_2} \right)}_{Y_1} \right).
\end{align*}
Step (a) follows by replacing $\mu_i$ with $\mu_{\max}$ where $\mu_{\max} = \max_{i \in [K]} \mu_i$. And step (b) follows from~\eqref{eq:eq_31}. Using the exact sub-optimality gap identity $\Delta_i(L_0 + \rho \sigma_i^2) = \xi_1(L_0 + \rho \sigma_i^2) - \mu_i$, we substitute for $\mu_i$ directly in the numerator and get
\begin{align*}
    \mathbb{E} \left[\mathcal{R}_n (\pi)\right] &\geq n \left(\frac{1}{n} \sum_{i=1}^K \mathbb{E}[s_{i,n}] \xi_1 - \frac{\frac{1}{n} \sum_{i=1}^K \mu_i \mathbb{E}[s_{i,n}]}{L_0 + \frac{\rho}{n} \sum_{i=1}^K \sigma_i^2 \mathbb{E}[s_{i,n}]} \right) - Y_1 \\
    &= n \left(\frac{1}{n} \sum_{i=1}^K \mathbb{E}[s_{i,n}] \xi_1 - \frac{\frac{1}{n} \sum_{i=1}^K \Big[ \xi_1(L_0 + \rho \sigma_i^2) - \Delta_i(L_0 + \rho \sigma_i^2) \Big] \mathbb{E}[s_{i,n}]}{L_0 + \frac{\rho}{n} \sum_{i=1}^K \sigma_i^2 \mathbb{E}[s_{i,n}]} \right) - Y_1  \\
    &= n \left(\frac{1}{n} \sum_{i=1}^K \mathbb{E}[s_{i,n}] \xi_1 - \frac{\frac{1}{n} \xi_1 L_0 \sum_{i=1}^K \mathbb{E}[s_{i,n}] + \frac{1}{n} \xi_1 \rho \sum_{i=1}^K \sigma_i^2 \mathbb{E}[s_{i,n}] - \frac{1}{n} \sum_{i=1}^K \mathbb{E}[s_{i,n}] \Delta_i (L_0 + \rho \sigma_i^2)}{L_0 + \frac{\rho}{n} \sum_{i=1}^K \sigma_i^2 \mathbb{E}[s_{i,n}]} \right) - Y_1 \\
    &= n \left(\frac{1}{n} \sum_{i=1}^K \mathbb{E}[s_{i,n}] \xi_1 - \frac{\xi_1 \left( L_0 + \frac{\rho}{n} \sum_{i=1}^K \sigma_i^2 \mathbb{E}[s_{i,n}] \right) - \frac{1}{n} \sum_{i=1}^K \mathbb{E}[s_{i,n}] \Delta_i (L_0 + \rho \sigma_i^2)}{L_0 + \frac{\rho}{n} \sum_{i=1}^K \sigma_i^2 \mathbb{E}[s_{i,n}]} \right) - Y_1 \\
    &= n \xi_1 - n \xi_1 + \frac{\sum_{i=1}^K \mathbb{E}[s_{i,n}] \Delta_i (L_0 + \rho \sigma_i^2)}{L_0 + \frac{\rho}{n} \sum_{i=1}^K \sigma_i^2 \mathbb{E}[s_{i,n}]}  - Y_1 \\
    &\geq \sum_{i=1}^K \mathbb{E}[s_{i,n}] \Delta_i \left(\frac{L_0 + \rho \sigma_i^2}{L_0 + \rho \sigma_{\max}^2}\right)  - Y_1.
\end{align*}
Applying Markov's inequality to the expected pulls $\mathbb{E}[s_{i,n}]$ establishes the final lower bound
\begin{align}
    \mathbb{E}\left[\mathcal{R}_n(\pi)\right] \geq \sum_{i=2}^K \mathbb{P}\left( s_{i,n} \geq \frac{G_1 \log n}{I(f_i,f_1)}\right) \frac{G_1 \log n}{I(f_i,f_1)} \Delta_i \left( \frac{L_0 + \rho \sigma_i^2}{L_0 + \rho \sigma_{\max}^2} \right) - Y_1.
\end{align}
\end{proof}

\subsection{Asymptotic Pull Count via Change-of-Measure}
\label{sub:le5}
\textbf{Lemma \ref{lem:lem_3} (Restated).}
For any $\alpha$-consistent policy $\pi$ (Assumption~\ref{ass:continuity}), and for any constant $G_1 < 1-\alpha$, the probability of playing a suboptimal arm $i \neq 1$ is
\begin{align*}
    \lim_{n \rightarrow \infty} \mathbb{P}_\mathcal{F}\left[s_{i,n} \geq  \frac{G_1 \log n}{I(f_i,f_{1})}\right]=1.
\end{align*}
Furthermore, with Assumption~\ref{ass:stability}, there exists $n_o \in \mathbb{N}$ such that
\begin{align*}
    \mathbb{P}_\mathcal{F}\left[s_i(n) \geq  \frac{G_1 \log n}{I(f_i,f_{*})}\right]\geq G_2, \quad \textit{for all}\quad  n > n_0,
\end{align*}
where constant $0 < G_2 < 1$ is independent of $n$ and $\mathcal{F}$.

\begin{proof}
Let $\mathcal{F} = (f_1, \dots, f_K)$ denote the true environment where arm $1$ is strictly optimal. We construct a perturbed distribution model $\mathcal{F}^{i} = (f_1, \dots, \tilde{f}_i, \dots, f_K)$ by modifying only the distribution of arm $i$ such that it becomes the unique optimal arm in the new environment. By Assumption \ref{ass:continuity} (Information Continuity), we can choose $\tilde{f}_i$ such that the \ac{SR} gap is arbitrarily small and the \ac{KL} divergence satisfies $I(f_i, \tilde{f}_i) \approx I(f_i, f_{1})$ .We define the log-likelihood ratio of the observations from arm $i$ under the two environments as
\begin{align*}
    \gamma = \sum_{s=1}^{s_{i,n}} \log \frac{f_i(X_i(t_i(s)))}{\tilde{f}_i(X_i(t_i(s)))}.
\end{align*}
where $t_i(s_{i,n})$ is the time step when the $s_{i,n}$-th observation of arm $i$ occurred. We show that it is unlikely to have $s_{i,n} < \frac{G_1 \log n}{I (f_i ,\tilde{f}_i)}$ under two different scenarios for $\gamma$, i.e, $\gamma > G_3 \log n$ and  $\gamma < G_3 \log n$.

\textbf{Case 1: The Likelihood Ratio is Large i.e. $\gamma > G_3 \log n$ for a constant $G_3>G_1$.} \\
We bound the failure probability when the empirical evidence strongly favors the original environment $\mathcal{F}$,
\begin{align*}
    \mathbb{P}_\mathcal{F}\left[s_{i,n} <  \frac{G_1 \log n}{I(f_i,\tilde{f}_i)}, \gamma > G_3 \log n\right] &=  \mathbb{P}_\mathcal{F}\left[s_{i,n} <  \frac{G_1 \log n}{I(f_i,\tilde{f}_i)}, \sum_{\tau=1}^{s_{i,n}} \log \frac{f_i (X_k(\tau))}{\tilde{f}_i(X_k(\tau))} > G_3 \log n\right]\\
    &\leq  \mathbb{P}_\mathcal{F} \left[ \max_{t\leq \frac{G_1 \log n}{{I(f_i,\tilde{f}_i)}}}\sum_{\tau=1}^{t} \log\frac{f_i (X_k(\tau))}{\tilde{f}_i(X_k(\tau))}> \frac{G_3 \log n}{I(f_i,\tilde{f}_i)} I(f_i,\tilde{f}_i)\right] \\ 
    &\leq  \mathbb{P}_\mathcal{F} \left[ \max_{t\leq \frac{G_1 \log n}{{I(f_i,\tilde{f}_i)}}} \frac{1}{t}\sum_{\tau=1}^{t} \log\frac{f_i (X_k(\tau))}{\tilde{f}_i(X_k(\tau))}> \frac{G_3}{G_1}{I(f_i,\tilde{f}_i)} \right]. 
\end{align*}
By the strong law of large numbers, the empirical mean of the log-likelihood ratio converges almost surely to the true \ac{KL} divergence $I(f_i, \tilde{f}_i)$ as $n \to \infty$. Because $G_3 > G_1$, the ratio $G_3/G_1 > 1$, making the threshold strictly greater than the mean. Consequently, this probability vanishes asymptotically. In conclusion, for  $\gamma > G_3 \log n$, by strong law of large numbers, we have,
\begin{align}
    \mathbb{P}_\mathcal{F}\left[s_{i,n} <  \frac{G_1 \log n}{I(f_i,\tilde{f}_i)}, \gamma > G_3 \log n\right] \rightarrow 0, \,\text{as}\, n \rightarrow \infty.
    \label{eq:case1}
\end{align}
Furthermore, under Assumption \ref{ass:stability} (Distributional Stability), the log-likelihood possesses sub-Gaussian tails, allowing us to apply the Chernoff bound to strictly bound the finite-time probability $\mathbb{P}_\mathcal{F}\left[s{i,n} <  \frac{G_1 \log n}{I(f_i,\tilde{f}_i)}, \gamma > G_3 \log n\right] \leq \frac{G_1 \log n}{I(f_i,\tilde{f}_i)} n^{-a I(f_i,\tilde{f}_i ) \frac{(G_3-G_1) ^2}{G_1}}$. Here $a$ is the Chernoff bound constant. This result, is obtained from Eq.$(52)$ of \cite{vakili2015mean}) and is given below,
\begin{align}
     \mathbb{P}_\mathcal{F}\left[s_{i,n} <  \frac{G_1 \log n}{I(f_i,\tilde{f}_i)}, \gamma > G_3 \log n \right] &=  \mathbb{P}_\mathcal{F}\left[s_{i,n} <  \frac{G_1 \log n}{I(f_i,\tilde{f}_i)}, \sum_{s=1}^{s_{i,n}} \log \frac{f_i(X_i(s))}{\tilde{f}_i(X_i(s))} > G_3 \log n\right]\nonumber\\
     &\leq \mathbb{P}_\mathcal{F} \left[\max_{t < \frac{G_1 \log n}{I(f_i,\tilde{f}_i)}}, \sum_{s=1}^{t} \log \frac{f_i(X_i(s))}{\tilde{f}_i(X_i(s))} > G_3 \log n\right]\nonumber\\
     &\leq \sum_{t=1}^{\frac{G_1 \log n}{I(f_i,\tilde{f}_i)}}\mathbb{P}_\mathcal{F} \left[ \frac{1}{t} \sum_{s=1}^{t} \log \frac{f_i(X_i(s))}{\tilde{f}_i(X_i(s))} - \frac{1}{t} G_1 \log {n} > \frac{1}{t} G_3 \log {n} - \frac{1}{t} G_1 \log {n}\right] \nonumber\\
     &\leq \sum_{t=1}^{ \frac{G_1 \log n}{I(f_i,\tilde{f}_i)}}\mathbb{P}_\mathcal{F} \left[ \frac{1}{t} \sum_{s=1}^{t} \log \frac{f_i(X_i(s))}{\tilde{f}_i(X_i(s))} - I(f_i,\tilde{f}_i) > \frac{1}{t} G_3 \log {n} - \frac{1}{t} G_1 \log {n}\right]\nonumber\\
     &\overset{(a)}{\leq} \sum_{t=1}^{ \frac{G_1 \log n}{I(f_i,\tilde{f}_i)}} \exp (\frac{-a_1(G_3-G_1)^2) \log^2 n }{t}) \nonumber\\ 
     &\overset{(b)}{\leq} \frac{G_1 \log n}{I(f_i,\tilde{f}_i)} n^{-a I(f_i,\tilde{f}_i )
      \frac{(G_3-G_1) ^2}{G_1}}.
      \label{eq:B}
\end{align}
Step (a) holds because $I(f_i,\tilde{f}_i) >  \frac{1}{t} G_1 \log {n}$ and step (b) holds according to the Chernoff bound.

\textbf{Case 2: The Likelihood Ratio is Small ($\gamma \leq G_3 \log n$)} \\
When the likelihood ratio is small, the sample path under $\mathcal{F}$ is statistically indistinguishable from a path under $\mathcal{F}^i$. We formalize this by applying the Radon-Nikodym theorem to change the probability measure from $\mathbb{P}_\mathcal{F}$ to $\mathbb{P}_{\mathcal{F}^i}$,
\begin{align}
     &\mathbb{P}_\mathcal{F}\left[s_{i,n} <  \frac{G_1 \log n}{I(f_i,\tilde{f}_i)}, \gamma \leq G_3 \log n \right] = \mathbb{E}_\mathcal{F}\left[\mathbb{I}\left\{s_{i,n} <  \frac{G_1 \log n}{I\left(f_i,\tilde{f}_i\right)}, \gamma \leq G_3 \log n \right\}\right]\nonumber\\
     &=  \mathbb{E}_{\mathcal{F}^i}\left[\mathbb{I}\left\{s_{i,n} <  \frac{G_1 \log n}{I\left(f_i,\tilde{f}_i\right)}, \gamma \leq G_3 \log n \right\} \times e^\gamma \right] \nonumber \\
     &\overset{(a)}{\leq} n ^{G_3} \mathbb{E}_{\mathcal{F}^i} \left[\mathbb{I}\left\{s_{i,n} <  \frac{G_1 \log n}{I(f_i,\tilde{f}_i)}, \gamma \leq G_3 \log n \right\} \right] \leq  n ^{G_3} \mathbb{P}_{\mathcal{F}^i}\left[{s_{i,n} < \frac{G_1 \log n}{I\left(f_i,\tilde{f}_i\right)}} \right].
\end{align}
Step (a) follows from substituting $\gamma \leq G_3 \log{n}$ in $e^\gamma$. Under the perturbed environment $\mathcal{F}^i$, arm $i$ is optimal. By the definition of $\alpha$-consistency, the expected number of times the policy plays any other arm (i.e., $n - s_{i,n}$) is strictly bounded by $K n^\alpha$. Applying Markov's inequality under $\mathcal{F}^i$ yields:
\begin{align}
     & n ^{G_3} \mathbb{P}_{\mathcal{F}^i}\left[{s_{i,n} < \frac{G_1 \log n}{I\left(f_i,\tilde{f}_i\right)}} \right] \leq \frac { n ^{G_3} \mathbb{E}_{\mathcal{F}^i}\left[n-s_{i,n}\right]}{n- \frac{G_1 \log n}{I(f_i,\tilde{f}_i)}} \leq \frac {Kn^{G_3+\alpha}}{{n- \frac{G_1 \log n}{I(f_i,\tilde{f}_i)}}}.
     \label{eq:case2}
\end{align}
Case $1$ uses concentration of the empirical log-likelihood ratio around $I(f_i,\tilde f_i)$; Case $2$ transports probabilities across models using the likelihood ratio identity and then applies $\alpha$-consistency to upper bound $\mathbb{E}_{\mathcal{F}^i} \left[n - s_{i,n}\right]$. Together, these forces $s_{i,n} \gtrsim \frac{G_1}{I(f_i,f_*}\log n$ with constant probability precisely the ingredient needed in the regret lower bound. From \eqref{eq:case1}, \eqref{eq:case2} and the fact that $ {I(f_i,f_{*})}- {I(f_i,\tilde{f}_i)}$ can be arbitrarily small, for $G_3<1-\alpha$, we have,
\begin{align*}
   \mathbb{P}_\mathcal{F}\left[s_{i,n} <  \frac{G_1 \log n}{I(f_i,f_{*})}\right] \rightarrow 0 \,  \text{as}\, n \rightarrow \infty .
\end{align*}
Because we deliberately chose the constants such that $G_3 < 1 - \alpha$, the exponent $G_3 + \alpha < 1$. Thus, the numerator grows strictly slower than the linear denominator, driving this probability to $0$ as $n \to \infty$. Equivalently
\begin{align*}
    \mathbb{P}_\mathcal{F}\left[s_{i,n} \geq \frac{G_1 \log n}{I(f_i,f_{*})}\right] \rightarrow 1 \,  \text{as}\, n \rightarrow \infty .
\end{align*}
Combining \eqref{eq:case1} and \eqref{eq:case2}, we conclude, for $G_3 < 1 -\alpha$, when Assumption 2 is satisfied, 
\begin{align*}
    \mathbb{P}_\mathcal{F}\left[s_{i,n} <  \frac{G_1 \log n}{I(f_i,f_{*})}\right] &\leq \frac{G_1 \log n}{I(f_i,\tilde{f}_i)} n^{-a I(f_i,\tilde{f}_i )\frac{(G_3-G_1) ^2}{G_1}} + \frac {Kn^{G_3+\alpha}}{{n- \frac{G_1 \log n}{I(f_i,\tilde{f}_i)}}}.
\end{align*}
Thus, there is a  $n_0$ such that for $n>n_0$, $\mathbb{P}_\mathcal{F}\left[s_{i,n} \geq \frac{G_1 \log n}{I(f_i,f_{*})}\right] \geq G_2$,
for some constant $G_2 > 0$ independent of $n$ and $\mathcal{F}$. We emphasize that constant $G_1$ and $G_3$ are chosen to satisfy $G_1 < G_3 < 1-\alpha$.
\end{proof}

\subsection{Synthesis of the Lower Bound (Proof of Theorem~\ref{th:lower_bound})}
\label{sub:le6}
\begin{proof}
To finalize the proof of Theorem \ref{th:lower_bound}, we must establish a strict lower bound on the expected number of suboptimal pulls, $\mathbb{E}[s_{i,n}]$, and integrate it into the exact regret decomposition derived in Lemma \ref{lem:lem_2}. 

We begin by applying Markov's inequality. We can bound the expected pulls of any suboptimal arm $i$ by the probability of the corresponding concentration event:
\begin{align}
    \mathbb{E}[s_{i,n}] \geq \mathbb{P}_\mathcal{F}\left[ s_{i,n} \geq \frac{G_1 \log n}{I(f_i,f_1)}\right] \frac{G_1 \log n}{I(f_i,f_1)}.
    \label{eq:markov_pulls}
\end{align}
We analyze this bound under both asymptotic and finite-time regimes based on the distinct probability limits established previously.

\textbf{Asymptotic Regime (Under Assumption \ref{ass:continuity}):} \\
Dividing both sides of \eqref{eq:markov_pulls} by $\log n$ and taking the limit inferior as $n \to \infty$ yields:
\begin{align*}
    \liminf_{n \rightarrow \infty} \frac{\mathbb{E}[s_{i,n}]}{\log n} \geq \lim_{n \rightarrow \infty} \mathbb{P}_\mathcal{F}\left[ s_{i,n} \geq \frac{G_1 \log n}{I(f_i,f_1)}\right] \frac{G_1}{I(f_i,f_1)}.
\end{align*}
Under Assumption \ref{ass:continuity}, the probability of sufficiently sampling a suboptimal arm under a uniformly $\alpha$-consistent policy approaches $1$ asymptotically. Thus, the limit evaluates strictly to:
\begin{align}
    \liminf_{n \rightarrow \infty} \frac{\mathbb{E}[s_{i,n}]}{\log n} \geq \frac{G_1}{I(f_i,f_1)}.
    \label{eq:asymp_pull_bound}
\end{align}
To evaluate the asymptotic regret, we divide the pseudo-regret lower bound from Lemma \ref{lem:lem_2} by $\log n$ and take the limit inferior as $n \to \infty$. Because $Y_1$ is a finite problem-dependent constant, the residual term $Y_1 / \log n$ naturally vanishes. Substituting \eqref{eq:asymp_pull_bound} directly into the remaining expression gives the fundamental asymptotic regret limit:
\begin{align*}
    \liminf_{n \rightarrow \infty} \frac{\mathbb{E}[\mathcal{R}_n(\pi)]}{\log n} \geq \sum_{i=2}^K \frac{G_1}{I(f_i, f_1)} \Delta_i \left( \frac{L_0 + \rho \sigma_i^2}{L_0 + \rho \sigma_{\max}^2} \right).
\end{align*}

\textbf{Finite-Time Regime (Under Assumption \ref{ass:stability}):} \\
When the environment satisfies the Distributional Stability condition, we obtain a stronger, finite-time guarantee: there exists an integer $n_0 \in \mathbb{N}$ such that for all $n > n_0$, the probability is strictly bounded below by a constant $G_2 \in (0, 1)$. Substituting $\mathbb{P}_\mathcal{F}\left[s_{i,n} \geq \frac{G_1 \log n}{I(f_i,f_1)}\right] \geq G_2$ directly into \eqref{eq:markov_pulls} provides a finite-time lower bound on the expected pulls:
\begin{align*}
    \mathbb{E}[s_{i,n}] \geq \frac{G_1 G_2 \log n}{I(f_i,f_1)}, \quad \forall n > n_0.
\end{align*}
Substituting this expected pull bound back into the exact result of Lemma \ref{lem:lem_2} yields the explicit finite-time model-specific regret lower bound:
\begin{align*}
    \mathbb{E}[\mathcal{R}_n(\pi)] \geq \sum_{i=2}^K \frac{G_1 G_2 \log n}{I(f_i, f_1)} \Delta_i \left( \frac{L_0 + \rho \sigma_i^2}{L_0 + \rho \sigma_{\max}^2} \right) - Y_1, \quad \forall n > n_0.
\end{align*}

Both regimes demonstrate that for any uniformly $\alpha$-consistent policy, the cumulative regret under the \ac{SR} objective must grow \textit{at least} logarithmically in $n$. The problem-dependent constants are strictly governed by the \ac{KL} divergences and the exact risk-adjusted sub-optimality gaps, formally establishing the information-theoretic limit for \ac{SR} bandits.
\end{proof}

%%%%%%%%%%%%%%%%%%%%%%%%%%%%%%%%%%%%
\section{Auxiliary Technical Lemmas}

\begin{lemma}[Gaussian Left-Tail Bound]
\label{le:lemma_gaussian}
Let $X \sim \mathcal{N}(\mu, 1/q)$ for some $q > 0$. Then, for any threshold $a < \mu$, the lower tail probability satisfies:
\begin{align*}
    \mathbb{P}(X \leq a) \leq \exp\left( -\frac{q}{2} (\mu - a)^2 \right).
\end{align*}
\end{lemma}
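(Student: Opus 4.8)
The plan is to use the standard Chernoff bounding technique applied to the moment generating function of the Gaussian. Since $X \sim \mathcal{N}(\mu, 1/q)$ is fully characterized by its MGF, the exponential Markov inequality will yield the sharp sub-Gaussian tail rate once we optimize over the tilting parameter.

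First I would rewrite the left-tail event using a positive tilting parameter $\lambda > 0$: for any such $\lambda$,
\[
    \mathbb{P}(X \leq a) = \mathbb{P}\!\left(e^{-\lambda X} \geq e^{-\lambda a}\right) \leq e^{\lambda a}\,\mathbb{E}\!\left[e^{-\lambda X}\right],
\]
where the inequality is Markov's applied to the nonnegative random variable $e^{-\lambda X}$. Second, I would substitute the closed-form Gaussian MGF: for $X \sim \mathcal{N}(\mu, 1/q)$ one has $\mathbb{E}\!\left[e^{-\lambda X}\right] = \exp\!\left(-\lambda\mu + \frac{\lambda^2}{2q}\right)$, so the bound becomes $\mathbb{P}(X \leq a) \leq \exp\!\left(-\lambda(\mu - a) + \frac{\lambda^2}{2q}\right)$.

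Third, I would optimize the exponent over $\lambda > 0$. Setting the derivative of the map $\lambda \mapsto -\lambda(\mu-a) + \frac{\lambda^2}{2q}$ to zero gives the minimizer $\lambda^\star = q(\mu - a)$, which is strictly positive precisely because the hypothesis $a < \mu$ holds. Substituting $\lambda^\star$ back collapses the exponent to $-\tfrac{q}{2}(\mu - a)^2$, which delivers the claimed inequality.

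The computation is entirely routine, so there is no genuine obstacle; the only point where the hypothesis is actually used is that $a < \mu$ guarantees the optimal tilt $\lambda^\star$ lies in the admissible half-line $\lambda > 0$, which is what makes the one-sided Chernoff argument valid. (I note that the factor written as $\tfrac{s}{2}$ in the statement should read $\tfrac{q}{2}$, consistent with the variance $1/q$; the proof above produces exactly this constant.)
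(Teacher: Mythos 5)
Your proof is correct and follows essentially the same route as the paper's: a Chernoff bound using the Gaussian MGF, followed by optimization of the tilting parameter (your $\lambda>0$ with $e^{-\lambda X}$ is just the sign-flipped version of the paper's $\lambda<0$ with $e^{\lambda X}$, and both yield the exponent $-\tfrac{q}{2}(\mu-a)^2$). Your remark that the $s$ in the statement (and in the paper's proof) should read $q$, consistent with the variance $1/q$, is also correct.
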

\begin{proof}
Applying the standard Chernoff bound technique, for any $\lambda < 0$, Markov's inequality yields:
\begin{align*}
    \mathbb{P}(X \leq a) = \mathbb{P}\left(e^{\lambda X} \geq e^{\lambda a}\right) \leq e^{-\lambda a} \mathbb{E}\left[e^{\lambda X}\right].
\end{align*}
Substituting the moment-generating function of the Gaussian random variable $X$, which is given by $\mathbb{E}[e^{\lambda X}] = \exp\left( \lambda \mu + \frac{\lambda^2}{2q} \right)$, we obtain the parametric upper bound:
\begin{align*}
    \mathbb{P}(X \leq a) \leq \exp\left( \lambda (\mu - a) + \frac{\lambda^2}{2q} \right).
\end{align*}
Minimizing the exponent with respect to $\lambda < 0$ yields the optimal parameter $\lambda^* = -q(\mu - a)$. Substituting $\lambda^*$ into the bound tightly constrains the probability:
\begin{align*}
    \mathbb{P}(X \leq a) \leq \exp\left( -q(\mu - a)^2 + \frac{q^2(\mu - a)^2}{2q} \right) = \exp\left( -\frac{q}{2}(\mu - a)^2 \right),
\end{align*}
which completes the proof.
\end{proof}

\begin{lemma}[Gamma Left-Tail Bound]
\label{le:lemma_gamma}
Let $X \sim \mathrm{Gamma}(\alpha, \beta)$, where $\alpha > 0$ is the shape parameter and $\beta > 0$ is the scale parameter. Then, for any threshold $a < \alpha\beta$, the lower tail probability satisfies:
\begin{align*}
    \mathbb{P}(X \leq a) \leq \exp\left( - \frac{(\alpha\beta - a)^2}{2 \alpha \beta^2} \right).
\end{align*}
\end{lemma}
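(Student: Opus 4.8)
The plan is to run the Chernoff (exponential-moment) method on the \emph{lower} tail, exactly paralleling the proof of Lemma~\ref{le:lemma_gaussian} but with the Gamma moment generating function in place of the Gaussian one. Since $a<\alpha\beta=\mathbb{E}[X]$, I would pick an exponential parameter $\lambda<0$ and write, by Markov's inequality applied to $e^{\lambda X}$,
\begin{align*}
\mathbb{P}(X\le a)=\mathbb{P}\!\left(e^{\lambda X}\ge e^{\lambda a}\right)\le e^{-\lambda a}\,\mathbb{E}\!\left[e^{\lambda X}\right].
\end{align*}
For $X\sim\Gamma(\alpha,\beta)$ with scale $\beta$, the MGF is $\mathbb{E}[e^{\lambda X}]=(1-\beta\lambda)^{-\alpha}$, finite for every $\lambda<1/\beta$ and in particular for all $\lambda<0$. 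Substituting $\lambda=-t$ with $t>0$ gives the tail bound $\mathbb{P}(X\le a)\le\exp\!\big(ta-\alpha\log(1+\beta t)\big)$, so it remains to make the exponent suitably negative.

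The crux is to convert the Gamma cumulant $\alpha\log(1+\beta t)$ into a Gaussian-like quadratic. I would use the elementary inequality $\log(1+x)\ge x-\tfrac{x^2}{2}$, which holds for all $x\ge 0$ (both sides agree at $x=0$, and $\tfrac{d}{dx}\log(1+x)=\tfrac{1}{1+x}\ge 1-x=\tfrac{d}{dx}(x-\tfrac{x^2}{2})$ because $1\ge 1-x^2$). Applying it with $x=\beta t\ge 0$ upper-bounds the exponent by
\begin{align*}
ta-\alpha\!\left(\beta t-\tfrac{\beta^2 t^2}{2}\right)=-(\alpha\beta-a)\,t+\tfrac{\alpha\beta^2}{2}\,t^2,
\end{align*}
a quadratic in $t$ whose unconstrained minimizer is $t^\star=(\alpha\beta-a)/(\alpha\beta^2)>0$ (positive precisely because $a<\alpha\beta$). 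Plugging $t^\star$ back yields exponent $-(\alpha\beta-a)^2/(2\alpha\beta^2)$, hence
\begin{align*}
\mathbb{P}(X\le a)\le\exp\!\left(-\frac{(\alpha\beta-a)^2}{2\alpha\beta^2}\right)\le\exp\!\left(-\frac{(\alpha\beta-a)^2}{4\alpha\beta^2}\right),
\end{align*}
which is the claim (in fact with a factor of $2$ to spare).

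I expect the only delicate point to be the choice of the log-inequality that linearizes the Gamma tail: the Gamma law is merely sub-exponential, so a single quadratic surrogate cannot control \emph{both} tails, but on the left tail the direction $t>0$ makes $\beta t$ range over $[0,\infty)$, where $\log(1+x)\ge x-\tfrac{x^2}{2}$ is valid globally and no range restriction on $t$ is needed. One should double-check the admissibility conditions throughout — that $\lambda=-t<0$ keeps the MGF finite, that $t^\star>0$, and that the closed-form $t^\star$ coming from the quadratic surrogate (rather than from the exact Chernoff optimizer $1+\beta t=\alpha\beta/a$) is all that is required. The looseness of the stated constant $1/(4\alpha\beta^2)$ is convenient here, since it comfortably absorbs the slack introduced by replacing the exact cumulant with its quadratic lower bound.
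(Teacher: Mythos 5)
Your proposal is correct, and it is worth comparing carefully with the paper's argument, because the two routes are not equivalent. Both start from the identical Chernoff step: Markov's inequality applied to $e^{-tX}$ together with the Gamma Laplace transform $\mathbb{E}[e^{-tX}]=(1+\beta t)^{-\alpha}$. The difference is where the relaxation happens. You minorize the cumulant \emph{first}, via $\log(1+x)\ge x-\tfrac{x^2}{2}$ on $x\ge 0$ (your derivative check $\tfrac{1}{1+x}\ge 1-x$ is valid), and then minimize the resulting parabola exactly at $t^\star=(\alpha\beta-a)/(\alpha\beta^2)>0$, obtaining
\begin{align*}
\mathbb{P}(X\le a)\le\exp\!\left(-\frac{(\alpha\beta-a)^2}{2\alpha\beta^2}\right),
\end{align*}
a genuine left-tail sub-Gaussian bound whose variance proxy $\alpha\beta^2$ is the true variance; this implies the stated lemma with a factor of two to spare. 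The paper instead optimizes the exact exponent first (at $1+\beta t^{\ast}=\alpha\beta/a$), reducing to the KL-type quantity $\alpha\bigl(\tfrac{a}{\alpha\beta}+\log\tfrac{\alpha\beta}{a}-1\bigr)$, and only then tries to convert it into a quadratic.

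The decisive point is that the paper's conversion step is flawed: it invokes $\tfrac1x+\log x-1\ge\tfrac{(x-1)^2}{2x}$ for $x>1$, but this inequality runs the other way. At $x=2$ the left side is $\log 2-\tfrac12\approx 0.193$ while the right side is $0.25$; more generally the difference vanishes at $x=1$ and has derivative $-\tfrac{(x-1)^2}{2x^2}\le 0$, so $\tfrac1x+\log x-1\le\tfrac{(x-1)^2}{2x}$ on $[1,\infty)$. Consequently the paper's ``shape--scale'' conclusion $\mathbb{P}(X\le a)\le\exp\bigl(-\tfrac{(\alpha\beta-a)^2}{2a\beta}\bigr)$ does not follow from its Chernoff bound, and it is in fact false in general: taking $\beta=1$, $a=\alpha/2$, Cram\'er's theorem gives $\mathbb{P}(X\le\alpha/2)=e^{-(\log 2-\frac12)\alpha+o(\alpha)}$, which exceeds $e^{-\alpha/4}$ for all large $\alpha$. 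So your bound-then-optimize route is not merely a more elementary variant --- it is the argument that actually proves the lemma as stated, and the lemma's weaker constant $4\alpha\beta^2$ turns out to be essential rather than cosmetic. One downstream caveat if your proof were spliced into the paper: the display immediately following the lemma instantiates the invalid $2a\beta$ form (with $\alpha=q/2$, $\beta=2/(q\tbar{\sigma}_1^2)$, $a=1/(\sigma_1^2+\varepsilon)$, giving denominator $4\tbar{\sigma}_1^2(\sigma_1^2+\varepsilon)$); your bound yields denominator $4(\sigma_1^2+\varepsilon)^2$ instead, which still decays exponentially in $q$, so the Part~A estimate and the regret rates survive with adjusted constants.
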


\begin{proof}
We apply the Chernoff bound using the Laplace transform of the Gamma distribution. For any parameter $\lambda > 0$, Markov's inequality yields:
\begin{align*}
    \mathbb{P}(X \leq a) = \mathbb{P}\left(e^{-\lambda X} \geq e^{-\lambda a}\right) \leq \frac{\mathbb{E}[e^{-\lambda X}]}{e^{-\lambda a}}.
\end{align*}
Substituting the exact Laplace transform for the Gamma distribution, $\mathbb{E}[e^{-\lambda X}] = (1 + \lambda\beta)^{-\alpha}$ (valid for $\lambda > 0$), we obtain the parameterized bound:
\begin{align*}
    \mathbb{P}(X \leq a) \leq e^{\lambda a} (1 + \lambda\beta)^{-\alpha} = \exp\Big( \lambda a - \alpha \ln(1 + \lambda\beta) \Big).
\end{align*}
To optimize this bound, we define the exponent function $f(\lambda) = \lambda a - \alpha \ln(1 + \lambda\beta)$ and minimize it with respect to $\lambda > 0$. Differentiating yields:
\begin{align*}
    f^\prime(\lambda) = a - \frac{\alpha\beta}{1 + \lambda\beta}, \quad f^{\prime\prime}(\lambda) = \frac{\alpha\beta^2}{(1 + \lambda\beta)^2} > 0.
\end{align*}
Setting the first derivative to zero provides the optimal parameter $\lambda^\ast$:
\begin{align*}
    a = \frac{\alpha\beta}{1 + \lambda^\ast \beta} \implies 1 + \lambda^\ast \beta = \frac{\alpha\beta}{a} \implies \lambda^\ast = \frac{1}{\beta} \left( \frac{\alpha\beta}{a} - 1 \right).
\end{align*}
Because $a < \alpha\beta$, it is guaranteed that $\lambda^\ast > 0$. Substituting $\lambda^\ast$ back into $f(\lambda)$ isolates the optimized exponent:
\begin{align*}
    f(\lambda^\ast) &= \lambda^\ast a - \alpha \ln(1 + \lambda^\ast \beta) \\
    &= a \left(\frac{\alpha}{a} - \frac{1}{\beta}\right) - \alpha\ln\left(\frac{\alpha\beta}{a}\right) \\
    &= -\alpha \left( \frac{a}{\alpha\beta} + \ln\left(\frac{\alpha\beta}{a}\right) - 1 \right).
\end{align*}
To analytically bound this expression, we define the ratio $x = \frac{\alpha\beta}{a}$. Because $a < \alpha\beta$, we strictly have $x > 1$. We utilize the standard logarithmic inequality $\frac{1}{x} + \ln x - 1 \geq \frac{(x - 1)^2}{2x^2}$, which is valid for all $x > 1$. Applying this inequality to the exponent yields:
\begin{align*}
    f(\lambda^\ast) = -\alpha \left( \frac{1}{x} + \ln x - 1 \right) \leq -\alpha \frac{(x - 1)^2}{2x^2}.
\end{align*}
Substituting $x = \frac{\alpha\beta}{a}$ back into the right-hand side directly produces the final variance-scaled bound:
\begin{align*}
    f(\lambda^\ast) \leq -\alpha \frac{\left(\frac{\alpha\beta}{a} - 1\right)^2}{2 \left(\frac{\alpha\beta}{a}\right)^2} = -\alpha \frac{\left(\frac{\alpha\beta - a}{a}\right)^2}{\frac{2 \alpha^2 \beta^2}{a^2}} = - \frac{(\alpha\beta - a)^2}{2 \alpha \beta^2}.
\end{align*}
Exponentiating this upper bound completes the proof.
\end{proof}

\begin{lemma}[Two-Sided Gaussian Tail Bound]
\label{le:lemma_gaussian2}
Let $X \sim \mathcal{N}(\mu, \sigma^2)$ be a Gaussian random variable with mean $\mu \in \mathbb{R}$ and variance $\sigma^2 > 0$. For any threshold $x > \mu$, the tail probability satisfies the following tight inequalities:
\begin{align*}
    \frac{\sqrt{\frac{2}{\pi}} \exp\left(-\frac{(x - \mu)^2}{2\sigma^2}\right)}{\frac{x - \mu}{\sigma} + \sqrt{\left(\frac{x - \mu}{\sigma}\right)^2 + 4}} \leq \mathbb{P}(X > x) \leq \frac{\sqrt{\frac{2}{\pi}} \exp\left(-\frac{(x - \mu)^2}{2\sigma^2}\right)}{\frac{x - \mu}{\sigma} + \sqrt{\left(\frac{x - \mu}{\sigma}\right)^2 + \frac{8}{\pi}}}.
\end{align*}
\end{lemma}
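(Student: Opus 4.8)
The plan is to reduce to the standard normal by the affine substitution $Z = (X-\mu)/\sigma$, so that with $z := (x-\mu)/\sigma > 0$ one has $\mathbb{P}(X > x) = \mathbb{P}(Z > z) =: Q(z)$ and $e^{-z^2/2} = e^{-(x-\mu)^2/(2\sigma^2)}$. Writing $\phi(z) = \tfrac{1}{\sqrt{2\pi}}e^{-z^2/2}$ for the standard density and noting $2\phi(z) = \sqrt{2/\pi}\,e^{-z^2/2}$, the two displayed inequalities are equivalent to a single two-sided bound on the Mills ratio $R(z) := Q(z)/\phi(z)$, namely
\begin{align*}
    \frac{2}{z + \sqrt{z^2+4}} \leq R(z) \leq \frac{2}{z + \sqrt{z^2 + 8/\pi}}, \qquad z \geq 0.
\end{align*}
Since $\mathbb{P}(X>x) = \phi(z)R(z)$ and each candidate bound times $\phi(z)$ reproduces the stated expressions, it suffices to establish this bound on $R$.

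The engine of the argument is the Riccati identity $R'(z) = zR(z) - 1$, obtained by differentiating $Q(z)=\phi(z)R(z)$ and using $\phi'(z) = -z\phi(z)$, together with the boundary fact $R(z)e^{-z^2/2}\to 0$ as $z\to\infty$ (indeed $R(z) = O(1/z)$). For a candidate of the form $B_c(z) = 2/(z+\sqrt{z^2+c})$, I would compute the defect $\epsilon_c(z) := B_c'(z) - zB_c(z) + 1$; setting $w := \sqrt{z^2+c}$ and using $w' = z/w$ together with $z^2 - w^2 = -c$, a direct simplification gives
\begin{align*}
    \epsilon_c(z) = \frac{(c-2)\,w - 2z}{w\,(z+w)^2}.
\end{align*}
Subtracting the two Riccati relations yields $(R-B_c)' = z(R-B_c) - \epsilon_c$; multiplying by the integrating factor $e^{-z^2/2}$ and integrating from $z$ to $\infty$ (using the boundary condition) produces the exact representation
\begin{align*}
    R(z) - B_c(z) = e^{z^2/2}\int_z^\infty \epsilon_c(s)\,e^{-s^2/2}\,\mathrm{d}s,
\end{align*}
so that the sign of $R - B_c$ is dictated by the sign of the numerator $(c-2)w - 2z$.

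For the lower bound I take $c = 4$: then $(c-2)w - 2z = 2\bigl(\sqrt{z^2+4} - z\bigr) > 0$ for every $z$, so the integrand is positive on $[z,\infty)$ and the representation forces $R(z) > B_4(z)$, giving the left inequality at once. The upper bound $c = 8/\pi$ is the delicate case, and I expect it to be the main obstacle: here $(c-2)w - 2z$ is positive near $s=0$ but negative for large $s$ (because $8/\pi < 4$), so $\epsilon_{8/\pi}$ changes sign once and the naive supersolution comparison fails.

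The resolution is to exploit that $8/\pi$ is precisely the constant making $B_{8/\pi}(0) = 2/\sqrt{8/\pi} = \sqrt{\pi/2} = R(0)$, so that $F(z) := \int_z^\infty \epsilon_{8/\pi}(s)\,e^{-s^2/2}\,\mathrm{d}s$ satisfies $F(0) = F(\infty) = 0$. Since $F'(z) = -\epsilon_{8/\pi}(z)e^{-z^2/2}$ has a single sign change (negative then positive), $F$ is unimodal, decreasing then increasing on $[0,\infty)$, and hence $F(z)\le 0$ throughout; this gives $R(z) \le B_{8/\pi}(z)$ and completes the bound. Finally, undoing the substitution $z = (x-\mu)/\sigma$ and multiplying through by $\phi(z)$ recovers the stated two-sided inequalities for $\mathbb{P}(X>x)$.
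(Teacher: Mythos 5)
Your proof is correct, but it takes a genuinely different route from the paper. The paper's proof consists of the same standardization step $Z = (X-\mu)/\sigma$ followed by a direct citation of the known two-sided Mills-ratio inequality
\begin{align*}
    \frac{1}{t + \sqrt{t^2 + 4}} < \sqrt{\frac{\pi}{2}}\, e^{t^2/2}\, \Phi^c(t) < \frac{1}{t + \sqrt{t^2 + \tfrac{8}{\pi}}}
\end{align*}
from the Abramowitz--Stegun handbook, so the paper's argument is essentially ``standardize and quote a reference.'' You instead prove that inequality from scratch: the Riccati identity $R'(z) = zR(z) - 1$ for the Mills ratio, the defect computation $\epsilon_c(z) = \bigl((c-2)w - 2z\bigr)/\bigl(w(z+w)^2\bigr)$ with $w = \sqrt{z^2+c}$ (which I verified is algebraically correct), and the integral representation $R(z) - B_c(z) = e^{z^2/2}\int_z^\infty \epsilon_c(s)\,e^{-s^2/2}\,\mathrm{d}s$ are all sound, as is the boundary condition $R(s)e^{-s^2/2} \to 0$. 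The case analysis is also right: for $c=4$ the defect is strictly positive (since $w > s$), giving the lower bound immediately; for $c = 8/\pi$ the numerator $(c-2)\sqrt{s^2+c} - 2s$ is strictly decreasing (its derivative is $(c-2)s/\sqrt{s^2+c} - 2 < 0$ because $c < 4$), hence changes sign exactly once, and your boundary-matching observation $B_{8/\pi}(0) = \sqrt{\pi/2} = R(0)$ combined with the resulting unimodality of $F$ forces $F \le 0$ on $[0,\infty)$. What each approach buys: the paper's is a two-line proof resting on an external reference, while yours is self-contained and explains \emph{why} the two constants are what they are --- $4$ is the threshold making the defect single-signed, and $8/\pi$ is pinned down by exact equality of bound and Mills ratio at $z=0$ --- which also exhibits the sharpness of the upper bound at the origin rather than asserting it.
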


\begin{proof}
We evaluate the tail probability by standardizing the random variable. Let $Z = \frac{X - \mu}{\sigma} \sim \mathcal{N}(0, 1)$. The target probability can be strictly expressed as:
\begin{align*}
    \mathbb{P}(X > x) = \mathbb{P}\left(Z > \frac{x - \mu}{\sigma}\right) = \Phi^c\left(\frac{x - \mu}{\sigma}\right),
\end{align*}
where $\Phi^c(t) = \frac{1}{\sqrt{2\pi}} \int_t^\infty e^{-u^2/2} \, {\rm d}u$ is the complementary cumulative distribution function (CCDF) of the standard normal distribution. From Abramowitz and Stegun \cite{abramowitz1965handbook}, the CCDF is tightly bounded for all $t > 0$ by the following rational inequalities:
\begin{align*}
    \frac{1}{t + \sqrt{t^2 + 4}} \leq \sqrt{\frac{\pi}{2}} \exp\left(\frac{t^2}{2}\right) \Phi^c(t) \leq \frac{1}{t + \sqrt{t^2 + \frac{8}{\pi}}}.
\end{align*}
Substituting the standardized threshold $t = \frac{x - \mu}{\sigma}$ and isolating $\Phi^c\left(\frac{x - \mu}{\sigma}\right)$ algebraically yields the stated result:
\begin{align*}
    \frac{\sqrt{\frac{2}{\pi}} \exp\left(-\frac{(x - \mu)^2}{2\sigma^2}\right)}{\frac{x - \mu}{\sigma} + \sqrt{\left(\frac{x - \mu}{\sigma}\right)^2 + 4}} \leq \Phi^c\left(\frac{x - \mu}{\sigma}\right) \leq \frac{\sqrt{\frac{2}{\pi}} \exp\left(-\frac{(x - \mu)^2}{2\sigma^2}\right)}{\frac{x - \mu}{\sigma} + \sqrt{\left(\frac{x - \mu}{\sigma}\right)^2 + \frac{8}{\pi}}}.
\end{align*}
\end{proof}

\begin{remark}
The rational approximation from \cite{abramowitz1965handbook} is known to be uniformly sharp over the entire domain $t > 0$. Because the affine transformation $x \mapsto \frac{x - \mu}{\sigma}$ is strictly bijective for all $x > \mu$, the topological sharpness of the original inequalities is perfectly preserved for the non-standard Gaussian parameterization. This structural tightness is analytically critical for smoothly inverting the concentration inequalities during the exploitation phase of the regret analysis.
\end{remark}

\begin{lemma}[Two-Sided Mills-Type Bounds for the Gamma Distribution]
\label{le:lemma_gamma2}
Let $X \sim \mathrm{Gamma}(\alpha,\beta)$ be a Gamma-distributed random variable with shape parameter $\alpha > 1$ and rate parameter $\beta > 0$. For any threshold $x > \frac{\alpha-1}{\beta}$, the complementary cumulative distribution function satisfies the tight two-sided inequalities:
\begin{equation*}
    \frac{\beta^{\alpha-1} x^{\alpha-1} e^{-\beta x}}{\Gamma(\alpha)} \leq \mathbb{P}(X \geq x) \leq \frac{\beta^\alpha x^{\alpha-1} e^{-\beta x}}{\Gamma(\alpha) \left(\beta - \frac{\alpha-1}{x}\right)}.
\end{equation*}
\end{lemma}

\begin{proof}
We estimate the tail probability by integrating the unnormalized density. Define the kernel function $\phi(t) = t^{\alpha-1}e^{-\beta t}$ for $t > 0$. Differentiating with respect to $t$ yields the identity:
\begin{align*}
    \phi'(t) = -\left(\beta - \frac{\alpha-1}{t}\right)\phi(t).
\end{align*}
Rearranging and integrating this identity over the interval $[x,\infty)$ provides an exact representation of the upper tail:
\begin{align*}
    \int_x^\infty \phi(t)\,{\rm d}t = \int_x^\infty -\frac{\phi'(t)}{\beta - \frac{\alpha-1}{t}}\,{\rm d}t.
\end{align*}
To establish the upper bound, we observe that since $t \geq x > \frac{\alpha-1}{\beta}$, the denominator is strictly positive and bounded away from zero:
\begin{align*}
    \beta - \frac{\alpha-1}{t} \geq \beta - \frac{\alpha-1}{x} > 0.
\end{align*}
Factoring the infimum of the denominator out of the integral gives:
\begin{align}
    \label{eq:gbound_1}
    \int_x^\infty \phi(t)\,{\rm d}t \leq \frac{1}{\beta - \frac{\alpha-1}{x}}\int_x^\infty -\phi'(t)\,{\rm d}t = \frac{\phi(x)}{\beta - \frac{\alpha-1}{x}}.
\end{align}
To establish the lower bound, we note that since $\alpha > 1$ and $t \geq x > 0$, the fractional term $\frac{\alpha-1}{t}$ is strictly positive. Consequently, $\beta - \frac{\alpha-1}{t} < \beta$, which trivially implies $\frac{1}{\beta - \frac{\alpha-1}{t}} > \frac{1}{\beta}$. Substituting this into the exact integral representation yields:
\begin{align}
    \label{eq:gbound_2}
    \int_x^\infty \phi(t)\,{\rm d}t \geq \frac{1}{\beta}\int_x^\infty -\phi'(t)\,{\rm d}t = \frac{\phi(x)}{\beta}.
\end{align}
The target probability $\mathbb{P}(X \geq x)$ is obtained by multiplying the unnormalized integral $\int_x^\infty \phi(t)\,{\rm d}t$ by the density's normalizing constant, $\frac{\beta^\alpha}{\Gamma(\alpha)}$. Applying this multiplicative factor to both \eqref{eq:gbound_1} and \eqref{eq:gbound_2} strictly yields the stated probability bounds.
\end{proof}

\begin{remark}
The derived two-sided bounds are asymptotically tight. As the threshold $x \to \infty$, the prefactor of the upper bound admits the asymptotic expansion:
\begin{align*}
    \frac{1}{\beta - \frac{\alpha-1}{x}} = \frac{1}{\beta} \left(1 + \mathcal{O}\left(\frac{1}{x}\right)\right).
\end{align*}
Consequently, the upper and lower bounds converge to the exact same asymptotic order, yielding the precise tail equivalence:
\begin{align*}
    \mathbb{P}(X \geq x) \sim \frac{\beta^{\alpha-1} x^{\alpha-1} e^{-\beta x}}{\Gamma(\alpha)}, \quad \text{as } x \to \infty.
\end{align*}
This confirms that the multiplicative gap between the upper and lower bounds rigorously approaches $1$. This structural tightness ensures that when we invert the concentration bounds to determine the threshold $u$ in the regret decomposition, we do not introduce artificial polynomial slack.
\end{remark}

\end{document}